\def\eqref#1{equation~\ref{#1}}
\def\1{\bm{1}}
\DeclareMathAlphabet{\mathsfit}{\encodingdefault}{\sfdefault}{m}{sl}
\SetMathAlphabet{\mathsfit}{bold}{\encodingdefault}{\sfdefault}{bx}{n}
\patchcmd\maketitle{\def\@makefnmark{\rlap{\@textsuperscript{\normalfont\@thefnmark}}}}{}{}{}
\def\thanksANote#1#2{
    \protected@xdef\@thanks{\@thanks
        \protect\footnotetextANote[#1]{#2}}%
}
\theoremstyle{plain}
\newtheorem{theorem}{Theorem}[section]
\newtheorem{lemma}[theorem]{Lemma}
\theoremstyle{definition}
\newtheorem{definition}[theorem]{Definition}
\theoremstyle{remark}
\newtheorem{remark}[theorem]{Remark}
\newcommand{\modification}[1]{\textcolor{blue}{#1}}
\renewcommand{\modification}[1]{#1}
\title{Inverse Approximation Theory for Nonlinear Recurrent Neural Networks}
\author{
Shida Wang \\
Department of Mathematics \\
National University of Singapore \\
\texttt{e0622338@u.nus.edu}
\And  
Zhong Li \\
Microsoft Research Asia \\
\texttt{lzhong@microsoft.com}
\And 
$\textrm{Qianxiao Li}^*$\thanksANote{1}{Corresponding author} \\
Department of Mathematics \\
Institute for Functional Intelligent Materials \\
National University of Singapore \\
\texttt{qianxiao@nus.edu.sg}
}
\begin{document}

\maketitle

\begin{abstract}
We prove an inverse approximation theorem for the approximation of nonlinear sequence-to-sequence relationships using recurrent neural networks (RNNs). This is a so-called Bernstein-type result in approximation theory, which deduces properties of a target function under the assumption that it can be effectively approximated by a hypothesis space. In particular, we show that nonlinear sequence relationships that can be stably approximated by nonlinear RNNs must have an exponential decaying memory structure - a notion that can be made precise. This extends the previously identified curse of memory in linear RNNs into the general nonlinear setting, and quantifies the essential limitations of the RNN architecture for learning sequential relationships with long-term memory. Based on the analysis, we propose a principled reparameterization method to overcome the limitations. Our theoretical results are confirmed by numerical experiments. 
\end{abstract}

\section{Introduction}
\label{sec:intro}

Recurrent neural networks (RNNs)~\citep{rumelhart1986.LearningRepresentationsBackpropagating} are one of the most basic machine learning models to learn the relationship between sequential or temporal data.
They have wide applications from time series prediction~\citep{connor1994.RecurrentNeuralNetworksa}, text generation~\citep{sutskever.GeneratingTextRecurrent}, speech recognition~\citep{graves2014.EndToEndSpeechRecognition} to sentiment classification~\citep{tang2015.DocumentModelingGated}.
However, when there are long-term dependencies in the data, empirical results~\citep{bengio1994.LearningLongtermDependencies} show that RNN may encounter difficulties in learning.
In this paper, we investigate this problem from the view of approximation theory.

From approximation perspective, there are various types of theorems characterizing the connections between target relationships and model architectures for learning them.
Universal approximation (\citealp[p.~32]{achieser2013.TheoryApproximation}) and Jackson-type theorem (\citealp[p.~187]{achieser2013.TheoryApproximation}) provide basic guarantees of approximation and error estimates of sufficiently regular target functions by a particular hypothesis space.
A number of such results are available for sequence modeling, including the RNN \citep{li2020.CurseMemoryRecurrent, li2022.ApproximationOptimizationTheory}.
On the other hand, a relatively under-investigated domain in the machine learning literature are
Bernstein-type theorems \citep{bernstein1914.MeilleureApproximationPar, li2022.ApproximationOptimizationTheory}, which are also known as inverse approximation theorems.
These results aims to characterize the regularity of target relationships, assuming that they can be approximated efficiently with a hypothesis space.
These regularity notions intimately depends on, thus gives important insights into, the structure of the hypothesis space under study.

This paper establishes an inverse approximation result for the approximation of nonlinear functionals via RNNs.
Previous works \citep{li2020.CurseMemoryRecurrent, li2022.ApproximationOptimizationTheory} indicate that linear functionals that can be universally approximated by linear RNNs must have exponential decaying memory.
This phenomena was coined the \emph{curse of memory} for linear RNNs.
A natural question is whether the nonlinear recurrent activation used in practical RNNs changes the situation.
This is important since a bigger hypothesis space may lift restrictions on the target functions.
Moreover, it is known that nonlinear activation is crucial for feed-forward networks to achieve universality \citep{cybenko1989.ApproximationSuperpositionsSigmoidal}.
Thus, it is worthwhile to investigate whether the linear Bernstein result generalizes to the case of approximating nonlinear sequence relationships with nonlinear RNNs.
In this paper, we prove that nonlinear RNNs still suffer from a curse of memory in approximation - nonlinear functionals that can be stably approximated by RNNs with nonlinear activations must have an exponential decaying memory function.
The notions of stable approximation and memory function can be concretely defined.
Our results make precise the empirical observation that the RNN architecture has inherent limitations when modeling long-time dependencies.

In summary, our main contributions are:
\begin{enumerate}
    \item We extends the concept of memory function in the linear settings \citep{li2020.CurseMemoryRecurrent, li2022.ApproximationOptimizationTheory} to the nonlinear setting. 
        This memory function can be numerically quantified in sequence modeling applications. 
    \item We introduce a notion of stable approximation, which ensures that an approximant has the possibility to be found by a gradient-based optimization algorithm.
    \item We prove, to the best of our knowledge, the first Bernstein-type approximation theorem for nonlinear functional sequences through nonlinear RNNs. 
        Our results characterize the essential limit of nonlinear RNNs in learning long-term relationship. 
        Our analysis also suggests that appropriate parameterization can alleviate the `curse of memory' phenomenon in learning targets with long memory.
    The theoretical result is corroborated with numerical experiments.
\end{enumerate}

\paragraph{Notation.}
For a sequence of $d$-dimensional vectors indexed by $\mathbb{R}$, $\mathbf{x} = \{ x_t \in \mathbb R^d : t \in \mathbb{R} \}$, we denote the supremum norm by $\|\mathbf{x}\|_\infty := \sup_{t\in\mathbb R} |x_t|_\infty$. 
Here $|x|_\infty := \max_{i} |x_i|, |x|_2 := \sqrt{\sum_{i} x_i^2}, |x|_1 := \sum_{i} |x_i|$ are the usual max ($L_{\infty}$) norm, $L_2$ norm and $L_1$ norm.
The bold face represents sequence while the normal letters are scalars, vectors or functions. 
Throughout this paper we use $\|\cdot\|$ to denote norms over sequences of vectors, or function(al)s, while $|\cdot|$ (with subscripts) represents the norm of number, vector or weights tuple.
The hat notation in this paper refers to the hypothesis space (functional) while the original symbol is referring to the target space (functional).

\section{Problem formulation and prior results on linear RNNs}
\label{sec:problem_formulation}

In this section, we introduce the problem formulation of sequence modeling as a functional sequence approximation problem.
We pay particular attention to distinguish two types of results: forward (Jackson-type) and inverse (Bernstein-type) approximation theorems.
For approximation theory in machine learning, most existing results focus on forward theorems.
However, inverse approximation theorems are of significant importance in revealing the fundamental limitations of an approximation approach.
The present paper focuses on establishing such results in the general, non-linear setting.
We conclude this section with a review of known Bernstein-type estimates, which is currently restricted to the linear case.
In so doing, we highlight the definition of memory in the linear case, which motivates our general definition of memory for nonlinear functional sequences in Section \ref{sec:Memory_function_nonlinear}.
The relationship between memory and approximation is central to our results.

\subsection{The approximation problem for sequence modeling}
\label{sec:seq_as_functional_approx}

The goal of sequential modeling is to learn a relationship between an input sequence $\mathbf{x}=\{x_t\}$ and a corresponding output sequence $\mathbf{y}=\{y_t\}$.
For ease of analysis, we adopt the continuous-time setting in \citep{li2020.CurseMemoryRecurrent}, where $t\in\mathbb{R}$.
This is also a natural setting for irregularly sampled time series~\citep{lechner2020learning}.
The input sequence space is $\mathcal{X} = C_0(\mathbb{R},\mathbb{R}^d)$,
the space of continuous functions vanishing at infinity.
We assume the input and output sequences are related by a sequence of functionals $\mathbf{H} = \{H_t: \mathcal{X} \mapsto \mathbb{R}; t\in\mathbb{R}\}$ via $y_t = H_t(\mathbf{x}), t \in \mathbb{R}$.
The sequential approximation problem can be formulated as the approximation of the target functional sequence $\mathbf{H}$ by a functional sequence $\mathbf{\widehat H}$ from a model hypothesis space such as RNNs.

\paragraph{Forward and inverse approximation theorems.}

Given a hypothesis space $\widehat{\mathcal{H}}^{(m)}$ of complexity $m\geq 1$ (e.g. width-$m$ RNNs), forward approximation theorems, also called Jackson-type theorems, bound the optimal approximation error
$ \displaystyle
    \inf_{\mathbf{\widehat H} \in \widehat{\mathcal{H}}^{(m)}}
    \| \mathbf{H} - \mathbf{\widehat H} \|
    \leq 
    C(\mathbf{H}, m).$

Inverse approximation (Bernstein-type) results are ``converse'' statements to Jackson-type results.
From the starting assumption that an efficient approximation exists for a specific target $\mathbf{H}$,
Bernstein-type results deduce the approximation spaces that $\mathbf{H}$ ought to belong to, i.e. it identifies a complexity or regularity measure $C(\cdot)$ and show that $C(\mathbf{H})$ is necessarily finite.
Take the trigonometric polynomial approximation as an example, Bernstein \citep[p.~187]{achieser2013.TheoryApproximation} proved that if $\displaystyle \inf_{{\widehat H} \in \widehat{\mathcal{H}}^{(m)}} \| {H} - {\widehat H} \| \leq \frac{c}{ m^{\alpha+\delta} }$, for all $m\geq 1$ and some $\delta > 0$, $c > 0$, then $C(H) = |H^{(\alpha)}| < \infty$, i.e. $H$ must be $\alpha$-times differentiable with $\delta$-H\"older continuous derivatives.

Bernstein-type inverse approximation results are important in characterizing the approximation capabilities for hypothesis spaces.
For trigonometric polynomial example, it says that \emph{only} smooth functions can be efficiently approximated, thereby placing a concrete limitation on the approximation capabilities of these models.
Our goal in this paper is to deduce analogues of this result, but for the approximation of general nonlinear functional sequences by RNNs.
Unlike the classical case where the notion of regularity enters in the form of smoothness, here we shall investigate the concept of memory as a quantifier of regularity - a notion that we will make precise subsequently.

\subsection{The RNN architecture and prior results}
\label{sec:prior_linear_results}

The continuous-time RNN architecture parameterizes functional sequences by the introduction of a hidden dynamical system
\begin{align}\label{eq:nonlinear_rnn_continuous}
\begin{array}{rll}
    \frac{dh_t}{dt}   & = {\sigma}(Wh_t + Ux_t + b) & h_{-\infty} = 0, \\
    \hat{y}_t         & = c^\top h_t,                  & t \in \mathbb{R}. 
\end{array}
\end{align}
Here, $\hat{y}_t\in\mathbb{R}$ is the predicted output sequence value, and $h_t\in\mathbb{R}^m$ denotes the hidden state\footnote{The boundary condition $h_{-\infty} = 0$ is consistent with practical implementations such as TensorFlow and PyTorch, where the initial value of hidden state is set to be zero by default.}.
The well-definedness of the continuous-time RNN will be discussed in \cref{sec:well-definedness-of-ode}. 
The hyper-parameter $m$ is also known as the hidden dimension, or width, of recurrent neural networks.
For different hidden dimensions $m$, the RNN is parameterized by trainable weights $(W, U, b, c)$, where $W \in \mathbb{R}^{m \times m}$ is the recurrent kernel,
$U \in \mathbb{R}^{m \times d}$ is the input kernel, $b\in \mathbb{R}^m$ is the bias and $c \in \mathbb{R}^m$ is the readout.
The complexity of the RNN hypothesis space is characterized by the hidden dimension $m$.
The nonlinearity arises from the activation function $\sigma(\cdot)$, which is a scalar function performed element-wise, such as \textit{tanh}, \textit{hardtanh}, \textit{sigmoid} or \textit{ReLU}.
The hypothesis space of RNNs is thus the following functional sequence space
\begin{equation}\label{eq:rnn_hypothesis}
    \begin{aligned}
    \widehat{\mathcal{H}}^{(m)}_\text{RNN}
    =
    \{
        \mathbf{x} \mapsto \mathbf{\hat y}
        \text{ via \cref{eq:nonlinear_rnn_continuous}}
        :
        W \in \mathbb{R}^{m\times m},
        U \in \mathbb{R}^{m\times d},
        b \in \mathbb{R}^{m},
        c \in \mathbb{R}^{m}
    \}.
    \end{aligned}
\end{equation}

Before presenting our main results, we review known Jackson and Bernstein-type results established for linear RNNs, corresponding to setting $\sigma(z)=z$ and $b=0$ in \eqref{eq:nonlinear_rnn_continuous}.
We shall pay attention to the definition of memory for a target functional sequence, and how it relates to approximation properties under the RNN hypothesis space.

We begin with some definitions on (sequences of) functionals as introduced in \citep{li2020.CurseMemoryRecurrent}.
\begin{definition}
    \label{def:functional_properties}
    Let $\mathbf{H} = \{H_t: \mathcal{X} \mapsto \mathbb{R}; t \in \mathbb{R}\}$ be a sequence of functionals. 
    Without loss of generality, we assume $H_t(\mathbf{0}) = 0$ (Otherwise we can consider the adjusted relationship $H_t^{\textrm{adjusted}}(\mathbf{x}) := H_t(\mathbf{x}) - H_t(\mathbf{0})$).
    \begin{enumerate}
        \item (\textbf{Linear}) $H_t$ is linear if for any $\lambda, \lambda' \in \mathbb{R}$ and $\mathbf{x}, \mathbf{x}'\in\mathcal{X}$, $H_t(\lambda \mathbf{x} + \lambda' \mathbf{x}') = \lambda H_t(\mathbf{x}) + \lambda' H_t(\mathbf{x}')$.

        \item (\textbf{Continuous}) $H_t$ is continuous if for any $\mathbf{x},' \mathbf{x}\in \mathcal{X}$, $\lim_{{\mathbf{x}' \to \mathbf{x}}} |H_t(\mathbf{x}') - H_t(\mathbf{x})| = 0$.

        \item (\textbf{Bounded}) $H_t$ is bounded if $\sup_{\{\mathbf{x} \in \mathcal{X}, \mathbf{x} \neq 0 \}} \frac{|H_t(\mathbf{x})|}{\|\mathbf{x}\|_{\infty}} < \infty$.

        \item (\textbf{Time-homogeneous}) $\mathbf{H} = \{H_t: t \in \mathbb{R}\}$ is time-homogeneous (or shift-equivariant) if the input-output relationship commutes with time shift:
        let $[S_\tau(\mathbf{x})]_t= x_{t-\tau}$ be a shift operator,
        then $\mathbf{H}(S_\tau \mathbf{x}) = S_\tau \mathbf{H}(\mathbf{x})$
        
        \item (\textbf{Causal}) $H_t$ is causal if it does not depend on future values of the input. That is, if $\mathbf{x}, \mathbf{x}'$ satisfy $x_t = x_t'$ for any $t \leq t_0$, then $H_t(\mathbf{x}) = H_t(\mathbf{x}')$ for any $t \leq t_0$.

        \item (\textbf{Regular}) $H_t$ is regular if for any sequence $\{\mathbf{x}^{(n)}: n \in \mathbb{N}\}$ such that $x^{(n)}_s \to 0$ for almost every $s \in \mathbb{R}$, then $\lim_{n \to \infty} H_t(\mathbf{x}^{(n)}) = 0.$ 
    \end{enumerate}
\end{definition}

The works in ~\citet{li2020.CurseMemoryRecurrent,li2022.ApproximationOptimizationTheory} study the approximation of functional sequences satisfying Definition~\ref{def:functional_properties} by linear RNNs.
A key idea is showing that any such functional sequence $\mathbf{H}$ admits a Riesz representation (see \cref{sec:Riesz_representation_theorem} and \cref{sec:Three_types_of_approximation})
\begin{equation}
\label{eq:Riesz_representation}
    H_t(\mathbf{x}) = \int_{0}^{\infty}
    \rho(s)^\top x_{t-s} ds,
    \qquad
    t \in \mathbb{R}.
\end{equation}
In this sense, $\rho$ completely determines $\mathbf{H}$, and its approximation using linear RNNs can be reduced to the study of the approximation of
$\rho \in L^1([0,\infty),\mathbb{R}^d)$ by exponential sums of the form $(c^\top e^{Ws} U)^\top$.
An important observation here is that $\rho$ captures the memory pattern of the target linear functional sequence: if $\rho$ decays rapidly, then the target has short memory, and vice versa.

By assuming that a target functional sequence $\mathbf{H}$ can be approximated uniformly by stable RNNs, then the memory of the target functional sequence must satisfy
$e^{\beta_0 t} |\rho(t)|_1 = o(1) \text{ as } t \to \infty$
for some $\beta_0 > 0$.
This was coined the ``curse of memory'' \citep{li2020.CurseMemoryRecurrent,li2022.ApproximationOptimizationTheory} and reveals fundamental limitations of the RNN architecture to capture long-term memory structures.

The focus of this paper is to investigate whether the addition of nonlinear activation changes this result.
In other words, would the curse of memory hold for nonlinear RNNs in the approximation of suitably general nonlinear functionals?
This is a meaningful question, since Bernstein-type results essentially constrain approximation spaces, and so a larger hypothesis space may relax such constraints.
We expand on this in \cref{sec:why_solve_curse_of_memory_by_adding_nonlinearity}.
A significant challenge in the nonlinear setting is the lack of a Riesz representation result, and thus one needs to carefully define a notion of memory that is consistent with $\rho$ in the linear case, but can still be used in the nonlinear setting to prove inverse approximation theorems.
Moreover, we will need to introduce a general notion of approximation stability, which together with the generalized memory definition allows us to derive a Bernstein-type result that holds beyond the linear case.

\section{Main results}
\label{sef:main_results}

In this section, we establish a Bernstein-type approximation result for nonlinear functional sequences using nonlinear RNNs.
We first give a definition of memory function for nonlinear functionals.
It is compatible with the memory definition in the linear functionals and it can be queried and verified in applications.
Next, we propose the framework of stable approximation.
It is a mild requirement from the perspective of approximation, but a desirable one from the view of optimization.
Moreover, we show that any linear functional with an exponential decaying memory can be stably approximated.
Based on the memory function definition and stable approximation framework, we prove a Bernstein-type theorem.
The theorem shows that any nonlinear functionals that can be stably approximated by general nonlinear RNNs must have an exponentially decaying memory, which confirms that the curse-of-memory phenomenon is not limited to the linear case.
Numerical verifications are included to demonstrate the result.

\subsection{Memory function for nonlinear functionals}
\label{sec:Memory_function_nonlinear}

Recall that the memory for a linear functional sequence is defined by its Riesz representation in \cref{eq:Riesz_representation}.
While there are no known general analogues of Riesz representation for nonlinear functionals, we may consider other means to extract an effective memory function from $\mathbf{H}$.

Let $x \in \mathbb{R}^d$ and consider the following Heaviside input sequence
$\displaystyle \mathbf{u}^{x}_t
    = x \cdot \mathbf{1}_{[0,\infty)}(t)
    =
    \begin{cases}
    x & t \geq 0, \\
    0 & t < 0.
    \end{cases}$
In the linear case, notice that according to \cref{eq:Riesz_representation}
\begin{equation}
    \sup_{x \neq 0} \frac{\left |\frac{d}{dt}H_t(\mathbf{u}^{x}) \right |}{\|\mathbf{u}^{x}\|_{\infty}}
    = 
    \sup_{x \neq 0} \frac{|x^\top \rho(t)|}{|x|_{\infty}}
    = 
    |\rho(t)|_{1}.
\end{equation}

Hence, conditions on $|\rho(t)|_1$ may be replaced by conditions on the left hand side, which is well-defined also for nonlinear functionals.
This motivates the following definition of memory function for nonlinear functional sequences.

\begin{definition}[Memory function of nonlinear functional sequences]
    For continuous, causal, regular and time-homogeneous functional sequences $\mathbf{H} = \{H_t(\mathbf{x}): t \in \mathbb{R}\}$ on $\mathcal{X}$, define the following function as the \emph{memory function} of $\mathbf{H}$ over bounded Heaviside input $\mathbf{u}^{x}$:
    \begin{equation}
        \mathcal{M}(\mathbf{H})(t) := \sup_{x \neq 0} \frac{1}{|x|_\infty} \left | \frac{d}{dt}H_t(\mathbf{u}^{x}) \right |.
    \end{equation}
\end{definition}
In particular, in this paper we consider nonlinear functionals whose memory function is finite for all $t$.
\modification{Unlike traditional methods that evaluate memory through heuristic tasks, our approach offers a precise, task-independent characterization of model memories.}
If the oracle of the target functional is available, the memory function can be evaluated and the result is named queried memory.
\modification{In \cref{sec:memory_function_impulse_inputs} and \cref{sec:memory_function_reversed_inputs}, we discuss the memory function evaluated over different test inputs and show the numerical equivalence in \cref{sec:numerical_equivalence}.}
Without target functional oracle, we may approximate the target with the learned model and still evaluate the memory function. 
If the queried memory are decaying for all Heaviside inputs, then we say the corresponding nonlinear functional sequence has a decaying memory.
We demonstrate in Appendix~\ref{sec:memory_query_in_sentiment_analysis} that the memory querying shows the memory pattern of LSTM and bidirectional LSTM sequence-to-sequence models in sentiment analysis on IMDB movie reviews.

\begin{definition}[Decaying memory]
\label{def:decaying_memory}
    For continuous, causal, regular and time-homogeneous functional sequences $\mathbf{H} = \{H_t(\mathbf{x}): t \in \mathbb{R}\}$ on $\mathcal{X}$, we say it has a \emph{decaying memory} if:
    \begin{equation}
        \lim_{t \to \infty} \mathcal{M}(\mathbf{H})(t) = 0.
    \end{equation}
    We say that $\mathbf{H}$ has an \textit{exponential decaying memory} if for some $\beta > 0$,
    \begin{equation}
    \label{eq:exponential_decaying_memory}
        \lim_{t \to \infty} e^{\beta t} \mathcal{M}(\mathbf{H})(t) = 0.
    \end{equation}
    Furthermore, the family $\{\mathbf{H}_m\}$ has a \textit{uniformly decaying memory}
    if
    \begin{equation}
        \lim_{t \to \infty} \sup_m \mathcal{M}(\mathbf{H}_m)(t) = 0.
    \end{equation}
\end{definition}

\begin{remark}
    The requirement of decaying memory on time-homogeneous functionals is mild since it is satisfied if $\frac{dH_t}{dt}$ is continuous at Heaviside input, under the topology of point-wise convergence (see \cref{subsec:point-wise-continuity-to-decaying-memory}). 
    We show that $\frac{dH_t}{dt}$ are point-wise continuous over Heaviside inputs for all RNNs,
    thus RNNs have decaying memory (see \cref{subsec:point-wise-continuity-of-nonlinear-RNNs}). 
    Another related notion of fading memory is discussed in the \cref{sec:fading_memory}.
\end{remark}

\subsection{Stable approximation}
\label{sec:Stable_approximation}

We now introduce the stable approximation framework.
Let us write the hypothesis space $\widehat{\mathcal{H}}^{(m)}$ as a parametric space
$\widehat{\mathcal{H}}^{(m)} = \{ \widehat{\mathbf{H}}(\cdot ; \theta_m) : \theta_m \in \Theta_m\}$
where for each $m$, $\Theta_m$ is a subset of a Euclidean space with dimension depending on $m$, representing the parameter space defining the hypothesis and $\widehat{\mathbf{H}}$ is a parametric model.
For example, in the case of RNNs, the parameter $\theta_m$ is $(W_m, U_m, b_m, c_m) \in \Theta_m := \{\mathbb{R}^{m \times m} \times \mathbb{R}^{m \times d} \times \mathbb{R}^{m} \times \mathbb{R}^m \}$ and $m$ is the hidden dimension of the RNN. 

Let us consider a collection of functional sequences
$\{\mathbf{\widehat H}_m = \widehat{\mathbf{H}}(\cdot ; \theta_m): m\geq 1\}$ that approximates
a target functional sequence $\mathbf{H}$.
Stable approximation requires that, if one were to perturb each parameter $\theta_m$ by a small amount, the resulting approximant sequence should still have a continuous perturbation error.
For the gradient-based optimization, this condition is necessary for one to find such an approximant sequence, as small perturbations of parameters should keep perturbation error continuous for gradients to be computed.
We now define this notion of stability precisely.
\begin{definition}
    For target $\mathbf{H}$ and parameterized model $\widehat{\mathbf{H}}(\cdot, \theta_m)$, we define the perturbation error for hidden dimension $m$ to be:
    \begin{equation}
    \label{eq:perturbation_error}
        E_m (\beta) :=
        \sup_{
            \tilde{\theta}_m \in
            \{\theta : |\theta - \theta_m|_2\leq \beta \}
        }
        \|\mathbf{H} - \widehat{\mathbf{H}}(\cdot; \tilde{\theta}_m)\|
    \end{equation}
    Moreover, $E(\beta) := \limsup_{m\to\infty} E_m (\beta)$ is the (asymptotic) perturbation error. 
    Here $|\theta|_2 := \max(|W|_2, |U|_2, |b|_2, |c|_2)$. 
\end{definition}

\begin{definition}[Stable approximation via parameterized models]
    Let $\beta_0>0$.
    We say a target functional sequence $\mathbf{H}$
    admits a $\beta_0$-stable approximation under $\{\widehat{\mathcal{H}}^{(m)}\}$,
    if
    there exists a sequence of parameterized approximants $\widehat{\mathbf{H}}_m = \widehat{\mathbf{H}}(\cdot,\theta_m)$, $\theta_m \in \Theta_m$ such that
    \begin{equation}
        \label{eq:stable_approximation}
        \lim_{m \to \infty} \|\mathbf{H} - \widehat{\mathbf{H}}_{m}\| \to 0,
    \end{equation}
    and for all $0 \leq \beta \leq \beta_0$, the perturbation error 
    satisfies that $E(\beta)$ is continuous in $\beta$ for $0 \leq \beta \leq \beta_0$.
\end{definition}
\begin{remark}
    It can be seen that approximation only requires $E(0) = 0$.
    Therefore the stable approximation condition generalizes the approximation by requiring the continuity of $E$ around $\beta=0$.
    If an approximation is unstable ($E(0) = 0, \lim_{\beta \to 0} E(\beta) > 0$), it is difficult to be found by gradient-based optimizations.
        Since our notion of stability depends on the size of weight perturbations, one may wonder whether rescaling the norm of weights separately for each $m$ achieves stability.
        There are two issues with this approach.
        First, the rescaled version is no-longer the usual RNN hypothesis space.
        Second, to achieve stability the rescaling rule may depend on information of the target functional sequence, which we have no access to in practice.
        We discuss this in detail in~\cref{sec:rescaling_discussion}. 
\end{remark}

\begin{figure}[t!]
    {
        \centering
        \subfigure[][Target with exponential decay memory]
        {
            \includegraphics[width=0.47\linewidth]{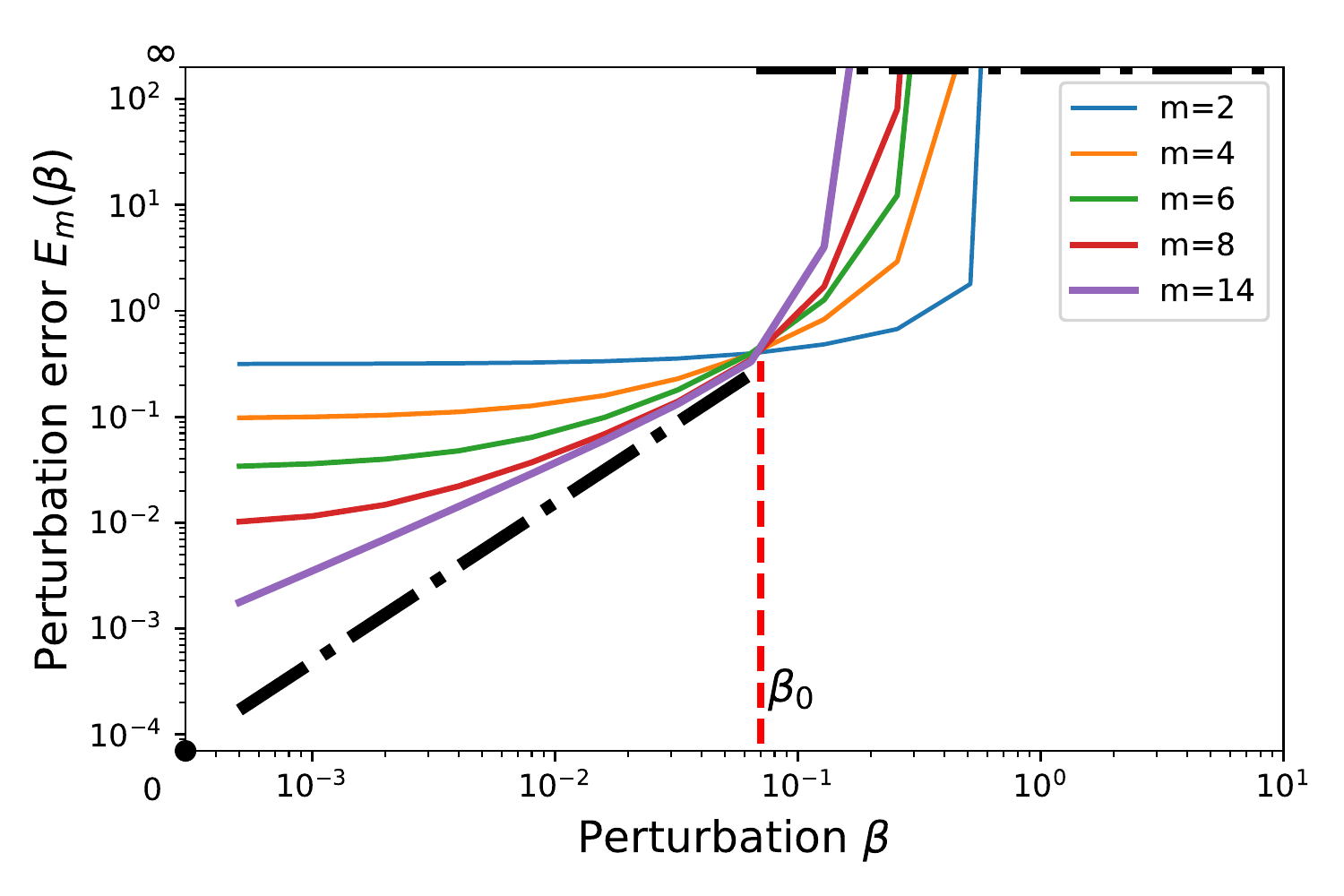}
        }
        \subfigure[][Target with polynomial decay memory]
        {
            \includegraphics[width=0.47\linewidth]{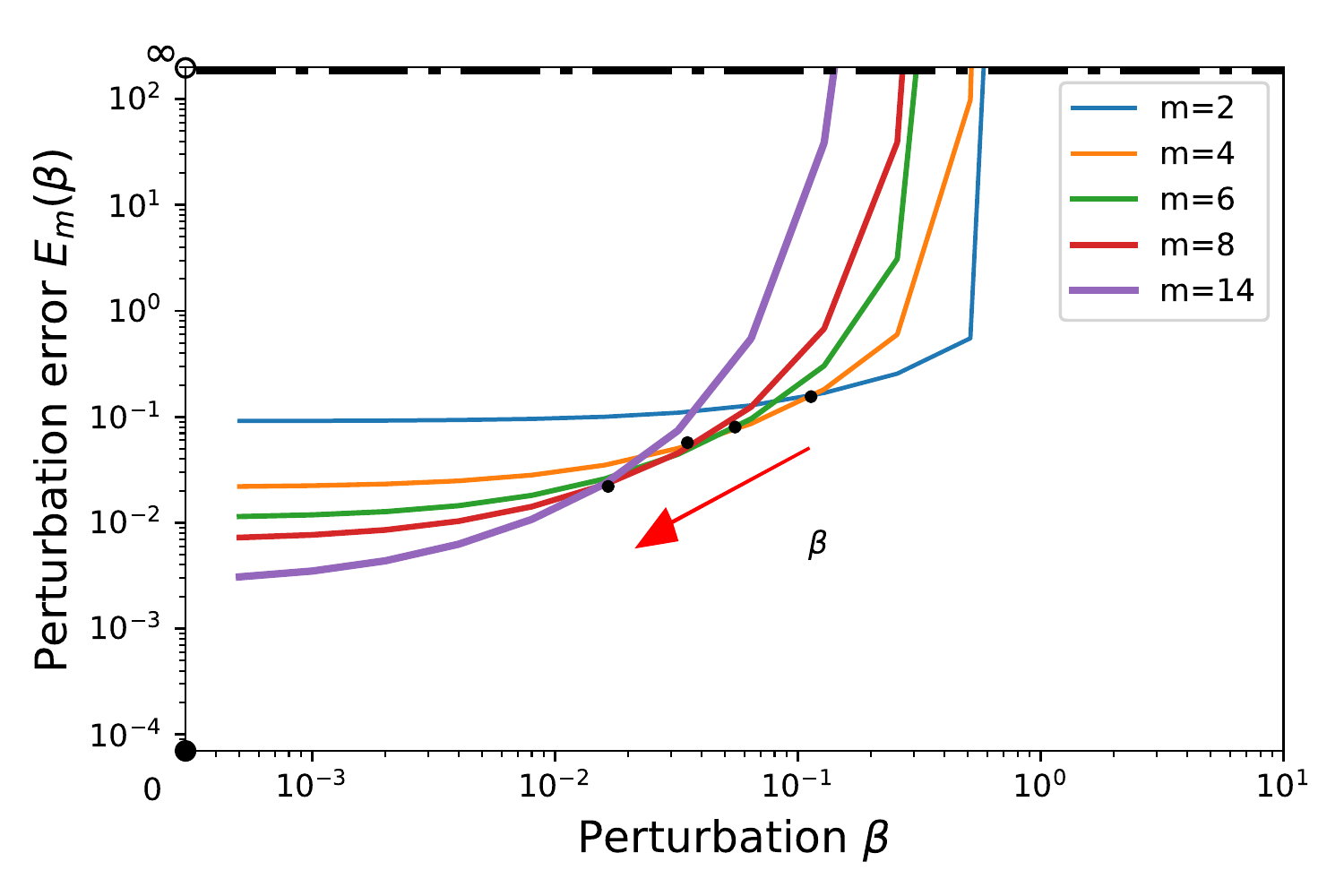}
        }
    
        \caption{
            Perturbation errors for linear functionals with different decaying memory.
            The anticipated limiting curve $E(\beta)$ is marked with a black dashed line. 
            (a) For linear functional sequences with exponential decaying memory, there exists a perturbation radius $\beta_0$ such that the perturbation error $E(\beta)$ for $0 \leq \beta < \beta_0$ is continuous.
            (b) Approximation of linear functional sequences with polynomial decaying memory.
            As hidden dimension $m$ increases, the perturbation radius where the error remains small decreases, suggesting that there may not exist a $\beta_0$ achieving the stable approximation condition.
            The intersections of lines are shifting left as the hidden dimension $m$ increases. 
            The anticipated limiting curve $E(\beta)$ is not continous for the polynomial decaying memory target. 
        }
        \label{fig:linear}
        }
    \end{figure}

Next, we demonstrate that the stable approximation condition is not too stringent in the sense that, for linear functional sequence with exponential decaying memory (\cref{eq:exponential_decaying_memory}) admits a stable approximation.
We show the numerical verification of this result in Figure~\ref{fig:linear}.
The approximation of linear functional with exponential decay can be seen in the left panel at $\beta = 0$ since increasing the hidden dimension $m$ will make the estimated error decrease to $0$ over $\beta \in [0, \beta_0]$.
Stable approximation can be verified that for positive perturbation $\beta$, adding the hidden dimension does not increase the perturbation error $E(\beta)$.
In contrast, for linear functional with polynomial decaying memory, the perturbation error $E(\beta)$ is not continuous at $\beta=0$.

\subsection{Bernstein-type approximation result for nonlinear RNNs}
\label{sec:Bernstein_type_approximation}

We now present the main result of this paper, which is a Bernstein-type approximation result for nonlinear functional sequences using nonlinear RNNs.
The key question is whether the addition of nonlinearity alleviates the curse of memory limitation and allows an efficient approximation of functionals with slow memory decay.
In the following, we show that the answer is negative, and a similar Bernstein-type approximation result holds for nonlinear functionals and RNNs with a class of recurrent activations, including the most often used hardtanh/tanh activations.

\begin{definition}
    We consider the Sobolev-type norm:
    \begin{equation}
    \label{eq:Sobolev_norm}
        \left \|\mathbf{H} - \widehat{\mathbf{H}} \right \|_{W^1} = \sup_t \left (\|H_t - \widehat{H}_t\|_{\infty} + \left \|\frac{dH_t}{dt} - \frac{d\widehat{H}_t}{dt} \right \|_{\infty} \right ).
    \end{equation}
    The nonlinear functional norm is given by $\|H_t\|_{\infty} := \sup_{\mathbf{x} \neq 0} \frac{|H_t(\mathbf{x})|}{\|\mathbf{x}\|_{\infty}} + |H_t(\mathbf{0})| = \sup_{\mathbf{x} \neq 0} \frac{|H_t(\mathbf{x})|}{\|\mathbf{x}\|_{\infty}}.$
\end{definition}

\begin{definition}
\label{def:activations_linear_tanh_like}
    We consider the following family of \textbf{bounded monotone Lipschitz continuous} activations which are locally-linear/locally-tanh around 0: 
    For some $Z_0 > 0$,
    \begin{align}
        \mathcal{A}_0 & := \{\sigma(\cdot) | \sigma(z) = c_{\sigma} z, c_{\sigma} > 0, |z| < Z_0\}, \\
        \mathcal{A}_1 & := \{\sigma(\cdot) | \sigma(0) = 0; \sigma \textrm{ differentiable}, \sigma'(z) = a - b \sigma(z)^2, a, b \geq 0, |z| < Z_0\}. 
    \end{align}
    Notice $\mathcal{A}_0 \cup \mathcal{A}_1$ includes the commonly used activations such as hardtanh and tanh. 
    In particular, tanh corresponds to the case $a=b=1$ for $\mathcal{A}_1$ with $Z_0 = \infty$. 
\end{definition}

\begin{theorem}
\label{thm:main_result_hardtanh}
    Assume $\mathbf{H}$ is a sequence of bounded continuous, causal, regular and time-homogeneous functionals on $\mathcal{X}$ with decaying memory.
    Let the activation be in $\mathcal{A}_0 \cup \mathcal{A}_1$. 
    Suppose $\mathbf{H}$ is $\beta_0$-stably approximated by a sequence of RNNs $\{\widehat{\mathbf{H}}(\cdot, \theta_m)\}_{m=1}^{\infty}$ in the norm defined in \cref{eq:Sobolev_norm}.
    If the perturbed models' memory functions are uniformly decaying (as defined in \cref{def:decaying_memory}) and the weight norms are uniformly bounded in $m$:
    \begin{equation}
        \sup_{m} |\theta_m|_2 < \infty.
    \end{equation}
    Then the memory function $\mathcal{M}(\mathbf{H})(t)$ of the target decays exponentially:
    \begin{equation}
        \lim_{t \to \infty} e^{\beta t} \mathcal{M}(\mathbf{H})(t) = 0, \quad \beta < \beta_0.
    \end{equation}
\end{theorem}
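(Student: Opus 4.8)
The plan is to transport the exponential-decay property from the RNN approximants to the target $\mathbf{H}$, using three ingredients: a comparison of memory functions through the $W^1$ norm, an analysis of each RNN's own memory function via a linear variational equation, and a parameter perturbation that amplifies the memory by a factor $e^{\beta t}$ which is then driven to zero by the uniformly-decaying-memory hypothesis. First I would reduce the claim to the approximants. Since $\|\mathbf{u}^{x}\|_\infty = |x|_\infty$, the definition of $\|\cdot\|_{W^1}$ and the reverse triangle inequality for suprema give
\begin{equation}
\left|\mathcal{M}(\mathbf{H})(t) - \mathcal{M}(\widehat{\mathbf{H}}_m)(t)\right| \le \sup_{x \neq 0}\frac{1}{|x|_\infty}\left|\frac{d}{dt}(H_t - \widehat{H}_t)(\mathbf{u}^{x})\right| \le \left\|\mathbf{H} - \widehat{\mathbf{H}}_m\right\|_{W^1},
\end{equation}
so the approximants' memory functions converge to $\mathcal{M}(\mathbf{H})$ uniformly in $t$. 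It therefore suffices to establish $e^{\beta t}\mathcal{M}(\widehat{\mathbf{H}}_m)(t) \to 0$ uniformly in $m$.

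Next I would compute the RNN memory function explicitly. For the Heaviside input $\mathbf{u}^{x}$ and $t \ge 0$ the input equals the constant $x$, so with $z_t := \tfrac{dh_t}{dt} = \sigma(Wh_t + Ux + b)$ and $g_t := Wh_t + Ux + b$ one obtains the variational equation $\tfrac{dz_t}{dt} = \mathrm{diag}(\sigma'(g_t))\,W z_t$, and $\mathcal{M}(\widehat{\mathbf{H}}_m)(t) = \sup_{x\neq 0}|c^\top z_t|/|x|_\infty$. For activations in $\mathcal{A}_0\cup\mathcal{A}_1$ the equilibrium pre-activation lies in the locally-linear/locally-tanh region $\{|z|<Z_0\}$, where $\sigma(g)=0$ forces $g=0$; since decaying memory forces $z_t\to0$ and hence $g_t\to0$, after a finite time the variational matrix is arbitrarily close to the fixed operator $c_\sigma W$ (for $\mathcal{A}_0$) or $aW$ (for $\mathcal{A}_1$), so the decay of $z_t$ is governed by its spectrum.

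The core step is the perturbation. I would take $\widetilde\theta_m$ equal to $\theta_m$ but with $W_m$ replaced by $W_m + (\beta/c_\sigma)I$ (resp.\ $W_m + (\beta/a)I$); for the normalized activations hardtanh and tanh ($c_\sigma = a = 1$) this is a perturbation of size $\beta < \beta_0$, which stays strictly inside the stable ball. In the tail this shifts the relevant variational spectrum by $+\beta$, so a variation-of-constants/Gronwall estimate yields $\mathcal{M}(\widehat{\mathbf{H}}(\cdot;\widetilde\theta_m))(t) \gtrsim e^{\beta t}\mathcal{M}(\widehat{\mathbf{H}}_m)(t)$. By the hypothesis that the perturbed family has uniformly decaying memory, $\sup_m \mathcal{M}(\widehat{\mathbf{H}}(\cdot;\widetilde\theta_m))(t) \to 0$ as $t\to\infty$; hence $e^{\beta t}\mathcal{M}(\widehat{\mathbf{H}}_m)(t)\to0$ uniformly in $m$, and the uniform convergence from the reduction step gives $e^{\beta t}\mathcal{M}(\mathbf{H})(t)\to0$. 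Stable approximation enters to guarantee that these $\beta$-perturbed parameters are genuine approximants of $\mathbf{H}$ (finiteness and continuity of $E(\beta)$ for $0\le\beta\le\beta_0$) and, exactly as in the linear argument, to let one reach every rate $\beta < \beta_0$ by working at a slightly larger $\beta'\in(\beta,\beta_0)$.

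The main obstacle is making the amplification rigorous outside the linear case. For a linear RNN the shift $W\to W+\beta I$ multiplies the memory by exactly $e^{\beta t}$; the nonlinear perturbation, however, alters the whole trajectory $g_t$, so $\mathrm{diag}(\sigma'(g_t))$ changes and the amplification holds only asymptotically, in the regime where $z_t$ and $g_t$ are small. Controlling the transient, and doing so uniformly in $m$, is precisely where the uniformly-bounded-weights and uniformly-decaying-memory hypotheses are needed: they provide a time $T$ independent of $m$ after which $|g_t|<Z_0$ and $\mathrm{diag}(\sigma'(g_t))W_m$ is uniformly close to $c_\sigma W_m$ (resp.\ $aW_m$), so the Gronwall constants can be taken independent of $m$. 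The most delicate point is the interplay between the inner $\sup_{x}$, which probes large inputs where local linearity initially fails, and the tail-in-$t$ behavior; for tanh and hardtanh this is tamed because the equilibrium pre-activation is $0$ regardless of $x$, so every trajectory eventually enters the local regime.
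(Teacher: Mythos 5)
Your reduction of the claim to the approximants is correct: the $W^1$ norm dominates the difference of memory functions via the reverse triangle inequality for suprema, and pointwise-in-$t$ convergence together with a decay rate uniform in $m$ does transfer the decay to the target (this is in fact a cleaner route than the paper's \cref{lemma:model_decay_target_decay}, which uses a change-of-variables and Cauchy-sequence argument). The variational equation and tail-linearization also match the paper's setup. The genuine gap is the core amplification inequality $\mathcal{M}(\widehat{\mathbf{H}}(\cdot;\widetilde\theta_m))(t)\gtrsim e^{\beta t}\mathcal{M}(\widehat{\mathbf{H}}_m)(t)$. The identity behind it, $\tilde z_t=e^{\beta(t-T)}z_t$, holds only if the perturbed and unperturbed trajectories carry the \emph{same} state at the time $T$ they enter the locally linear regime. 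They do not: replacing $W_m$ by $W_m+(\beta/c_\sigma)I$ is a finite perturbation, so the two transients, and hence $\tilde z_T$ and $z_T$, differ by $O(\beta)$. Writing $\tilde z_t=e^{\beta(t-T)}e^{c_\sigma W_m(t-T)}\bigl(z_T+(\tilde z_T-z_T)\bigr)$, the error term $e^{\beta(t-T)}\bigl|c^\top e^{c_\sigma W_m(t-T)}(\tilde z_T-z_T)\bigr|$ has exactly the same time profile as the term you want to keep, and controlling it requires a spectral bound on $W_m$ --- precisely what is being proved, so the argument is circular. Gronwall and variation of constants give \emph{upper} bounds on differences of trajectories; they cannot supply the \emph{lower} bound on the perturbed memory that your implication needs, and nothing prevents $\tilde z_T$ from having a negligible component along the eigendirections of $W_m$ that dominate $c^\top e^{c_\sigma W_m(t-T)}z_T$.

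The paper avoids trajectory comparison entirely, and that is the missing idea. From decaying memory of the perturbed models it first upgrades $\tilde c_m^\top\tilde v_{m,t}\to 0$ to $\tilde v_{m,t}\to 0$ by perturbing the readout over a spanning set of directions (\cref{lemma:decaying_v}); note that your assertion ``decaying memory forces $z_t\to 0$'' silently uses this, since decaying memory by itself controls only the projection onto $c_m$. Then, for hyperbolic perturbations $\widetilde W_{m,k}\to W_m+\beta_0 I_m$, asymptotic stability of the perturbed tail dynamics plus the Hartman--Grobman theorem (\cref{lemma:Hartman_Grobman}) forces each perturbed matrix to be Hurwitz, yielding the purely spectral statement $\sup_m\max_{i}\mathrm{Re}\,\lambda_i(W_m)\le -\beta_0$; exponential decay of each model's memory then follows from the explicit linear (or Lyapunov-controlled) flow, with no reference to a perturbed trajectory. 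Two further steps you gesture at also need this machinery: your uniform-in-$m$ entry time $T$ does not follow from bounded weights plus uniformly decaying memory as stated, since uniform decay controls only $\sup_m|c_m^\top v_{m,t}|$; the paper extracts full-state control from the identity $\sup_{|\tilde c-c_m|_2\le\beta_0}|\tilde c^\top v|=|c_m^\top v|+\beta_0|v|_2$ applied to the readout-perturbed family. And for $\mathcal{A}_1$, ``the variational matrix is arbitrarily close to $aW_m$'' does not by itself give a decay rate uniform in $m$; the paper needs the Lyapunov-equation bounds $\sup_m\|P_m\|_2\le \tfrac{1}{2\beta_0}$ (\cref{lemma:bound_P_m_2_norm,lemma:uniform_bound_on_PWD}) to absorb the nonlinear correction.
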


The proofs are included in the \cref{proof:theorems}. 
Given that approximations are required to be stable, the decaying memory property ensures that the derivative of the hidden states for the perturbed model approaches 0 as time $t \to \infty$. 
Using the Hartman-Grobman theorem, we can obtain bounds on the eigenvalues of the matrices $W_m$. 
\modification{
In \cref{sec:extension_to_GRU_LSTM}, we demonstrate that our methods can be generalized to analyze the dynamics of GRU and LSTM. 
The framework is similar while the final hidden dynamics of GRU and LSTM require more techniques to analyze. 
}

\paragraph{Interpretation of Theorem~\ref{thm:main_result_hardtanh}.}

Our main result (Theorem~\ref{thm:main_result_hardtanh}) extends the previous linear result from \citet{li2022.ApproximationOptimizationTheory}.
Instead of smoothness (measured by the Sobolev norm) as a regularity measure, the RNN Bernstein-type result identifies exponential decaying memory ($e^{\beta t} \mathcal{M}(\mathbf{H})(t) \to 0$) as the right regularity measure.
If we can approximate some target functionals stably using nonlinear RNN, then that target must have exponential decaying memory. 
Previously this was only known for linear case, but for nonlinear case, even addition of nonlinearity substantially increases model complexity, it does not fix the essential memory limitation of RNNs. 

From the numerical perspective, the theorem implies the following two statements, and we provide numerical verification for each of them. 
First, if the memory function of a target functional sequence decays slower than exponential (e.g. $\mathcal{M}(\mathbf{H})(t) = \frac{C}{(t+1)^{1.5}}$), the optimization is difficult and the approximation in \cref{fig:nonlinear} is achieved at 1000 epochs while typically exponential decaying memory achieves the approximation at 10 epochs.
When the approximation is achieved, it can be seen in \cref{fig:nonlinear} that, for larger perturbation scale $\beta$, there is no perturbation stability. 
Second, if a target functional sequence can be well-approximated and the approximation's stability radius $\beta_0$ can be shown to be positive, then the target functional sequence should have exponential decaying memory. 
See \cref{fig:u+s_to_e} for the approximation filtered with perturbation stability requirement. (See \cref{fig:memory_for_sentiment_scores_LSTM_BLSTM} in \cref{sec:memory_query_in_sentiment_analysis} for the validation of memory over general sentiment classification task.)

\begin{figure}[ht!]
    \centering
    \includegraphics[width=0.42\textwidth]{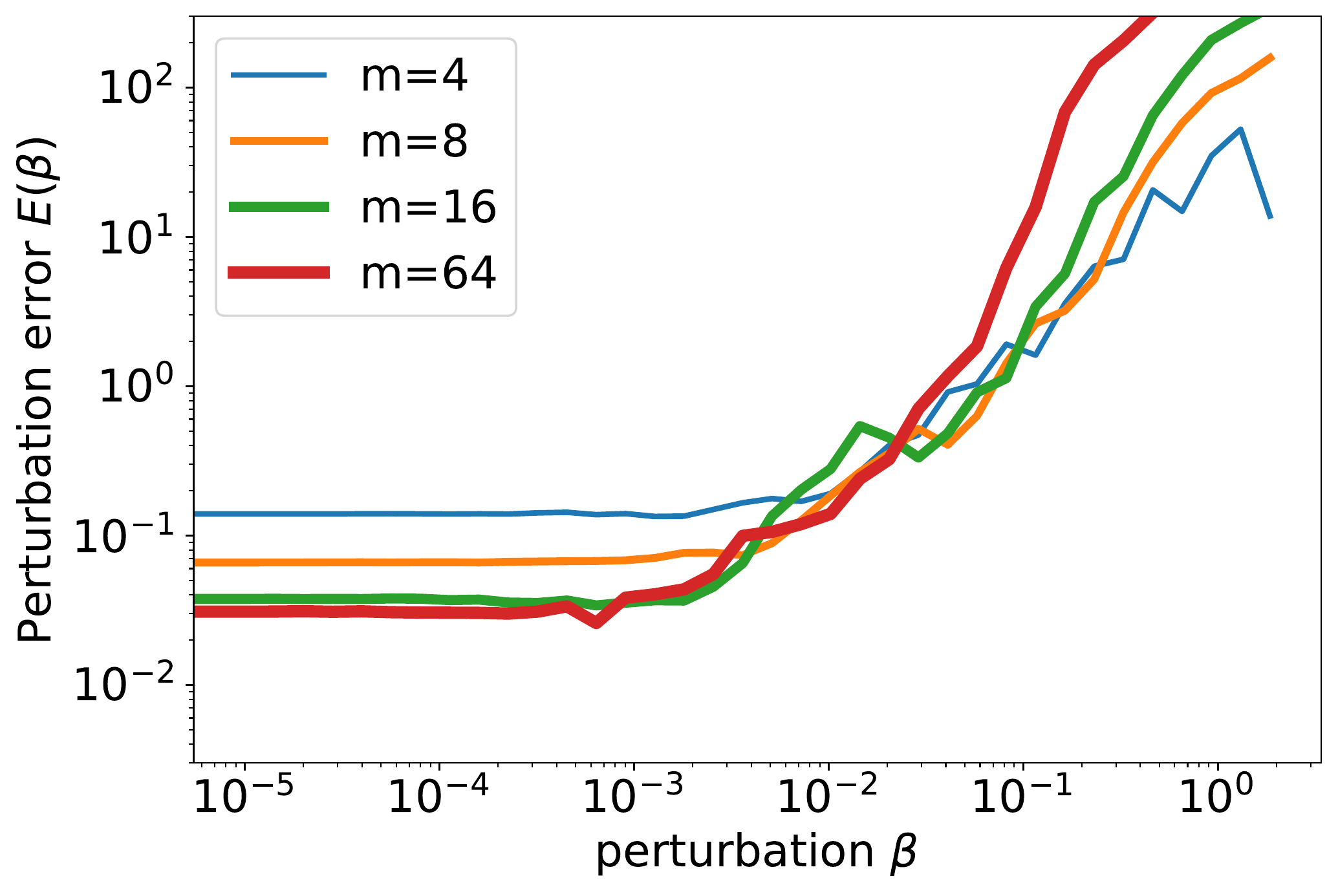}
    \caption{
        Target with polynomial decaying memory + approximation (achieved at 1000 epochs) $\to$ no stability.
        Similar to the linear functional case, when approximating nonlinear functionals with polynomial decaying memory by tanh RNN, the intersections of curves are shifting left as the hidden dimension $m$ increases. 
    }
    \label{fig:nonlinear}
\end{figure}

\begin{figure}[ht!]
    \centering
    \includegraphics[width=0.8\textwidth]{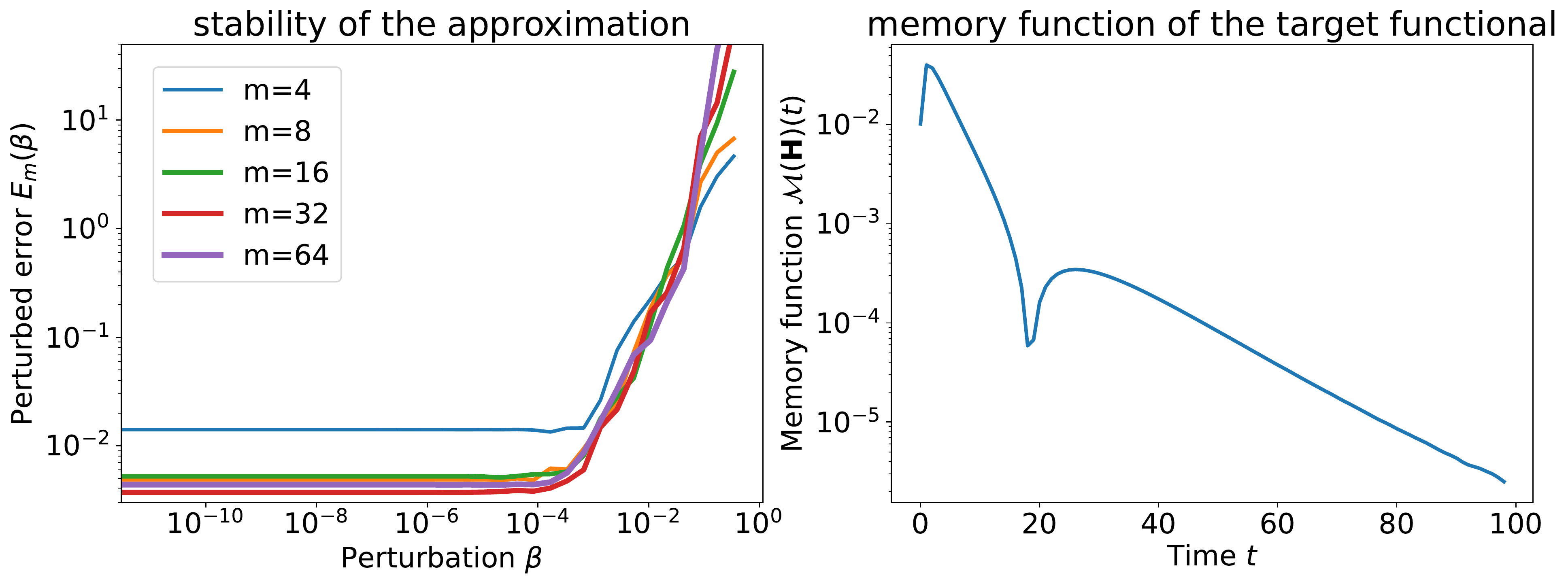}
    \caption{
        \modification{Stable approximation via RNNs implies} exponential decaying memory.
        We construct several randomly-initialized RNN models as teacher models with large hidden dimension ($m=256$). 
        When approximating the teacher model with a series of student RNN models, we can numerically verify the approximation's stability (left panel). 
        We can apply a filtering: we only select those teacher models which both can be approximated, and the approximations are stable
        (with perturbation error $E_m(\beta)$ having a positive stability radius).
        We found that the only teachers that remain are those with exponential decaying memory functions.
        An example is shown in the right panel.
    }
    \label{fig:u+s_to_e}
\end{figure}

\subsection{Suitable parametrization enables stable approximation}

\modification{
The key insight of Theorem~\ref{thm:main_result_hardtanh} can be summarized as follow: 
in order to approximate targets with non-exponential decaying memory, the recurrent weights of RNNs must have eigenvalues' real part approaching 0 on the negative side.
However, if the largest eigenvalue real parts are approaching zero, then its stability under perturbation will decrease. 
This is why the approximation and stability cannot be achieved at the same time if the target's memory does not decay exponentially. 
The stability problem can be resolved via reparameterization as the stability radius is not decreasing even when the eigenvalues are approaching 0. 
If we reparameterize the recurrent weights so that it approach zero and remain stable (i.e., eigenvalue real part being negative) under perturbations, then this architecture will maintain stability while having the possibility of approximation. 
We can accomplish this by substituting the recurrent weight with a continuous matrix function, which we will refer to as ``stable reparameterization''
\begin{equation}
\label{eq:stable_reparameterization}
    g: \mathbb{R}^{m \times m} \to \mathbb{R}^{m \times m, -}, \quad g(M) = W.
\end{equation}
This reparameterized RNN is stable as the eigenvalues' real part are always negative.
We show there are several methods to achieve this reparameterization:
The exponential function $g(M) = -e^M$ and the softplus function $g(M) = -\log(1+e^M)$ 
maps the eigenvalues of $M$ to negative range (see \cref{fig:suitable_parametrization_enables_stable_approximation} 
and \cref{fig:hardtanh_tanh_softplusrnn}
for the stable approximation of linear functional with polynomial decay memory).
LRU~\citep{orvieto2023.ResurrectingRecurrentNeuralc} proposed to parameterizes the real part of eigenvalues by $\exp(-\exp(\lambda))$, which corresponds to the discrete case for $g(M)=-e^M$. 
}

\begin{figure}[ht]
\centering
\subfigure[][Exp reparameterization + Linear RNN]
{
    \includegraphics[width=0.42\linewidth]{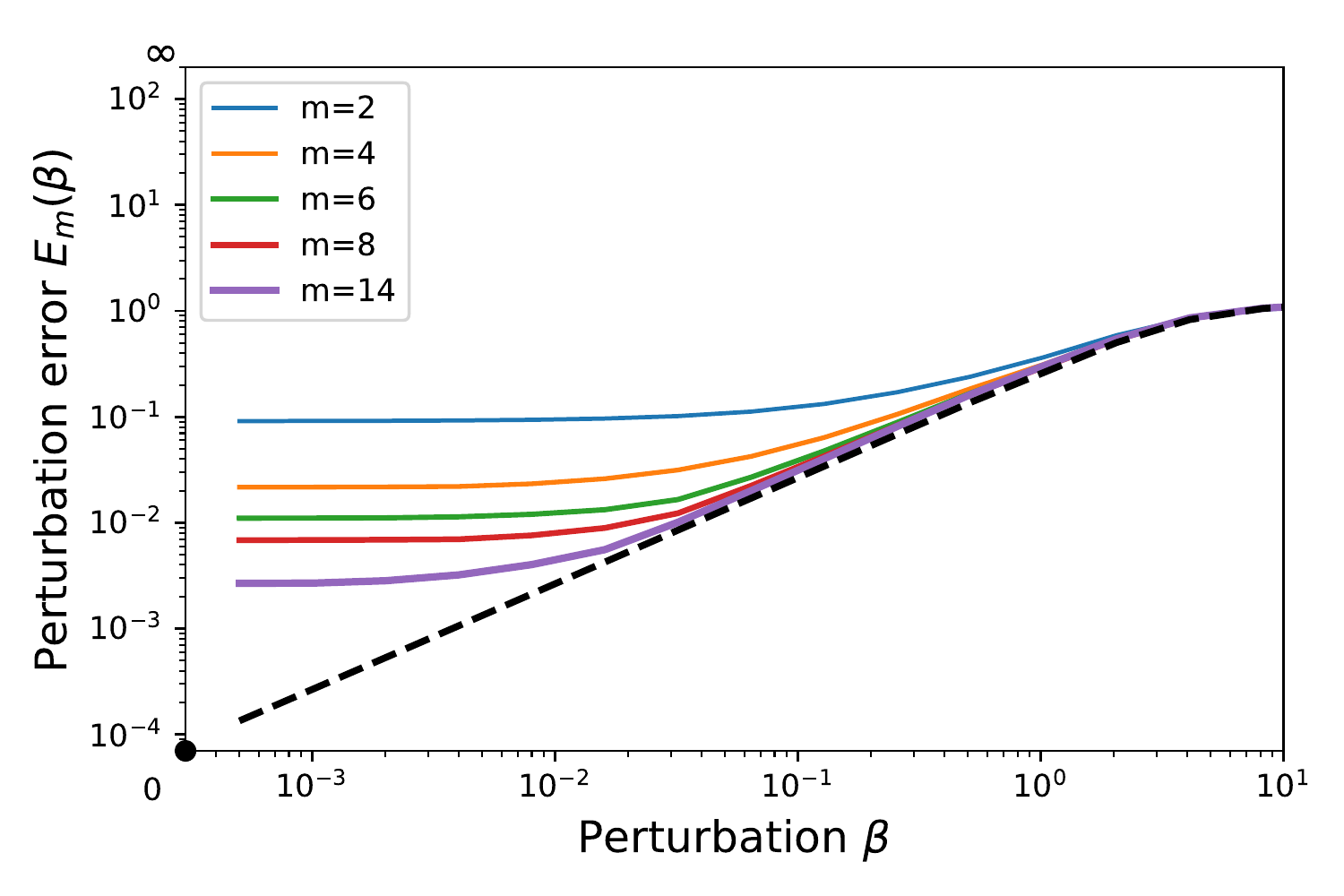}
}
\subfigure[][Softplus reparameterization + Linear RNN]
{
    \includegraphics[width=0.42\linewidth]{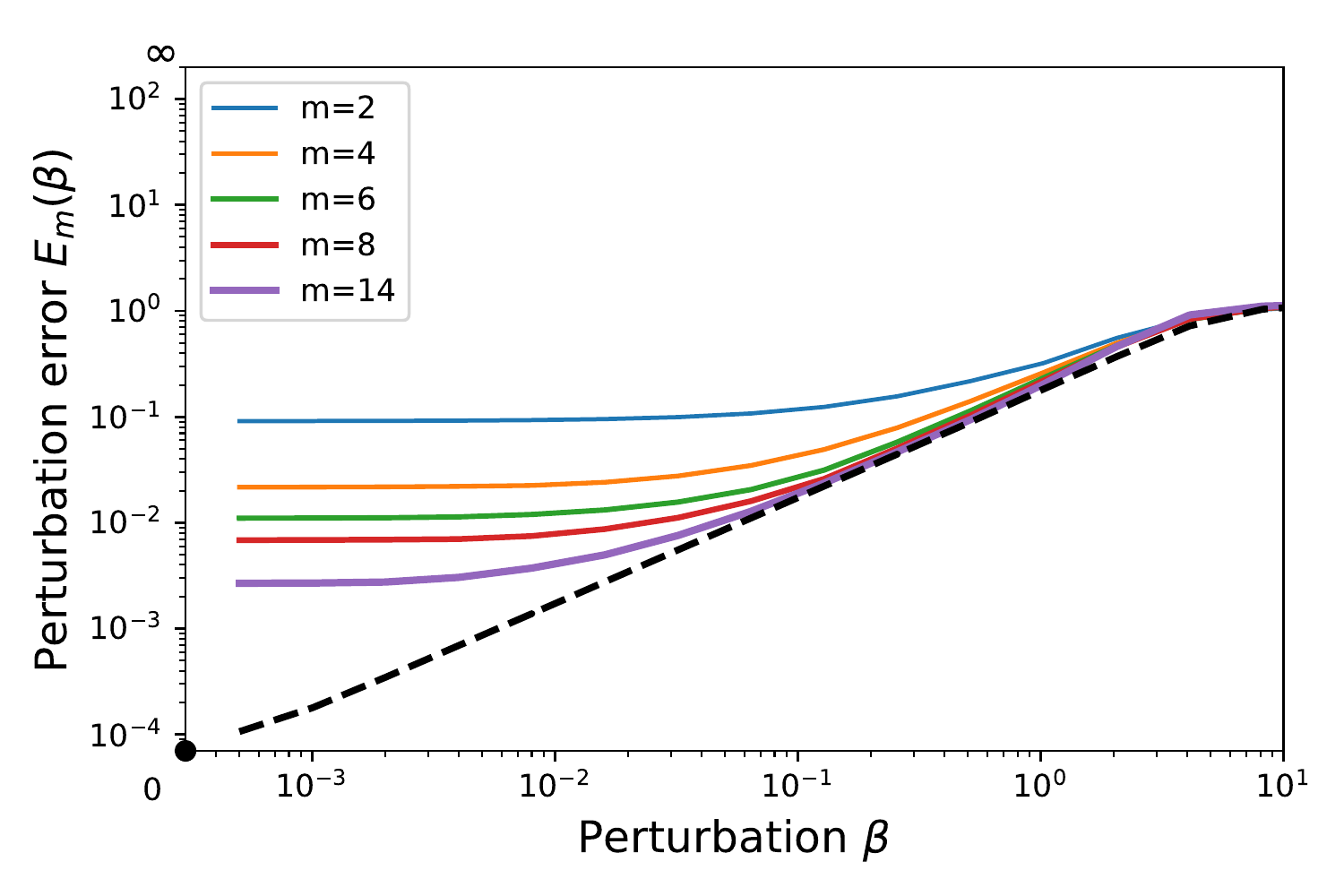}
}

\caption{
    Stable approximation of linear functionals with polynomial decay memory by linear RNN with exp and softplus reparameterization. 
    The limiting dashed curve $E(\beta)$ shall be continuous. 
}
\label{fig:suitable_parametrization_enables_stable_approximation}
\end{figure}

\section{Related work}
\label{sec:related_work}

Various results have been established in RNNs approximation theory, see \citet{sontag1998.LearningResultContinuoustime, hanson2021.LearningRecurrentNeural} and references therein.
For unbounded input index sets, $L_p$ approximation is established by \citet{gonon2020.ReservoirComputingUniversality}.
In \citet{gonon2021.FadingMemoryEcho}, the universal approximation theorem is constructed for functionals with fading memory in the discrete time setting.
In \citet{li2020.CurseMemoryRecurrent}, the universal approximation theorem and Jackson-type approximation theorem characterize the density and speed of linear RNNs applied to linear functionals.
Most existing results are forward (Jackson-type) approximation theorems, which upper bound the optimal approximation error.
Of most relevance is the Bernstein-type result proved in \citet{li2022.ApproximationOptimizationTheory}, where it has been proved that the linear functional sequences that can be efficiently approximated by linear RNNs must have an exponential decaying memory.
However, the main limitation of the above result is the linear setting for both models and targets.

The notion of approximation stability is one of the central concepts we exploit in this paper.
We note that in classical approximation theory, stable approximation has numerous definitions depending on the setting \citep{devore2021neural}.
For example, in nonlinear approximation \citep{devore1998.NonlinearApproximation}, a stably approximating sequence $\{H_m\}$ of $H$ is one that satisfies $|H_m| \leq C |H|$ for some absolute constant $C > 0$ and all $m$.
This approach is taken to show the non-existence of stable procedure to approximating functions from equally-spaced samples with exponential convergence on analytic functions \citep{platte2011.ImpossibilityFastStable}.
This notion of stability is on the conditioning of the approximation problem.
In contrast, our notion of stability introduced in \cref{sec:Stable_approximation} is more similar to a uniform continuity requirement.
Pertaining to sequence modeling, a related but different notion of dynamic stability \citep{hanson2020.UniversalSimulationStable} was used to prove a Jackson-type results for universal simulation of dynamical systems. 
There, the stability is akin to requiring the uniform (in inputs) continuity of the flow-map of the RNN hidden dynamics.
\modification{
In practice, some specific forms of the stable reparameterization we defined in \cref{eq:stable_reparameterization} has been adopted in state-space models optimization \citep{gu2020.HiPPORecurrentMemorya,gu2021.EfficientlyModelingLong,smith2023.SimplifiedStateSpace,wang2023state}.
}

\section{Conclusion}
\label{sec:conclusion}

In summary, we derive an inverse approximation result in the setting of sequence modeling using nonlinear RNNs.
We show that, assuming that a given target sequence relationship (mathematically understood as a nonlinear functional sequence) can be stably approximated by RNNs with nonlinear activations, then the target functional sequence's memory structure must be exponentially decreasing.
This places a priori limitations on the ability of RNNs to learn long-term memory in sequence modeling, and makes precise the empirical observation that RNNs do not perform well for such problems.
From the approximation viewpoint, our results show that this failure is not only due to learning algorithms (e.g. explosion of gradients), but also due to fundamental limitations of the RNN hypothesis space.
At the same time, our analysis points to reparameterization as a principled methodology to remedy the limitations of RNN when it comes to long-term memory and we demonstrate its effectiveness in by learning linear functionals with polynomial memory.

\subsubsection*{Acknowledgments}

This research is supported by the National Research Foundation, Singapore, under the NRF fellowship (project No. NRF-NRFF13-2021-0005).
Shida Wang is supported by NUS-RMI Scholarship.

\bibliography{iclr2024_conference}
\bibliographystyle{iclr2024_conference}

\newpage

\appendix

\section{Theoretical results and proofs}

In this section, we first attach the famous Riesz representation theorem and the previous results for linear RNNs in Section~\ref{sec:Riesz_representation_theorem} and Section~\ref{sec:Three_types_of_approximation}.

\subsection{Well-definedness of continuous-time RNN}
\label{sec:well-definedness-of-ode}

Here we discuss the well-definedness of continuous-time nonlinear RNNs. 
The $-\infty$ boundary condition is chosen to make the time-homogeneity well-defined, representing a more general scenario than the boundary at 0.
Regarding issues on the input continuity, Heaviside inputs are point-wise limit of smooth inputs $\mathbf{u}^x = \lim_{k \to \infty} \mathbf{u}^x_{k, cont}$. Therefore the outputs of continuous targets functionals are well-defined: $H_t(\mathbf{u}^x) = H_t(\lim_{k \to \infty} \mathbf{u}^x_{k, cont}) = \lim_{k \to \infty} H_t(\mathbf{u}^x_{k, cont})$.

\subsection{Riesz representation theorem}
\label{sec:Riesz_representation_theorem}

\begin{theorem}[Riesz-Markov-Kakutani representation theorem]\label{thm:riesz}
    Assume $H : C_0(\mathbb{R}^d) \mapsto \mathbb{R}$ is a linear and continuous functional. Then there exists a unique, vector-valued, regular, countably additive signed measure $\mu$ on $\mathbb{R}$ such that
    \begin{align}
        H(\mathbf{x}) = \int_{\mathbb{R}} x_s^\top d\mu(s)
        = \sum_{i=1}^{d} \int_{\mathbb{R}} x_{s,i} d\mu_i(s).
    \end{align}
    In addition, we have $\| H \| := \sup_{\| \mathbf{x} \|_\mathcal{X} \leq 1} | H(\mathbf{x}) | = \|\mu\|_1(\mathbb{R}) := \sum_i |\mu_i|(\mathbb{R}).$
\end{theorem}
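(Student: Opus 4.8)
The plan is to reduce the vector-valued statement to $d$ applications of the classical scalar Riesz--Markov--Kakutani theorem and then assemble the coordinate measures. Since $\mathbb{R}$ is a locally compact, $\sigma$-compact Hausdorff space and $\mathcal{X} = C_0(\mathbb{R},\mathbb{R}^d)$, I would first peel off coordinates. For each $i \in \{1,\dots,d\}$, let $e_i$ be the $i$-th standard basis vector of $\mathbb{R}^d$ and define $H_i : C_0(\mathbb{R},\mathbb{R}) \to \mathbb{R}$ by $H_i(f) := H(f\,e_i)$, where $f\,e_i$ is the $\mathbb{R}^d$-valued function $s \mapsto f(s)\,e_i$. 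Each $H_i$ is linear, and it is bounded because $\|f\,e_i\|_\infty = \|f\|_\infty$, so $|H_i(f)| \le \|H\|\,\|f\|_\infty$. By linearity of $H$ and the decomposition $\mathbf{x} = \sum_i x_{\cdot,i}\,e_i$ (each coordinate function lies in $C_0(\mathbb{R})$), we get $H(\mathbf{x}) = \sum_{i=1}^d H_i(x_{\cdot,i})$, so it suffices to represent each $H_i$ by a scalar measure.

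The scalar engine is the classical theorem: a bounded linear functional $L$ on $C_0(\mathbb{R})$ is represented by a unique finite regular signed Borel measure $\nu$ via $L(f) = \int_{\mathbb{R}} f\,d\nu$, with $\|L\| = |\nu|(\mathbb{R})$ (e.g. Rudin, \emph{Real and Complex Analysis}, Thm.~6.19). I would not rebuild this, but recall its structure, since the main analytic obstacle lives here: one writes $L$ as a difference of two positive functionals using the lattice structure of $C_0(\mathbb{R})$ (the Jordan decomposition of a functional), constructs a positive Radon measure from each positive functional via the outer-measure formula $\mu^*(U) = \sup\{L(f) : 0 \le f \le 1,\, \mathrm{supp}(f) \subseteq U\}$ on open sets followed by regularization on general Borel sets, and then verifies countable additivity and regularity. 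Applying this to each $H_i$ yields regular signed Borel measures $\mu_i$ with $H_i(f) = \int f\,d\mu_i$, and setting $\mu = (\mu_1,\dots,\mu_d)$ gives the representation $H(\mathbf{x}) = \sum_i \int x_{s,i}\,d\mu_i(s)$. Uniqueness is inherited coordinatewise: if $\mu$ and $\mu'$ both represent $H$, testing against inputs $f\,e_i$ shows $\int f\,d\mu_i = \int f\,d\mu_i'$ for all $f \in C_0(\mathbb{R})$, whence $\mu_i = \mu_i'$ by scalar uniqueness.

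It remains to establish the norm identity $\|H\| = \sum_i |\mu_i|(\mathbb{R})$. For the upper bound, if $\|\mathbf{x}\|_\infty \le 1$ then $|x_{s,i}| \le 1$ for all $s,i$, so $|H(\mathbf{x})| \le \sum_i \int |x_{s,i}|\,d|\mu_i|(s) \le \sum_i |\mu_i|(\mathbb{R})$, giving $\|H\| \le \|\mu\|_1(\mathbb{R})$. For the matching lower bound, I would use the Hahn decomposition $\mathbb{R} = P_i \cup P_i^c$ for each $\mu_i$, so that $\int (\mathbf{1}_{P_i} - \mathbf{1}_{P_i^c})\,d\mu_i = |\mu_i|(\mathbb{R})$; by regularity of $\mu_i$ together with Lusin's theorem (or a Urysohn approximation), for any $\epsilon > 0$ one finds $f_i \in C_0(\mathbb{R})$ with $\|f_i\|_\infty \le 1$ and $\int f_i\,d\mu_i \ge |\mu_i|(\mathbb{R}) - \epsilon/d$. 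Because the $\mathbb{R}^d$ norm here is the max-norm, the input $\mathbf{x} = (f_1,\dots,f_d)$ satisfies $\|\mathbf{x}\|_\infty \le 1$ while $H(\mathbf{x}) = \sum_i \int f_i\,d\mu_i \ge \sum_i |\mu_i|(\mathbb{R}) - \epsilon$; letting $\epsilon \to 0$ yields $\|H\| \ge \|\mu\|_1(\mathbb{R})$.

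The genuinely hard content is the scalar Riesz--Markov--Kakutani construction (the positivity decomposition, the outer-measure definition of the representing measure, and its regularity and countable additivity), which I would cite rather than reprove. The parts specific to this statement — the coordinatewise reduction, the assembly into a vector measure, and the lower bound in the norm identity, where the max-norm structure of $\mathbb{R}^d$ is exactly what lets the coordinates be optimized independently and simultaneously kept within the unit ball — are comparatively routine, but they do rely on the Hahn decomposition and a regularity/approximation step to be made rigorous.
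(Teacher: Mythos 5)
Your proposal is correct, but note that the paper does not actually prove this statement at all: it is imported verbatim as the classical Riesz--Markov--Kakutani theorem (background for deriving the convolution representation in \cref{eq:Riesz_representation}), so there is no in-paper argument to match. What you supply is a correct filling-in of the vector-valued version from the scalar one: the coordinatewise reduction $H_i(f) = H(f e_i)$, the assembly $\mu = (\mu_1,\dots,\mu_d)$, coordinatewise uniqueness, and the norm identity all go through, and you correctly identify that the max-norm on $\mathbb{R}^d$ is what makes $\|H\| = \sum_i |\mu_i|(\mathbb{R})$ (rather than, say, a max over $i$) hold, since near-optimal test functions for the different coordinates can be stacked into a single input of unit norm. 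One simplification you could make: the lower bound $\|H\| \ge \sum_i |\mu_i|(\mathbb{R})$ does not need a fresh Hahn-decomposition-plus-Lusin argument, because the scalar theorem you cite already gives $\|H_i\| = |\mu_i|(\mathbb{R})$, so you may directly pick $f_i$ with $\|f_i\|_\infty \le 1$ and $H_i(f_i) \ge |\mu_i|(\mathbb{R}) - \epsilon/d$ and stack them; your longer route is rigorous but reproves part of the cited result. Either way, the argument is sound and consistent with how the paper uses the theorem.
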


Based on the representation theorem, one can further obtain that the functionals $\{H_t: t \in \mathbb{R}\}$ satisfying properties in Definition~\ref{def:functional_properties} can be represented by the following convolutional form
\begin{align}\label{eq:nonlinear_functional}
    H_t(\mathbf{x}) = \int_{0}^\infty x_{t-s}^\top \rho(s) ds, \quad t \in \mathbb{R},
\end{align}
and the representation function $\rho: [0, \infty) \to \mathbb{R}^d$ is a measurable and integrable function with $\|\rho\|_{L^1([0, \infty))} = \sup_{t\in\mathbb{R}} \| H_t \|$.
The details can be found in Appendix \ref{sec:Three_types_of_approximation} or \citet{li2020.CurseMemoryRecurrent}, where the causality and time-homogeneity play a key role.

\subsection{Three types of approximation for linear functionals and linear RNNs}
\label{sec:Three_types_of_approximation}

We include the statements of the universal approximation theorem and approximation rate for linear functionals by linear RNNs from \citet{li2020.CurseMemoryRecurrent}.

\begin{theorem}[Universal approximation for linear functionals by linear RNNs \citep{li2020.CurseMemoryRecurrent}]
\label{thm:UAP-linear-functional-linear-RNN}
    Let $\{H_t: t \in \mathbb{R}\}$ be a sequence of linear, continuous, causal, regular and time-homogeneous functionals defined on $\mathcal{X}$. Then, for any $\epsilon > 0$, there exists $\{\bar{H}_t: t \in \mathbb{R}\} \in \bar{\mathcal{H}}^{\mathrm{linear}}$ such that
    \begin{equation}
    \sup_{t \in \mathbb{R}} \|H_t - \bar{H}_t\| \equiv \sup_{t \in \mathbb{R}} \sup_{\|\mathbf{x}\|_{\mathcal{X}} \leq 1} |H_t(\mathbf{x}) - \bar{H}_t(\mathbf{x})| \leq \epsilon.
    \end{equation}
\end{theorem}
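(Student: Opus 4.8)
The plan is to reduce the functional-approximation problem to an $L^1$ density statement about matrix exponentials, exploiting the Riesz representation already recorded in \cref{eq:nonlinear_functional}. First I would note that the hypotheses on $\{H_t\}$ are exactly those guaranteeing the convolutional representation $H_t(\mathbf{x}) = \int_0^\infty x_{t-s}^\top \rho(s)\,ds$ with $\rho \in L^1([0,\infty),\mathbb{R}^d)$ and $\sup_t \|H_t\| = \|\rho\|_{L^1}$. The key point is that a linear RNN realizes a functional of the same type: setting $\sigma(z)=z$ and $b=0$ in \cref{eq:nonlinear_rnn_continuous} and solving the linear ODE with $h_{-\infty}=0$ gives $h_t = \int_0^\infty e^{Ws} U x_{t-s}\,ds$, hence
\begin{equation}
\bar{H}_t(\mathbf{x}) = c^\top h_t = \int_0^\infty \big(c^\top e^{Ws} U\big)\, x_{t-s}\,ds,
\end{equation}
so the RNN's representation function is $\hat{\rho}(s) = U^\top e^{W^\top s} c$. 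Because both $\mathbf{H}$ and $\bar{\mathbf{H}}$ are linear and time-homogeneous, their difference is again of this type, and the norm identity from \cref{thm:riesz} yields $\sup_t \|H_t - \bar{H}_t\| = \|\rho - \hat{\rho}\|_{L^1([0,\infty))}$. This collapses the theorem to the claim that $\{s \mapsto U^\top e^{W^\top s} c\}$ is dense in $L^1([0,\infty),\mathbb{R}^d)$.

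Next I would establish this density by restricting to the simplest admissible family. Taking $W = \mathrm{diag}(-\lambda_1,\dots,-\lambda_m)$ with $\lambda_k > 0$ (which guarantees $e^{Ws}\to 0$, so the convolution is well-defined and the $-\infty$ boundary condition is met), writing $u_k^\top$ for the rows of $U$ and $c_k$ for the entries of $c$, one gets $\hat{\rho}(s) = \sum_{k=1}^m c_k e^{-\lambda_k s} u_k$, a finite exponential sum with arbitrary vector coefficients. It therefore suffices to approximate each scalar component $\rho_j \in L^1([0,\infty))$ by real exponential sums $\sum_k a_k e^{-\lambda_k s}$ and then assemble the coordinates, grouping the exponentials allotted to coordinate $j$ via $u_k = e_j$, at the cost of enlarging $m$, which is unconstrained.

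The core lemma is thus the scalar density claim: $\mathrm{span}\{e^{-\lambda s} : \lambda > 0\}$ is dense in $L^1([0,\infty))$. I would prove it by Hahn--Banach duality. If $f \in L^\infty = (L^1)^*$ annihilates every $e^{-\lambda s}$, then $\int_0^\infty f(s) e^{-\lambda s}\,ds = 0$ for all $\lambda > 0$, i.e. the Laplace transform of $f$ vanishes on $(0,\infty)$; injectivity of the Laplace transform then forces $f = 0$ almost everywhere, so the span has trivial annihilator and is dense. I expect this duality step together with the Laplace-injectivity input, plus the bookkeeping needed to promote the scalar statement to the $\mathbb{R}^d$-valued and uniform-in-$t$ conclusion, to be the main content of the argument; the explicit ODE solution and the reduction to $L^1$ are routine once the Riesz representation is in hand.
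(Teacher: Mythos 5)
Your proposal is correct, and its skeleton---Riesz representation, solving the linear ODE with $h_{-\infty}=0$ to identify the RNN kernel $\hat{\rho}(s) = U^\top e^{W^\top s} c$, the norm identity collapsing the functional distance to $\|\rho - \hat{\rho}\|_{L^1}$, and the reduction to density of exponential sums in $L^1([0,\infty),\mathbb{R}^d)$---is exactly the route of the cited source \citep{li2020.CurseMemoryRecurrent}, which this paper states without re-proof (the same reduction is sketched around \cref{eq:nonlinear_functional} and in \cref{sec:prior_linear_results}). The one genuine divergence is how the core density lemma is established. You prove that $\mathrm{span}\{e^{-\lambda s} : \lambda > 0\}$ is dense in $L^1([0,\infty))$ by Hahn--Banach duality plus injectivity of the Laplace transform (analyticity of $\lambda \mapsto \int_0^\infty f(s)e^{-\lambda s}\,ds$ on the right half-plane, the identity theorem, then Fourier uniqueness applied to $f(s)e^{-as}$). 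The cited work---echoed in this paper's numerical appendix (\cref{sec:numerical_experiments_details})---instead performs the change of variables $u = e^{-s}$, under which exponential sums with integer exponents become polynomials in $u$, and invokes a Weierstrass-type polynomial approximation of $\rho(-\ln u)$ on $(0,1]$. Since the theorem only requires qualitative density, the two are interchangeable: your duality argument is shorter and cleaner, but it is non-constructive, whereas the change-of-variables route is semi-constructive (it pins down which exponents suffice, e.g.\ integers) and is the argument that extends naturally to the Jackson-type rate estimate in \cref{thm:DAP-linear-functional-linear-RNN}, where quantitative smoothness of $\rho(-\ln u)$ drives the $m^{-\alpha}$ bound. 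Two small points to make explicit in a full write-up: the norm identity for the difference functional should be justified by noting that $H_t - \bar{H}_t$ is itself linear, continuous, causal, regular and time-homogeneous with representer $\rho - \hat{\rho}$, so \cref{thm:riesz} applies to it directly; and your diagonal choice of $W$ with $\lambda_k>0$ should be flagged as ensuring $W$ is Hurwitz, which is what makes the convolution representation of the RNN functional well-defined on all of $\mathcal{X}$.
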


\begin{theorem}[Jackson-type approximation rate for linear functionals by linear RNNs \citep{li2020.CurseMemoryRecurrent}]
\label{thm:DAP-linear-functional-linear-RNN}
    Assume the same conditions as Theorem~\ref{thm:UAP-linear-functional-linear-RNN}.
    Consider the output of constant signals
    \begin{equation*}
        y_i(t) = H_t(\mathbf{e}_i), \quad i = 1, \dots, d,
    \end{equation*}
    where $\mathbf{e}_i$ is a constant signal with $e_{i,t} = e_i \bm{1}_{\{t\geq 0\}}$, and $\{e_i\}_{i=1}^d$ denote the standard basis vectors in $\mathbb{R}^d$. Suppose there exist constants $\alpha \in \mathbb{N}_+, \beta, \gamma > 0$ such that for $i = 1, \dots, d$, $k = 1,\dots, \alpha+1$, $y_i(t) \in C^{(\alpha + 1)}(\mathbb{R})$ and
    \begin{align*}
        & e^{\beta t} y_i^{(k)} (t) = o(1), \quad as ~~ t \to +\infty, \\
        & \sup_{t \geq 0} \frac{|e^{\beta t} y_i^{(k)}(t)|}{\beta^k} \leq \gamma.
    \end{align*}
    Then, there exists a universal constant $C(\alpha)$ only depending on $\alpha$, such that for any $m \in \mathbb{N}_+$, there exists a sequence of width-$m$ RNN functionals $\{\bar{H}_t: t \in \mathbb{R}\} \in \bar{\mathcal{H}}_m^{\mathrm{linear}}$ such that
    \begin{equation}
        R(m) = \sup_{t \in \mathbb{R}} \|H_t - \bar{H}_t\| \equiv \sup_{t \in \mathbb{R}} \sup_{\|\mathbf{x}\|_{\mathcal{X}} \leq 1} |H_t(\mathbf{x}) - \bar{H}_t(\mathbf{x})| \leq \frac{C(\alpha) \gamma d}{\beta m^\alpha}.
    \end{equation}
\end{theorem}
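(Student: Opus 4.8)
\section*{Proof proposal for \texorpdfstring{\cref{thm:DAP-linear-functional-linear-RNN}}{the Jackson-type rate}}

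The plan is to prove this forward (Jackson-type) rate by passing to the Riesz representation, recognizing that a width-$m$ linear RNN approximates the representation kernel $\rho$ by an exponential sum, and reducing the resulting scalar kernel-approximation problem to a (weighted) polynomial-approximation estimate of order $m^{-\alpha}$. First I would invoke \cref{thm:riesz} and the convolutional form \eqref{eq:nonlinear_functional} to write $H_t(\mathbf{x}) = \int_0^\infty \rho(s)^\top x_{t-s}\,ds$ with $\sup_t\|H_t\| = \|\rho\|_{L^1([0,\infty))}$. Feeding the Heaviside signal $\mathbf{e}_i$ gives $y_i(t) = \int_0^t \rho_i(s)\,ds$ for $t\ge 0$, hence $y_i^{(k)}(t) = \rho_i^{(k-1)}(t)$, so the stated hypotheses on $y_i$ translate into decay-plus-smoothness bounds on $\rho_i$: for $j=0,\dots,\alpha$ one has $e^{\beta s}\rho_i^{(j)}(s)=o(1)$ and $|e^{\beta s}\rho_i^{(j)}(s)|\le \gamma\beta^{j+1}$. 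In other words $\rho_i$ and its first $\alpha$ derivatives decay at rate $e^{-\beta s}$, with $\beta$-weighted sizes uniformly controlled by $\gamma$.

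Next I would identify what the hypothesis space computes. Solving \eqref{eq:nonlinear_rnn_continuous} with $\sigma(z)=z$, $b=0$ and $h_{-\infty}=0$ gives $\hat y_t = \int_0^\infty (c^\top e^{Ws}U)\,x_{t-s}\,ds$, so a linear RNN approximates $\rho$ by the exponential sum $\hat\rho(s)^\top = c^\top e^{Ws}U$, and as in the linear theory $R(m)=\sup_t\|H_t-\bar H_t\| = \|\rho-\hat\rho\|_{L^1} = \sum_{i=1}^d \|\rho_i-\hat\rho_i\|_{L^1([0,\infty))}$, the supremum over $\|\mathbf{x}\|_\infty\le 1$ being attained by the sign pattern of $\rho-\hat\rho$. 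The key structural point is that the $d$ channels can share one recurrent matrix: taking $W=\mathrm{diag}(-\beta,-2\beta,\dots,-m\beta)$, $c=\mathbf{1}$, and $U_{ki}=a_{k,i}$ realizes $\hat\rho_i(s)=\sum_{k=1}^m a_{k,i}e^{-k\beta s}$ independently in each coordinate. It therefore suffices to approximate each scalar $\rho_i$ by such a sum with error $C(\alpha)\gamma/(\beta m^\alpha)$ and sum over $i$ to produce the factor $d$; the eigenvalues $-k\beta<0$ also make the constructed RNN well-defined under the $h_{-\infty}=0$ boundary condition.

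For the scalar problem I would use the substitution $u=e^{-\beta s}\in(0,1]$, under which $\sum_k a_{k,i}e^{-k\beta s}$ becomes a polynomial $p_i(u)=\sum_k a_{k,i}u^k$ with $p_i(0)=0$, and $ds=-du/(\beta u)$ converts the error into a weighted norm,
\begin{equation}
\|\rho_i-\hat\rho_i\|_{L^1} = \frac{1}{\beta}\int_0^1 \frac{|\tilde\rho_i(u)-p_i(u)|}{u}\,du, \qquad \tilde\rho_i(u):=\rho_i\!\left(-\tfrac{1}{\beta}\log u\right).
\end{equation}
The bounds of the first step are exactly what is needed here: the function $w_i(s):=e^{\beta s}\rho_i(s)=\tilde\rho_i(u)/u$ has derivatives $w_i^{(j)}$ given by bounded combinations of $e^{\beta s}\rho_i^{(l)}$ (each $\le\gamma\beta^{l+1}$) for $j\le\alpha$. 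I would then invoke a Jackson-type estimate for this weighted polynomial approximation on $[0,1]$ to produce $p_i$ of degree $m$ achieving $\frac{1}{\beta}\int_0^1 u^{-1}|\tilde\rho_i-p_i|\,du \le C(\alpha)\gamma/(\beta m^\alpha)$, the $m^{-\alpha}$ rate reflecting the $\alpha$ controlled derivatives and $C(\alpha)$ absorbing the Jackson constant.

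I expect the main obstacle to be precisely this last approximation-theoretic estimate. The $1/u$ weight concentrates the error near $u=0$ (equivalently $s\to\infty$), and although $\tilde\rho_i(u)=O(u)$ there, the change of variables degrades the higher-order smoothness of $\tilde\rho_i/u$ at the endpoint, so a naive application of Jackson's theorem for $C^\alpha$ functions is unavailable. The careful work will be to set up the weighted approximation so the endpoint is handled and the constant depends only on $\alpha$ — equivalently, one may instead take $W$ to be a single Jordan block at $-\beta$, reducing to a Laguerre-type approximation of $w_i(s)$ in the weight $e^{-\beta s}$ on $[0,\infty)$ — while the $o(1)$ decay hypothesis guarantees finiteness of the weighted integral. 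Everything else, namely the Riesz reduction, the shared-$W$ construction, and the summation over the $d$ channels, is routine bookkeeping once this scalar rate is established.
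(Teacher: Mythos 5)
Your reduction is the right one, and it is essentially the route taken in the source this theorem is quoted from \citep{li2020.CurseMemoryRecurrent} (the present paper only restates the result in its appendix without proof): pass to the Riesz kernel $\rho$ via \cref{thm:riesz}, observe $\rho_i = y_i'$ so that the hypotheses become $e^{\beta s}\rho_i^{(j)}(s) = o(1)$ and $|e^{\beta s}\rho_i^{(j)}(s)| \le \gamma \beta^{j+1}$ for $j = 0,\dots,\alpha$, identify a width-$m$ linear RNN with an exponential-sum approximant $\hat\rho_i(s) = \sum_{k} a_{k,i} e^{-k\lambda s}$ so that the functional error equals $\sum_i \|\rho_i - \hat\rho_i\|_{L^1}$, and convert exponential sums into polynomials by an exponential change of variables followed by a Jackson estimate. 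The shared diagonal $W$ and the factor-$d$ bookkeeping are also fine.

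However, there is a genuine gap, and it sits exactly where you flagged it; the step you leave open is the heart of the theorem, not a technicality. With your substitution $u = e^{-\beta s}$ (exponent grid $\{-k\beta\}$), the function to be approximated by polynomials is $g(u) = e^{\beta s}\rho_i(s)$, and each $u$-derivative costs a factor $e^{\beta s}$: one computes $g^{(j)}(u) = \sum_{l \le j} c_{j,l}\, \beta^{-l} e^{(j+1)\beta s} \rho_i^{(l)}(s)$ with combinatorial constants $c_{j,l}$. The hypotheses control only $e^{\beta s}\rho_i^{(l)}$, so for $j \ge 1$ these terms may grow like $e^{j\beta s}$ as $u \to 0^{+}$; concretely, $\rho_i(s) = e^{-\beta s}/(1+s)$ satisfies every hypothesis of the theorem yet gives $g'(u) = \gamma e^{\beta s}/(1+s)^2 \to \infty$, so $g$ is merely continuous at the endpoint, Jackson's theorem for $C^\alpha$ functions is inapplicable, and no $m^{-\alpha}$ rate follows. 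The resolution is not weighted approximation theory (nor the Laguerre alternative you mention, which you also do not carry out), but a rescaling of the exponent grid: take eigenvalues $-k\beta/(\alpha+1)$, $k = 1,\dots,m$, i.e.\ substitute $u = e^{-\beta s/(\alpha+1)}$. Writing $\lambda = \beta/(\alpha+1)$, the transformed function $g(u) = e^{\lambda s}\rho_i(s)$ has $g^{(j)}(u) = \sum_{l\le j} c_{j,l}\,\lambda^{-l} e^{(j+1)\lambda s}\rho_i^{(l)}(s)$, and for every $j \le \alpha$ the prefactor $e^{(j+1)\lambda s}$ is at most $e^{\beta s}$; hence all terms are bounded by $C(\alpha)\gamma\beta$ and tend to $0$ as $u \to 0^{+}$ by the $o(1)$ hypothesis. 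Thus $g \in C^\alpha([0,1])$ up to the closed endpoint with $\|g^{(\alpha)}\|_\infty \le C(\alpha)\gamma\beta$, classical unweighted Jackson approximation by polynomials of degree $m-1$ gives $\int_0^1 |g-p|\,du \le C(\alpha)\gamma\beta\, m^{-\alpha}$, and undoing the change of variables (a factor $(\alpha+1)/\beta$) and summing over the $d$ channels yields the stated bound up to the $\beta$-normalization bookkeeping. This rescaling is also exactly why the hypotheses require $\alpha+1$ derivatives of $y_i$, all decaying at the single rate $\beta$: they are precisely what makes the $(\alpha+1)$-fold slowed-down substitution produce a $C^\alpha$ function on the closed interval.
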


\begin{theorem}[Bernstein-type approximation for linear functionals by linear RNNs \citep{li2022.ApproximationOptimizationTheory}]
\label{thm:IAP-linear-functional-linear-RNN}
    Let $\{H_t : t \in \mathbb{R}\}$ be a sequence of linear, continuous, causal, regular and time-homogeneous functionals defined on $\mathcal{X}$.
    Consider the output of constant signals
    \begin{equation}
        y_i(t) = H_t(\mathbf{e}_i) \in C^{(\alpha + 1)} (\mathbb{R}), \quad i \in 1, \dots, d, ~ \alpha \in \mathbb{N}_+.
    \end{equation}
    Suppose that for each $m \in \mathbb{N}_+$, there exists a sequence of
    width-$m$ RNN functionals $\{\bar{H}_t: t \in \mathbb{R}\} \in \bar{\mathcal{H}}_m^{\mathrm{linear}}$ approximating $H_t$ in the following sense:
    \begin{equation}
        \lim_{m \to \infty} \sup_{t \geq 0} |\bar{y}_{i, m}^{(k)}(t) - y_i^{(k)}(t)| = 0, \quad i = 1, \dots, d, ~ k = 1, \dots, \alpha + 1,
    \end{equation}
    where
    \begin{equation}
        \bar{y}_{i, m}(t) = \bar{H}_t(\mathbf{e}_i), \quad i = 1, \dots, d.
    \end{equation}

    Define $w_m = \max_{j \in [m]} \textrm{Re}(\lambda_j)$, where $\{\lambda_j\}_{j = 1}^m$ are the eigenvalues of $\bar{W}$ in $\{\bar{H}_t : t \in \mathbb{R}\}$.
    Assume the parameters are uniformly bounded and there exists a constant $\beta > 0$ such that $\lim \sup_{m \to \infty} w_m < -\beta$, then we have
    \begin{equation}
        e^{\beta t} y_i^{(k)} (t) = o(1) \textrm{ as } t \to + \infty, \quad i = 1, \dots, d, ~ k = 1, \dots, \alpha + 1.
    \end{equation}
\end{theorem}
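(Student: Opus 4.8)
The plan is to move from the time domain into the Laplace domain, where the spectral hypothesis $\limsup_m w_m<-\beta$ becomes a statement about where the poles of the approximants sit, and then to recover the decay rate of the target's memory from the domain of analyticity of its Laplace transform.

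First I would invoke the Riesz representation (\cref{thm:riesz}, \cref{eq:nonlinear_functional}) to write the target's constant-signal response as $y_i(t)=\int_0^t\rho_i(s)\,ds$ for $t\ge 0$, where $\rho_i(s)=\rho(s)^\top e_i$, so that $y_i^{(k)}(t)=\rho_i^{(k-1)}(t)$ and the desired conclusion $e^{\beta t}y_i^{(k)}(t)=o(1)$ is exactly exponential decay of $\rho_i^{(j)}$ for $j=0,\dots,\alpha$. Solving the linear RNN ODE in \cref{eq:nonlinear_rnn_continuous} with $\sigma(z)=z$, $b=0$ explicitly gives the approximant's memory function $\bar\rho_{i,m}(t)=\bar c^\top e^{\bar W t}\bar U e_i$, hence $\bar y_{i,m}^{(k)}(t)=\bar c^\top \bar W^{k-1}e^{\bar W t}\bar U e_i$; the hypothesis then reads as uniform convergence $\bar\rho_{i,m}^{(j)}\to\rho_i^{(j)}$ on $[0,\infty)$ for $j=0,\dots,\alpha$.

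Next I fix $\beta'$ with $\beta<\beta'<-\limsup_m w_m$, so that for all large $m$ every eigenvalue of $\bar W$ has real part below $-\beta'$. The Laplace transform of the approximant, $\widehat{\bar\rho}_{i,m}(\lambda)=\bar c^\top(\lambda I-\bar W)^{-1}\bar U e_i$, is then a rational function whose poles are trapped in $\{\mathrm{Re}\,\lambda<-\beta'\}$, so it is analytic on $\{\mathrm{Re}\,\lambda>-\beta'\}$ for every large $m$. The core claim is that the target transform $\hat\rho_i$, which a priori is only analytic on $\{\mathrm{Re}\,\lambda>0\}$ where it equals $\lim_m\widehat{\bar\rho}_{i,m}$, in fact continues analytically across the imaginary axis into the whole half-plane $\{\mathrm{Re}\,\lambda>-\beta'\}$. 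Once this is in hand, I would use the convergence of the higher derivatives to control the continued transform along vertical lines $\{\mathrm{Re}\,\lambda=-\beta''\}$ with $\beta''<\beta'$, well enough to shift the Bromwich inversion contour onto such a line and obtain $|\rho_i^{(j)}(t)|\le C e^{-\beta'' t}$ (treating the top order by one further integration if needed); choosing $\beta''\in(\beta,\beta')$ yields $e^{\beta t}\rho_i^{(j)}(t)\to 0$, which is precisely $e^{\beta t}y_i^{(k)}(t)=o(1)$.

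The main obstacle is exactly the analytic continuation of the limit in the third step, and it is subtler than it looks. One cannot simply transfer an $m$-uniform bound from the approximants to the limit: since $\bar W$ may be strongly non-normal with Jordan blocks whose sizes grow with $m$, the weighted functions $e^{\beta' t}\bar\rho_{i,m}(t)$ (equivalently $\widehat{\bar\rho}_{i,m}$ restricted to the strip $\{-\beta'<\mathrm{Re}\,\lambda\le 0\}$) can blow up in $m$ even while $\bar\rho_{i,m}$ converges uniformly together with all of its first $\alpha$ derivatives --- a transient ``bump'' of vanishing height pushed to ever later times is invisible in the unweighted sup norm yet destroys any naive $m$-uniform weighted estimate, and a uniformly bounded-norm Jordan block already realizes this. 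The continuation must therefore be argued at the level of the limit itself, not through uniform control of the approximants: I would proceed by contradiction, assuming $\hat\rho_i$ has a rightmost singularity at some $\lambda_0$ with $\mathrm{Re}\,\lambda_0\in(-\beta',0]$ and deriving an inconsistency with the trapped-pole structure of the $\widehat{\bar\rho}_{i,m}$, using the $C^{\alpha}$ convergence of $\bar\rho_{i,m}^{(j)}\to\rho_i^{(j)}$ to control the behaviour of $\hat\rho_i$ as $\mathrm{Re}\,\lambda$ decreases past $0$. Localizing the singularities of the limit from the trapped poles of the approximants plus the derivative convergence alone --- rather than from any quantitative $m$-uniform bound --- is where essentially all the work lies; the Riesz reduction and the contour-shift inversion around it are routine.
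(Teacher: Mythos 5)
Your Riesz reduction and the explicit formula $\bar y_{i,m}^{(k)}(t)=\bar c^\top \bar W^{k-1}e^{\bar W t}\bar U e_i$ are fine, and your structural observation is correct: the hypotheses give no $m$-uniform control of the weighted approximants, because bounded-parameter non-normal $\bar W$ produces transient bumps. But the proposal does not prove the theorem: everything is delegated to the analytic continuation of $\hat\rho_i$ across $\{\mathrm{Re}\,\lambda>-\beta'\}$, and the contradiction argument you sketch for that step has no mechanism behind it. Pole locations of the approximants carry, by themselves, no information about singularities of a limit: the partial sums of $\sum_k z^k$ are entire, yet converge locally uniformly on $|z|<1$ to $1/(1-z)$, singular at $z=1$. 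Through the Laplace variable this is exactly your configuration: the transforms $\widehat{\bar\rho}_{i,m}(\lambda)=\sum_{k=0}^{N}\nu^k/(\lambda+\beta')^{k+1}$ are analytic on $\{\mathrm{Re}\,\lambda>-\beta'\}$ and converge on $\{\mathrm{Re}\,\lambda>0\}$ to $1/(\lambda+\gamma)$ with $\gamma=\beta'-\nu$, a pole inside the strip. So the step you yourself call ``where essentially all the work lies'' is not only unproved; as sketched, it cannot work.

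Worse, the obstruction you identify is fatal to the statement itself as transcribed here, not merely to naive proofs of it, so the gap cannot be closed without adding hypotheses. The time-domain version of the identity above is a counterexample. Fix $0<\gamma<\beta<\beta'$, set $\nu=\beta'-\gamma$, and let
\begin{equation*}
f_N(t)=e^{-\beta' t}\sum_{k=0}^{N}\frac{(\nu t)^k}{k!},\qquad t\ge 0 .
\end{equation*}
Splitting at $t=(N+1)/(3\nu)$ shows $\sup_{t\ge0}|f_N^{(j)}(t)-\tfrac{d^j}{dt^j}e^{-\gamma t}|\to0$ for every fixed $j$ (exponentially fast, using $\sum_{k>N}x^k/k!\le x^{N+1}e^x/(N+1)!$ on the first piece and $0\le e^{-\gamma t}-f_N(t)\le e^{-\gamma t}$ on the second). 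Moreover $f_N$ is exactly the memory function of a width-$(N+1)$ linear RNN with spectrum $\{-\beta'\}$ and uniformly bounded weights: conjugate the Jordan-chain realization by $D=\mathrm{diag}(1,1/2,\dots,1/(N+1))$ to get $\bar W=-\beta' I+\nu\tilde S$ with $\tilde S_{j+1,j}=(j+1)/j$ (so $|\bar W|_2\le\beta'+2\nu$), $\bar U=e_1$, $\bar c=(1,1/2,\dots,1/(N+1))^\top$ (so $|\bar c|_2\le\pi/\sqrt6$); conjugation changes neither $\bar c^\top e^{\bar W t}\bar U$ nor the eigenvalues. Replacing $e^{-\gamma t}$ by $(1-e^{-\mu t})^{\alpha+1}e^{-\gamma t}$ (a finite sum of exponentials, treated term by term the same way) makes $y_i\in C^{(\alpha+1)}(\mathbb{R})$, and then every hypothesis of \cref{thm:IAP-linear-functional-linear-RNN} holds with $w_m\equiv-\beta'<-\beta$, while the target memory decays only at rate $\gamma<\beta$. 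The missing ingredient is a normality-type control of $\bar W_m$ (normal matrices, a uniform bound on $\sup_t|e^{\bar W_m t}|_2$, or uniformly conditioned eigenbases); this is precisely what the paper's own related arguments use implicitly, e.g.\ the step $|e^{W_m(t-T_\epsilon)}|_2\le e^{-\beta_0(t-T_\epsilon)}$ in \cref{proof:theorems} and the orthogonal eigenvector expansion in the proof of \cref{lemma:bound_P_m_2_norm}, both valid for normal $W_m$ but not under the stated eigenvalue condition alone. With such a strengthened hypothesis, the efficient argument is the paper's time-domain one --- uniform exponential decay of the approximant memories followed by the compactification and Cauchy-sequence argument of \cref{lemma:model_decay_target_decay} --- and your continuation step would then also follow (Vitali's theorem, once the weighted bounds are uniform); without it, there is no true statement to prove.
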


\subsection{Why solve curse of memory by adding nonlinearity?}
\label{sec:why_solve_curse_of_memory_by_adding_nonlinearity}

In feed-forward neural networks, the inclusion of nonlinear activation is known to fundamentally change the approximation capacities of multi-layer networks. Without this nonlinear activation, the model simply performs a linear transformation. In sequence modeling, linear RNNs serve as universal approximators for linear functionals. However, they struggle to learn linear functionals with polynomial decaying memory. Given this context, one might naturally wonder if incorporating nonlinear activation could expand the hypothesis space in such a way as to overcome this `curse of memory' problem.

\cref{sec:Bernstein_type_approximation} indicates that the presence of nonlinear activation does not fundamentally alter the memory pattern of models. Consequently, for stable approximation, targets must demonstrate an exponentially decaying memory. The proofs also show that hardtanh/tanh RNNs require the eigenvalues to be bounded away from 0.

The findings in \cref{sec:Bernstein_type_approximation} lead to the proposal of stable parameterisation as a potential solution for long-memory learning. Stable parameterisation ensures that the real parts of eigenvalues for the recurrent weight matrix remain negative, even when the weights undergo perturbations. In \cref{fig:suitable_parametrization_enables_stable_approximation}, we demonstrate that linear RNNs can stably approximate linear functionals with polynomial decay memory when the recurrent weight matrix  is parameterised using exponential/softplus parameterisation. 
Moreover, by selecting linear functionals with either exponential or polynomial decay memory, we can clearly change the targets' memory structures.

\subsection{Point-wise continuity leads to decaying memory}
\label{subsec:point-wise-continuity-to-decaying-memory}

Here we give the proof of decaying memory based on the point-wise continuity of $\frac{dH_t}{dt}$ and boundedness and time-homogeneity of $\mathbf{H}$: 
\begin{proof}
$$\lim_{t \to \infty} \left | \frac{dH_t}{dt} (\mathbf{u}^x) \right | = \lim_{t \to \infty} \left |\frac{dH_0}{dt} (x \cdot \1_{\{s \geq -t\}}) \right | = \left |\frac{dH_0}{dt} (\mathbf{x}) \right | = 0.$$
The first equation comes from time-homogeneity. 
The second equation is derived from the point-wise continuity where input $\mathbf{x}$ means constant $x$ for all time $\mathbf{x} = x \cdot \1_{\{s \geq -\infty\}}$. 
Again by time-homogeneity, the output over constant input is the same: $H_t(\mathbf{x}) = H_s(\mathbf{x})$ for all $s, t$. Therefore $|\frac{dH_0}{dt} (\mathbf{x})|=0$.
\end{proof}

\subsection{Point-wise continuity of nonlinear RNNs}
\label{subsec:point-wise-continuity-of-nonlinear-RNNs}

We prove that nonlinear RNNs with Lipschitz continuous activations are point-wise continuous over bounded, piece-wise constant input sequences, including the Heaviside inputs.
\begin{proof}
    Without loss of generality, assume $t>0$. The following $|\cdot|$ refers to $p=\infty$ norm. 

    Assume the continuous inputs sequence converges to some bounded piece-wise constant input sequence $\mathbf{x}$ in point-wise sense: $\lim_{k \to \infty} \mathbf{x_\textrm{k}} = \mathbf{x}$. For any bounded interval $[-B, B]$, we can see the outputs are well-defined and the point-wise (in $t$) convergence holds: 

    Let $h_{k, t}$ and $h_t$ be the hidden states over $\mathbf{x_\textrm{k}}$ and $\mathbf{x}$.
    By definition and triangular inequality 
    $$\frac{d |h_{k,t} - h_t|}{dt} = |\sigma(W h_{k,t} + Ux_{k,t}) - \sigma(W h_t + Ux_t)| \leq L (|W| |h_{k,t} - h_t| + |U| |x_{k,t} - x_t|).$$
    Here $L$ is the Lipschitz constant of activation $\sigma$. 
    
    Apply Grönwall inequality to the above inequality, we have: 
    $$|h_{k,t} - h_t| \leq \int_0^t e^{L|W| (t-s)} L |U| |x_{k,s}-x_s| ds$$ 
    
    As the inputs are bounded, by dominated convergence theorem we have RHS converges to 0 therefore 
    $$\lim_{k \to \infty} |h_{k,t} - h_t| = 0.$$ 

    Therefore the point-wise convergence of $\frac{dH_t}{dt}$ can be achieved: $$\lim_{k \to \infty}|\frac{dy_{k,t}}{dt} - \frac{dy_t}{dt}| \leq \lim_{k \to \infty} |c| L( |W| |h_{k,t} - h_t| + |U| |x_{k,t} - x_t|)= 0.$$ 

    So we have: $\lim_{k \to \infty} \frac{dy_{k, t}}{dt} =\frac{dy_t}{dt}$. 
    Therefore $\frac{dH_t}{dt}$ are point-wise continuous functionals over ``bounded piece-wise constant input sequence''.
\end{proof}

\subsection{Comparison of decaying memory and fading memory}
\label{sec:fading_memory}

The targets with decaying memory is a common assumption in the sense that it can also be derived from the uniformly asymptotically incrementally stable assumption from \citet{hanson2020.UniversalSimulationStable}, which is demonstrated by `imperfect system models may still be capable of generating outputs that uniformly approximate the outputs of the original system over infinite time intervals`.

There is a concept of \textbf{fading memory} introduced in the nonlinear functional approximation with Volterra series \citep{boyd1984.AnalyticalFoundationsVolterra}. 
A functional is said to have fading memory if there exists a monotonically decreasing function $w: \mathbb{R}_+ \to (0, 1]$, $\lim_{t \to \infty} w(t) = 0$, such that for any $\epsilon > 0$, there exists a $\delta>0$ such that for any $\mathbf{x}, \, \mathbf{x}' \in \mathcal{X}$, 
\begin{equation}\label{eq:fmp}
    |H_t(\mathbf{x}) - H_t(\mathbf{x}')| < \epsilon
    \text{~whenever~}
    \sup_{s \in (-\infty, t]} |\mathbf{x}_s - \mathbf{x}'_s | w(t-s)
    <
    \delta.
\end{equation}

\remark{
In terms of the relations between fading memory and decaying memory, they are independent. 
On one hand, there exists a so-called peak-hold operator \citep{boyd1984.AnalyticalFoundationsVolterra} with a decaying memory (but not fading memory). 
On the other hand, it is possible for a linear functional to only have the fading memory (but no decaying memory), where it needs only the requirement $\int_{0}^\infty \frac{|\rho(s)|_{2}}{|w(s)|} ds < \infty$, which does not imply $\lim_{s \to \infty} |\rho(s)|_{2} = 0$.
}

\subsection{Inverse approximation theorem for linear-like activations}
\label{proof:theorems}

We hereafter call $\widetilde{\mathbf{H}}_m = \widehat{\mathbf{H}}(\cdot, \tilde{\theta}_m)$ to be the perturbed models. 

\modification{
We will briefly summarize the idea of the proof. 
Given that approximations are defined to be stable, the decaying memory property ensures that the derivative of the hidden states for the perturbed model approaches 0 as time $t \to \infty$. 
The decay rate of the memory function is characterised by the rate at which $\frac{d\tilde{h}_t}{dt}$ converges to 0.
Using the Hartman-Grobman theorem, we can obtain bounds on the eigenvalues of the matrices $W_m$. These bounds, in turn, determine the decay rate of $\frac{d\tilde{h}_t}{dt}$, leading to an exponential decay in the model's memory function.
Finally, since the models with uniformly exponential decaying memory can only approximate targets with exponential decay, the memory function of the nonlinear target functionals must also be decaying exponentially.
}

For simplicity of the notation, we strengthen the assumption from $\limsup_{m \to \infty}$ to $\lim_{m \to \infty}$.
To get the result for $\limsup_{m \to \infty}$, we only need to consider the subsequence of $m_k$ with a limit.

\begin{proof}
    Define the derivative of hidden states (for unperturbed model $\widehat{H}_t(\cdot, \theta_m)$) to be $v_{m, t} = \frac{dh_{m, t}}{dt}$, similarly, $\tilde{v}_{m, t}$ is the derivative of hidden states for perturbed models ($\widetilde{H}_t = \widehat{H}_t(\cdot, \tilde{\theta}_m)$), $\tilde{v}_{m, t} = \frac{d\tilde{h}_{m, t}}{dt}$. 
    
    Since each perturbed model has a decaying memory, by \cref{lemma:decaying_v}, we have
    \begin{equation}
        \lim_{t \to \infty} \tilde{v}_{m, t} = 0, \quad \forall m.
    \end{equation}
    In particular, for each $m$ and perturbation within the stability radius, there exists $t_0$ which depends on $m$ and $Z_0$ such that $|\tilde{v}_{m, t}|_{\infty} < Z_0, t \geq t_0$. 
    
    If the inputs are limited to Heaviside input, the derivative of perturbed models' hidden states $\tilde{v}_{m, t}$ satisfies the following dynamics:
    \begin{align}
        \frac{d\tilde{v}_{m, t}}{dt} & = \sigma'(\tilde{v}_{m, t}) \circ (\widetilde{W}_m \tilde{v}_{m, t}) = c_{\sigma} \widetilde{W}_m \tilde{v}_{m, t}, \quad t \geq t_0 \\
        \tilde{v}_{m, t_0} & = \sigma(\widetilde{W}h_{t_0} + \widetilde{U}_m x_{t_0} + \tilde{b}_m), \quad |\tilde{v}_{m, t_0}|_{\infty} < Z_0.
    \end{align}
    Select a sequence of perturbed recurrent matrices $\{\widetilde{W}_{m, k}\}_{k=1}^\infty$ satisfying the following two properties:
    \begin{enumerate}
        \item $c_{\sigma} \widetilde{W}_{m, k}$ is Hyperbolic, which means the real part of the eigenvalues of the matrix are nonzero.
        \item $\lim_{k \to \infty} (\widetilde{W}_{m, k} - W_m) = \beta_0 I_m$. (Notice that $|\widetilde{W}_{m, k} - W_m|_2 \leq \beta_0 I_m$ for all $k$)
    \end{enumerate}
    Moreover, by \cref{lemma:Hartman_Grobman}, we know the each hyperbolic matrix $c_{\sigma} \widetilde{W}_{m, k}$ is Hurwitz as the system for $\tilde{v}_{m, t}$ is asymptotically stable.
    \begin{align}
        \max_{ i \in [m]}(\textrm{Re}(\lambda_i(\widetilde{W}_m))) & < 0, \\
        \max_{ i \in [m]}(\textrm{Re}(\lambda_i(W_m)))             & < -\beta_0. 
    \end{align}
    
    Therefore the original unperturbed recurrent weight matrix $W_m$ satisfies the following eigenvalue inequality \textbf{uniformly} in $m$.
    \begin{equation}
        \sup_m \max_{ i \in [m]}(\textrm{Re}(\lambda_i(W_m))) < -\beta_0.
    \end{equation}

    As the memory functions of the perturbed models are uniformly decaying, we consider a subclass of perturbed models where only the weight of linear readout map $c_m$ are perturbed. 
    In other words, we are considering perturbed models with weights $(W_m, U_m, b_m, \tilde{c}_m)$. 
    Since the supremum of perturbed model memory functions are decaying:
    \begin{eqnarray}
        \lim_{t \to \infty} \sup_{m} \left | \frac{d}{dt}\tilde{y}_{m,t}\right | = 0.
    \end{eqnarray}
    For any $\epsilon > 0$, 
    there exists a $T_\epsilon > 0$ such that for all $t > T_\epsilon$, 
    \begin{equation}
        \sup_{|\tilde{c}_m-c_m|_{2}\leq \beta_0} \left | \frac{d}{dt}\tilde{y}_{m,t} \right | = \sup_{|\tilde{c}_m-c_m|_{2}\leq \beta_0} |\tilde{c}_m^\top v_{m, t}| < \epsilon.
    \end{equation}
    We have
    \begin{align}
        |c_m v_{m, t}| + \beta_0 |v_{m, t}|_2 & < \epsilon, \\
        |v_{m, t}|_2 & < \frac{\epsilon}{\beta_0}, \qquad t > T_{\epsilon}.
    \end{align}
    Select $\epsilon = \frac{1}{2} \beta_0 Z_0$, then we have the \textbf{unperturbed} derivative $|v_{m, t}|_2 < Z_0$ for $t > T_{\epsilon}$. 
    Therefore the dynamics of $v_{m,t}$ is exactly the linear system in $t > T_{\epsilon}$. 
    It can be seen that for $t \geq T_{\epsilon}$
    \begin{align}
        |v_{m, t}|_2 
        & = |e^{W_m (t - T_{\epsilon})} v_{m, T_{\epsilon}}|_2 \\
        & \leq |e^{W_m (t - T_{\epsilon})}|_2 |v_{m, T_{\epsilon}}|_2 \\
        & \leq e^{-\beta_0(t - T_{\epsilon})} |v_{m, T_{\epsilon}}|_2
    \end{align}
    The model memory decays exponentially: for $t \geq T_{\epsilon}$,
    \begin{align}
        |c_m^\top v_{m, t}| 
        & \leq |c_m|_2 |v_{m, T_{\epsilon}}|_2 e^{-\beta_0(t - T_{\epsilon})} \\
        & \leq |c_m|_{2} Z_0 e^{-\beta_0(t - T_{\epsilon})} \\
        & = |c_m|_{2} Z_0 e^{\beta_0 T_{\epsilon}} e^{-\beta_0 t}
    \end{align}
    For $t \in [0, T_{\epsilon}]$, as the memory function is continuous, there exists a constant $\displaystyle C':= \sup_{t \in [0, T_{\epsilon}]} e^{\beta_0 t} |c_m^\top v_{m, t}| \leq e^{\beta_0 T_{\epsilon}} Z_0 \sup_{m} |c_m|_2 $ such that 
    \begin{align}
        |c_m^\top v_{m, t}| 
        & \leq C' e^{-\beta_0 t}, \quad t \in [0, T_{\epsilon}]
    \end{align}
    Therefore, there exists constants $C_0 := \max(\sup_m |c_m|_2 \cdot Z_0 e^{\beta_0 T_{\epsilon}} , C')$ such that for any $m$,
    \begin{equation}
        \mathcal{M}(\widehat{\mathbf{H}}_m)(t) = \sup_{x \neq 0} \frac{1}{|x|_{\infty} + 1} \left |\frac{d}{dt} y_{m, t} \right | = |c_m^\top v_{m, t}| \leq C_0 e^{-\beta_0 t}, \quad t \geq 0.
    \end{equation}

    Last, by \cref{lemma:model_decay_target_decay}, the target $\mathbf{H}$ has an exponentially decaying memory as it is approximated by a sequence of models $\{\widehat{\mathbf{H}}_m\}_{m=1}^\infty$ with uniformly exponentially decaying memory.    
\end{proof}

\subsection{Inverse approximation theorem for tanh-like activations}
\label{thm:main_result_tanh}

Next we give the proof for RNNs using tanh-like \modification{activations}.
\begin{proof}
    Similar to the previous proof, we still have
    \begin{equation}
        \lim_{t \to \infty} \tilde{v}_{m, t} = 0, \quad \forall m.
    \end{equation}
    In particular, for each $m$ and perturbation within the stability radius, there exists $t_0$ such that $|\tilde{v}_{m, t}|_{\infty} < Z_0, t \geq t_0$. 

    Now, the perturbed hidden states satisfies the following dynamics:
    \begin{align}
        \frac{d\tilde{v}_{m, t}}{dt} 
        & = \sigma_{\textrm{tanh}}'(\tilde{v}_{m, t}) \circ \widetilde{W}_m \tilde{v}_{m, t} 
        = (I - \textrm{Diag}(\tilde{v}_{m, t} )^2)\widetilde{W}_m \tilde{v}_{m, t} , \\
        \tilde{v}_{m, 0} & = \sigma_{\textrm{tanh}}(\widetilde{U}_m x_0 + \tilde{b}_m).
    \end{align}
    Select a sequence of perturbed recurrent matrices $\{\widetilde{W}_{m, k}\}_{k=1}^\infty$ satisfying the following two properties:
    \begin{enumerate}
        \item $\widetilde{W}_{m, k}$ is Hyperbolic, which means the real part of the eigenvalues of the matrix are nonzero.
        \item $\lim_{k \to \infty} (\widetilde{W}_{m, k} - W_m) = \beta_0 I_m$.
    \end{enumerate}
    By \cref{lemma:Hartman_Grobman}, we know each hyperbolic matrix $\widetilde{W}_{m, k}$ is Hurwitz as the target functional sequence has a stable approximation.
    Similarly, we have the following uniform bound on eigenvalues of $\{W_m\}$:
    \begin{equation}
        \sup_m \max_{ i \in [m]}(\textrm{Re}(\lambda_i(W_m))) \leq -\beta_0.
    \end{equation}
    
    Since every $W_m$ is Hurwitz matrix, consider the corresponding (continuous) Lyapunov equation \modification{\citep{khalil2002.NonlinearSystemsThird}}
    \begin{equation}
        W_m^T P_m + P_m W_m = -Q_m.
    \end{equation}
    For simplicity, we select the matrix $Q_m = I_m$. 
    For any positive definite matrix $Q_m$, it is known that $P_m$ has an explicit integral form:
    \begin{equation}
        P_m = \int_0^\infty e^{W_m^\top t} Q_m e^{W_m t} dt = \int_0^\infty e^{W_m^\top t} I_m e^{W_m t} dt.
    \end{equation}
    By \cref{lemma:bound_P_m_2_norm}, we know $P_m$ has a uniformly bounded $L_2$ norm
    \begin{equation}
        \sup_{m} \|P_m\|_2 \leq \frac{1}{2\beta_0}.
    \end{equation}

    We construct a Lyapunov function $V(v) = v^T P_m v \leq \frac{1}{2 \beta_0} |v|_2^2$, which satisfies the following differential equation:
    \begin{equation}
    \begin{aligned}
        \frac{dV(v_{m, t})}{dt} 
        = & v_{m, t}^\top (W_m^\top (I-\textrm{Diag}(v_{m, t})^2) P_m + P_m (I-\textrm{Diag}(v_{m, t})^2) W_m) v_{m, t} \\
        = & v_{m, t}^\top (W_m^\top P_m + P_m W_m) v_{m, t} \\
        & -  v_{m, t}^\top (W_m^\top \textrm{Diag}(v_{m, t})^2 P_m + P_m \textrm{Diag}(v_{m, t})^2 W_m) v_{m, t} \\
        = & - |v_{m, t}|_2^2 - v_{m, t}^\top (W_m^\top \textrm{Diag}(v_{m, t})^2 P_m + P_m \textrm{Diag}(v_{m, t})^2 W_m) v_{m, t}.
    \end{aligned}
    \end{equation}
    By \cref{lemma:uniform_bound_on_PWD}, for any positive $L \in (0, 1)$, there is an $\Upsilon_L = \sqrt{\frac{\beta_0 L}{M_0}} > 0$ such that the following inequality holds for any $m$,
    \begin{equation}
        |v_{m, t}^\top(W_m^\top \textrm{Diag}(v_{m, t})^2 P_m + P_m \textrm{Diag}(v_{m, t})^2 W_m) v_{m, t}| \leq L |v_{m, t}|_2^2, \quad \forall |v_{m, t}|_2 \leq \Upsilon_L.
    \end{equation}

    As the memory functions of the perturbed models are uniformly decaying, we consider a subclass of perturbed models where only the linear readout map $c_m$ are perturbed. 
    Since the limit of perturbed model memory functions are decaying:
    \begin{eqnarray}
        \lim_{t \to \infty} \sup_{m} \left |\frac{d}{dt}\tilde{y}_{m,t} \right | = 0.
    \end{eqnarray}
    For any $\epsilon > 0$, 
    there exists a $T_\epsilon > 0$ such that for all $t > T_\epsilon$
    \begin{equation}
        \sup_{|\tilde{c}_m-c_m|_{2}\leq \beta_0} \left | \frac{d}{dt}\tilde{y}_{m,t} \right | = \sup_{|\tilde{c}_m-c_m|_{2}\leq \beta_0} |\tilde{c}_m^\top v_{m, t}| < \epsilon.
    \end{equation}
    We have
    \begin{align}
        |c_m v_{m, t}| + \beta_0 |v_{m, t}|_2 & < \epsilon, \\
        |v_{m, t}|_2 & < \frac{\epsilon}{\beta_0}.
    \end{align}
    Select $\epsilon < \beta_0 \Upsilon_L$, we have 
    \begin{equation}
        |v_{m, t}|_2 < \frac{\epsilon}{\beta_0} < \Upsilon_L.
    \end{equation}
    The Lyapunov function satisfies the following inequality for $t \geq T_{\epsilon}$
    \begin{align}
        \frac{dV(v_{m, t})}{dt} 
        = & - |v_{m, t}|_2^2 - v_{m, t}^\top (W_m^\top \textrm{Diag}(v_{m, t})^2 P_m + P_m \textrm{Diag}(v_{m, t})^2 W_m) v_{m, t}\\
        \leq & - |v_{m, t}|_2^2 + L |v_{m, t}|_2^2 \\ 
        = & - (1-L) |v_{m, t}|_2^2 \\ 
        \leq & - 2(1 - L)\beta_0 V(v_{m, t}). 
    \end{align}
    The Lyapunov function $V(v_{m, t})$ is decaying exponentially for $\epsilon < \beta_0 \Upsilon_L, t \geq T_{\epsilon}$,
    \begin{align}
        V(v_{m, t}) & \leq e^{- 2(1 - L)\beta_0 (t - T_{\epsilon})} V(v_{m, T_{\epsilon}}).
    \end{align}
    Notice that $|v_{m, t}|_2 < \frac{\epsilon}{\beta_0} < \Upsilon_L$. Therefore 
    \begin{align}
        V(v_{m, t}) & \leq e^{- 2(1 - L)\beta_0 (t - T_{\epsilon})} \cdot \frac{1}{2\beta_0} \Upsilon_L^2.
    \end{align}

    Since the model weights are uniformly bounded, there is a constant $M$ such that 
    \begin{equation}
        \sup_m |W_m|_2 = M < \infty.
    \end{equation}
    Since $|I_m|_2 = 1 \leq 2|W_m|_2 |P_m|_2$, this implies
    \begin{equation}
        \min_m |P_m|_2 \geq \frac{1}{2M} > 0.
    \end{equation}
    
    As $V(v_{m, t}) \geq \frac{1}{2M} |v_{m, t}|_2^2$, there exists a uniform constant $C_0 := \sqrt{\frac{M}{\beta_0}} \Upsilon_L = \sqrt{\frac{M L}{M_0}}$ such that
    \begin{equation}
        |v_{m, t}|_{2} \leq C_0 e^{- (1 - L)\beta_0 (t - T_{\epsilon})}, \quad \forall t > T_\epsilon.
    \end{equation}
    Since the model weights $c_m$ are uniformly (in $m$) bounded $C_1 := \sup_{m} |c_m|_2 < \infty$. 
    Therefore the model memories are uniformly (in $m$) exponentially decaying
    \begin{equation}
        \sup_{m} \mathcal{M}(\widehat{\mathbf{H}}_m)(t) = \sup_m \sup_{x \neq 0} \frac{\left| \frac{d}{dt} \widehat{H}_{m, t}(\mathbf{u}^x)\right|}{|x|_{\infty} + 1} \leq C_0 C_1 e^{- (1 - L)\beta_0 (t - T_{\epsilon})}.
    \end{equation}
    Notice that $T_\epsilon$ is independent of $m$, it only depends on $\epsilon$. 
    
    Take $L \to 0$, for any $\beta < \beta_0$ there exists a constant $C^*$ which depends on $\beta$ such that
    \begin{equation}
        \sup_{m} \mathcal{M}(\widehat{\mathbf{H}}_m)(t) \leq C^* e^{-\beta t}.
    \end{equation}

    Last, by \cref{lemma:model_decay_target_decay}, the target has exponentially decaying memory as it is approximated by a sequence of models with uniformly exponentially decaying memory. 
\end{proof}

\subsection{Proofs of Lemmas}

In the following section we include the lemmas used in the proof of main theorems. 
\cref{lemma:model_decay_target_decay} is used in the proof for \cref{thm:main_result_hardtanh}.
\cref{lemma:decaying_v,lemma:Hartman_Grobman,lemma:bound_P_m_2_norm,lemma:uniform_bound_on_PWD,lemma:model_decay_target_decay} are applied in \cref{thm:main_result_tanh}.

\begin{lemma}
\label{lemma:decaying_v}
    Assume the target functional sequence has a $\beta_0$-stable approximation and the perturbed model has a decaying memory,
    we show that $\tilde{v}_{m, t} \to 0$ for all $m$.
\end{lemma}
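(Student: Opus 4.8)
The plan is to exploit the fact that the readout vector does not enter the hidden-state ODE, so that the decay of the memory function — which a priori only constrains a single scalar projection of $\tilde v_{m,t}$ — can be upgraded to decay of the full vector by varying the readout within the stability ball. The starting observation is that for the Heaviside input $\mathbf{u}^{x}$ one has $\frac{d}{dt}\widehat H_{m,t}(\mathbf{u}^{x}) = c^\top \tilde v_{m,t}(x)$, where the derivative of the hidden state $\tilde v_{m,t}(x)$ is determined solely by the recurrent part $(\widetilde W_m,\widetilde U_m,\tilde b_m)$ and the amplitude $x$, and is independent of the readout $c$. Hence the memory function of any perturbed model equals $\sup_{x\neq 0}|c^\top \tilde v_{m,t}(x)|/|x|_\infty$, and the decaying-memory hypothesis tells us only that $c^\top \tilde v_{m,t}(x)\to 0$ along the one direction $c$.

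To recover the remaining directions I would fix the recurrent part $(\widetilde W_m,\widetilde U_m,\tilde b_m)$ of the given perturbed model and vary only the readout. Because the parameter norm is the max-norm $|\theta|_2=\max(|W|_2,|U|_2,|b|_2,|c|_2)$, and the given perturbation already satisfies $|\widetilde W_m - W_m|_2\le\beta_0$, $|\widetilde U_m - U_m|_2\le\beta_0$, $|\tilde b_m - b_m|_2\le\beta_0$, every readout $c'$ with $|c'-c_m|_2\le\beta_0$ produces a model $(\widetilde W_m,\widetilde U_m,\tilde b_m,c')$ that is still inside the stability ball, and therefore still has decaying memory by hypothesis. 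All of these models share the same $\tilde v_{m,t}(x)$, so fixing an amplitude $x$ and comparing the two admissible readouts $c_m$ and $c_m+\beta_0 e_i$ (both within $\beta_0$ of $c_m$) gives $\beta_0 e_i^\top \tilde v_{m,t}(x) = (c_m+\beta_0 e_i)^\top\tilde v_{m,t}(x) - c_m^\top\tilde v_{m,t}(x)\to 0$, so each coordinate $(\tilde v_{m,t}(x))_i\to 0$ as $t\to\infty$.

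The step I expect to be the crux is precisely this passage from the one-dimensional information carried by the memory function to the full hidden-derivative vector: naively one would need the projection decay to be uniform over all directions in the ball, since the worst-case direction $\tilde v_{m,t}/|\tilde v_{m,t}|_2$ depends on $t$ and an interchange of $\sup$ and $\lim_{t\to\infty}$ is not automatic. The resolution I would use is finite-dimensionality of the hidden state: since $\tilde v_{m,t}\in\mathbb{R}^m$, it suffices to test against the $m$ coordinate directions $e_1,\dots,e_m$, whence $|\tilde v_{m,t}(x)|_2^2=\sum_{i=1}^m (\tilde v_{m,t}(x))_i^2\to 0$ as a finite sum of vanishing terms, giving $\tilde v_{m,t}\to 0$ for each $m$ and each $x$. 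This avoids any uniform-in-direction argument, and the role of the $\beta_0$-stable approximation hypothesis is exactly to guarantee that a ball of readout perturbations of radius $\beta_0$ is available so that the coordinate directions can be probed. The solution is well defined for all $t$ by \cref{sec:well-definedness-of-ode}, so no difficulty arises from the small-$t$ (possibly nonlinear) regime, the statement being purely asymptotic.
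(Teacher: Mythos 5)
Your proposal is correct and follows essentially the same route as the paper's proof: both exploit that the hidden-state derivative $\tilde v_{m,t}$ is independent of the readout, so that varying the readout within the $\beta_0$-ball (which is admissible because the parameter norm is a max over the weight components) yields decaying projections along finitely many directions spanning $\mathbb{R}^m$. The only cosmetic difference is that the paper picks perturbed readouts $c_m+\Delta c_1,\dots,c_m+\Delta c_m$ forming a basis and decomposes an arbitrary vector in that basis, whereas you take differences $(c_m+\beta_0 e_i)-c_m$ to isolate the coordinates directly; the two arguments are equivalent.
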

\begin{proof}
    For any $m$, fix $\widetilde{W}_m$ and $\widetilde{U}_m$. 
    Since the perturbed model has a decaying memory, 
    \begin{equation}
        \lim_{t \to \infty} \left | \frac{d}{dt}\widetilde{H}_m (\mathbf{u}^x) \right | = \lim_{t \to \infty} \left | \tilde{c}_m^\top \tilde{v}_{m, t} \right | = 0.
    \end{equation}
    According to the definition of stable approximation about perturbation, as $t \to \infty$, $\tilde{v}_{m, t}$ vanishes over the projection to $\tilde{c}_m$ for any $|\tilde{c}_m - c_m|_{\infty} \leq \beta$. 
    By linear algebra, there exist $\{\Delta c_i\}_{i=1}^m$, $|\Delta c_i|_{\infty} < \beta$ such that $c_m+ \Delta c_1$, \dots, $c_m + \Delta c_m$ form a basis of $\mathbb{R}^m$.
    We can then decompose any vector $u$ into
    \begin{equation}
        u = k_{1} (c_m+ \Delta c_1) + \cdots + k_{m} (c_m+ \Delta c_m).
    \end{equation}
    Take the inner product of $u$ and $\tilde{v}_{m, t}$, we have
    \begin{equation}
        \lim_{t \to \infty} u^\top \tilde{v}_t = \sum_{i=1}^m k_i \lim_{t \to \infty} (c_m+ \Delta c_i)^\top \tilde{v}_t  = 0
    \end{equation}
    As the above result holds for any vector $u$, we get
    \begin{equation}
        \lim_{t \to \infty} \tilde{v}_{m, t} = 0.
    \end{equation}
\end{proof}

\begin{lemma}
\label{lemma:Hartman_Grobman}
    Consider a dynamical system with the following dynamics:
    \begin{equation}\label{eq:hartman_grobman}
    \begin{aligned}
        \frac{dv_{t}}{dt} & = \textrm{Diag}(\sigma'(v_t)) W v_{t}, \\
        v_{0} & = \sigma(U x_0 + b).
    \end{aligned}
    \end{equation}
    If $W \in \mathbb{R}^{m \times m}$ is hyperbolic and the system in \cref{eq:hartman_grobman} is asymptotically stable over any bounded Heaviside input $|x_0|_{1} < B$,
    then the matrix $W$ is Hurwitz.
\end{lemma}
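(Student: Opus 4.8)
The plan is to linearize the autonomous system \eqref{eq:hartman_grobman} at the equilibrium $v = 0$ and transfer the local qualitative behaviour of the nonlinear flow to that of its linearization via the Hartman--Grobman theorem. Writing the right-hand side as $f(v) := \textrm{Diag}(\sigma'(v)) W v$, I would first check that $f(0) = 0$, so that $v=0$ is an equilibrium, and compute the Jacobian there. Since $f_i(v) = \sigma'(v_i)(Wv)_i$, we have $\partial f_i / \partial v_k = \sigma''(v_i)\,\delta_{ik}\,(Wv)_i + \sigma'(v_i)\,W_{ik}$, and evaluating at $v=0$ (where $(Wv)_i = 0$) gives $Df(0) = \sigma'(0)\,W$. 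For the activations considered here $\sigma'(0) > 0$, so $Df(0)$ is a positive scalar multiple of $W$; its eigenvalues are $\sigma'(0)$ times those of $W$, whence $\textrm{Re}(\lambda_i(Df(0))) = \sigma'(0)\,\textrm{Re}(\lambda_i(W))$, and in particular $Df(0)$ is hyperbolic precisely because $W$ is.

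Next I would invoke Hartman--Grobman: as $v=0$ is a hyperbolic equilibrium of \eqref{eq:hartman_grobman}, the flow in a neighbourhood of the origin is topologically conjugate to that of the linear system $\dot u = \sigma'(0) W u$. The argument then proceeds by contradiction. Suppose $W$ is not Hurwitz; by hyperbolicity no eigenvalue sits on the imaginary axis, so $W$, and hence $Df(0)$, must have an eigenvalue with strictly positive real part. The linearized system therefore possesses a nontrivial unstable subspace, and by the conjugacy the nonlinear system admits a local unstable manifold through $0$: there exist initial states arbitrarily close to the origin whose trajectories leave a fixed neighbourhood and fail to converge to $0$. This contradicts the hypothesis that the system is asymptotically stable over bounded Heaviside inputs, under which every trajectory of interest satisfies $v_t \to 0$. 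Hence no eigenvalue of $W$ has positive real part, and combined with hyperbolicity every eigenvalue has strictly negative real part, i.e. $W$ is Hurwitz.

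The step I expect to be the main obstacle is rigorously matching the escaping trajectories supplied by Hartman--Grobman with those actually covered by the asymptotic-stability hypothesis, since the admissible initial states are constrained to the form $v_0 = \sigma(U x_0 + b)$ with $|x_0|_1 < B$ rather than to an arbitrary neighbourhood of $0$. I would address this by showing that, as $x_0$ (and, in the perturbed-model setting of the main proof, the weights $U$ and $b$) range over the admissible set, the reachable initial states $v_0$ fill a neighbourhood of the origin, so that they necessarily intersect the unstable manifold at points arbitrarily close to $0$; the corresponding trajectory then violates asymptotic stability, closing the argument. Equivalently, reading the hypothesis as asymptotic stability of the equilibrium on a full neighbourhood, the conclusion follows from the instability half of Lyapunov's first method, of which Hartman--Grobman is the sharper topological form.
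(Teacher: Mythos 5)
Your proposal follows essentially the same route as the paper's proof: both linearize at the hyperbolic equilibrium $v=0$, invoke the Hartman--Grobman theorem to get a local unstable manifold whenever $W$ has an eigenvalue with positive real part, and contradict the assumed asymptotic stability, handling the admissible-initial-state issue by arguing that $\{\sigma(Ux_0+b) : |x_0|_1 < B\}$ covers a ball around the origin (the paper does this via the openness of $\sigma$ near $0$). Your explicit Jacobian computation $Df(0)=\sigma'(0)\,W$ is marginally more general than the paper's write-up, which only spells out the hardtanh case where the dynamics is exactly linear near the origin, but the argument is the same.
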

\begin{proof}
    When $\sigma$ is the hardtanh activation, $\sigma'(z) = 1$ for $|z| \leq 1$, $\textrm{Diag}(\sigma'(v_{m, t})) = I_m$.
    Also, $\sigma^{-1}$ is continuous at $0$, therefore $\sigma$ is an open mapping around $0$. 
    For sufficiently large $B$, there exists $\delta_m > 0$ such that a small ball centered at $0$ is contained in the initializations that are stable for the system in \cref{eq:hartman_grobman}. 
    \begin{equation}
    \begin{aligned}
        v_0 \in B(0, \delta_m) \subseteq \{\sigma(U x_0 + b): |x_0|_1 < B\}.
    \end{aligned}
    \end{equation}
    
    Since $\lim_{t \to \infty} v_t = 0$ for any $v_{0} \in B(0, \delta_m)$, it implies the local asymptotic stability at the origin. 
    If $W$ has an eigenvalue with a positive real part, according to Hartman-Grobman theorem, $\tilde{v}_t$ has an unstable manifold locally at the origin. 
    However, this contradicts the asymptotic stability around the origin with the initialization $v_0$ in $B(0, \delta_m) \subset \mathbb{R}^m$.
\end{proof}

\begin{lemma}
\label{lemma:bound_P_m_2_norm}
    Given a sequence of Lyapunov equation with Hurwitz matrices $\{W_m\}_{m=1}^\infty$
    \begin{equation}
        W_m^\top P_m + P_m W_m = -I_m.
    \end{equation}
    Assume the eigenvalues of $W_m$ are uniformly bounded away from $0$:
    \begin{equation}
        \sup_{i \in [m]} \textrm{Re}(\lambda_i(W_m)) \leq -\beta_0.
    \end{equation}
    Show that the $L_2$ norm of $P_m$ are uniformly bounded
    \begin{equation}
        \sup_m \|P_m\|_2 \leq \frac{1}{2\beta_0}.
    \end{equation}
\end{lemma}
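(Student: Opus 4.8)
The plan is to exploit the explicit integral representation recorded just above the lemma, namely $P_m = \int_0^\infty e^{W_m^\top t} e^{W_m t}\,dt$. First I would verify that this integral solves $W_m^\top P_m + P_m W_m = -I_m$: differentiating the integrand gives $\frac{d}{dt}\big(e^{W_m^\top t} e^{W_m t}\big) = W_m^\top e^{W_m^\top t} e^{W_m t} + e^{W_m^\top t} e^{W_m t} W_m$, so integrating over $[0,\infty)$ telescopes to $\big[e^{W_m^\top t} e^{W_m t}\big]_0^\infty = -I_m$, since the integrand vanishes at infinity (each $W_m$ is Hurwitz) and equals $I_m$ at $t=0$. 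This both confirms well-posedness and exhibits $P_m$ as symmetric positive semidefinite.

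Since $P_m$ is symmetric, its operator $2$-norm equals its largest eigenvalue, which equals $\sup_{|v|_2=1} v^\top P_m v$. Testing the integral form against a unit vector $v$ yields the clean identity
\begin{equation}
    v^\top P_m v = \int_0^\infty \big| e^{W_m t} v \big|_2^2 \, dt,
\end{equation}
so the whole problem reduces to controlling the decay of the flow $t \mapsto |e^{W_m t} v|_2$ uniformly in $m$. If one can establish the pointwise bound $\|e^{W_m t}\|_2 \leq e^{-\beta_0 t}$, then $v^\top P_m v \leq \int_0^\infty e^{-2\beta_0 t}\,dt = \tfrac{1}{2\beta_0}$ for every unit $v$, and taking suprema over $v$ and then over $m$ gives exactly $\sup_m \|P_m\|_2 \leq \tfrac{1}{2\beta_0}$.

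The hard part is precisely the uniform exponential bound on $\|e^{W_m t}\|_2$. The hypothesis is phrased on the \emph{spectral abscissa} $\max_i \textrm{Re}(\lambda_i(W_m)) \leq -\beta_0$, but for a non-normal matrix the spectral abscissa does not control the transient growth of $\|e^{W_m t}\|_2$; the governing quantity is the \emph{numerical abscissa}, i.e.\ the largest eigenvalue of the symmetric part $S_m := \tfrac{1}{2}(W_m + W_m^\top)$. Indeed, writing $w(t) = e^{W_m t}v$ one has $\tfrac{d}{dt}|w|_2^2 = 2\, w^\top S_m w \leq 2\lambda_{\max}(S_m)\,|w|_2^2$, so $\|e^{W_m t}\|_2 \leq e^{\lambda_{\max}(S_m) t}$, and the bound $e^{-\beta_0 t}$ follows once $\lambda_{\max}(S_m) \leq -\beta_0$. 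I would therefore either strengthen the hypothesis to $\lambda_{\max}(S_m)\leq-\beta_0$, or, to stay at the level of the Lyapunov equation, sandwich it at the top eigenpair of $P_m$: if $P_m v = \|P_m\|_2\,v$ with $|v|_2 = 1$, then $v^\top(W_m^\top P_m + P_m W_m)v = -1$ collapses to $2\|P_m\|_2\,(v^\top S_m v) = -1$, whence $\|P_m\|_2 = \tfrac{1}{2\,|v^\top S_m v|} \leq \tfrac{1}{2\beta_0}$ precisely when $v^\top S_m v \leq -\beta_0$. This identity makes transparent that the claimed constant $\tfrac{1}{2\beta_0}$ is dictated by the symmetric part, and it is the point I expect to require the most care in a fully rigorous argument.
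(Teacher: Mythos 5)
Your proposal is sound as far as it goes, and the point at which you stop --- the passage from the spectral abscissa of $W_m$ to uniform decay of $\|e^{W_m t}\|_2$ --- is precisely where the paper's own proof breaks down. The paper uses the same integral representation and the same identity $u^\top P_m u = \int_0^\infty \|e^{W_m t}u\|_2^2\,dt$, but then expands a unit vector $u=\sum_i c_i v_i$ in the eigenbasis of $W_m$, asserts $\sum_i c_i^2 = 1$, and evaluates $\|e^{W_m t}u\|_2^2 = \sum_i c_i^2 e^{2\lambda_i t}\|v_i\|_2^2$. Both steps silently discard the cross terms $c_i c_j\, v_i^\top v_j$, which is legitimate only when the eigenvectors are orthonormal, i.e.\ only when $W_m$ is normal; the paper's treatment of repeated eigenvalues even claims the eigenvectors can be ``constructed to be orthogonal to each other,'' which is impossible for a non-normal matrix. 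So the paper's argument implicitly carries the extra hypothesis you flagged as necessary, and your skepticism is justified: for non-normal matrices the stated bound is false. Taking
\begin{equation}
W=\begin{pmatrix}-\beta_0 & M\\ 0 & -\beta_0\end{pmatrix},
\qquad
P=\int_0^\infty e^{W^\top t}e^{Wt}\,dt,
\end{equation}
one computes $e^{Wt}=e^{-\beta_0 t}\begin{pmatrix}1 & Mt\\ 0 & 1\end{pmatrix}$, so the $(2,2)$ entry of $P$ equals $\tfrac{1}{2\beta_0}+\tfrac{M^2}{4\beta_0^3}$; hence $\|P\|_2>\tfrac{1}{2\beta_0}$ for every $M\neq 0$ even though the spectral abscissa is exactly $-\beta_0$, and the violation grows without bound in $M$. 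The spectral hypothesis alone therefore cannot yield the claimed constant.

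Your two proposed repairs are the correct ones. Under the numerical-abscissa hypothesis $\lambda_{\max}\bigl(\tfrac12(W_m+W_m^\top)\bigr)\le-\beta_0$, your top-eigenpair identity $2\|P_m\|_2\,(v^\top S_m v)=-1$ proves the bound in one line, and it is cleaner than the paper's eigen-expansion since it needs neither diagonalizability nor distinct eigenvalues. Alternatively, restricting to normal $W_m$ makes the spectral and numerical abscissas coincide, which is the only regime in which the paper's computation is actually valid. Either way, the constant $\tfrac{1}{2\beta_0}$ is dictated by the symmetric part of $W_m$, exactly as you concluded; note that this weakness propagates to the paper's tanh-case theorem, whose proof invokes this lemma for general (not necessarily normal) recurrent weight matrices.
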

\begin{proof}
    By the theory of Lyapunov equation, it can be verified that the symmetric positive-definite matrix
    \begin{equation}
    \label{eq:lyapunov_p}
        P_m = \int_0^\infty e^{W_m^\top t} I_m e^{W_m t} dt
    \end{equation}
    is the solution to the above Lyapunov equation. 

    Assume $W_m$'s eigenvalue-eigenvector couples are $(\lambda_i, v_i), i \in [m]$. 
    Without loss of generality, we assume $v_i$ are all unit eigenvectors. 
    \begin{equation}
        \|v_i\|_2 = 1, \quad 1 \leq i \leq m.
    \end{equation}
    For simplicity, we first assume the eigenvalues are distinct. 
    Therefore, any unit vector $u$ can be decomposed into linear combination of eigenvectors 
    \begin{equation}
        u = \sum_{i=1}^m c_i v_i, \quad \sum_{i=1}^m c_i^2 = 1.
    \end{equation}

    We have
    \begin{align}
        u^\top P_m u 
        & = \int_0^\infty \|e^{W_m t} u\|_2^2 dt \\
        & = \int_0^\infty \sum_{i=1}^m c_i^2 e^{2 \lambda_i t} \|v_i\|_2^2 dt \\
        & = \int_0^\infty \sum_{i=1}^m c_i^2 e^{2 \lambda_i t} dt \\
        & \leq \int_0^\infty \sum_{i=1}^m c_i^2 e^{-2 \beta_0 t} dt \\
        & = \int_0^\infty e^{-2 \beta_0 t} dt = \frac{1}{2\beta_0}.
    \end{align}
    Therefore, $\|P_m\|_2 \leq \frac{1}{2\beta_0}$. 
    Notice the bound on $L_2$ norm is uniform as the eigenvalue bound on $W_m$ is uniform. 

    Next, if $W_m$ has repeated eigenvalues, we can consider $W_m$ with slight perturbations such that the perturbed $\widetilde{W}_m$ are diagonalizable.
    In this case the eigenvalues might not be distinct but it's feasible to construct the eigenvectors to be orthogonal to each other. 
    We can bound the perturbed matrix's $L_2$ norm with the previous argument. 
    The corresponding $\widetilde{P}_m$ converges to $P_m$ by the continuity of explicit form in \cref{eq:lyapunov_p}. 
\end{proof}

\begin{lemma}
    \label{lemma:uniform_bound_on_PWD}
        Assume $\{W_m \in \mathbb{R}^{m \times m}\}_{m=1}^\infty$ is a sequence of Hurwitz matrices with eigenvalues bounded away from 0:
        \begin{equation}
            \sup_m \max_{ i \in [m]}(\textrm{Re}(\lambda_i(W_m))) \leq -\beta_0.
        \end{equation}
        Assume the \textbf{max norms} for matrices $\{W_m\}_{m=1}^\infty$ are uniformly bounded: 
        \begin{eqnarray}
            \sup_m \sup_{1 \leq i, j \leq m} |W_{m, ij}| \leq M_0.
        \end{eqnarray}
        Define the solution to the following Lyapunov equation to be $P_m$:
        \begin{eqnarray}
            W_m^\top P_m+ P_m W_m = -I_m. 
        \end{eqnarray}
    
        We show that for any positive constant $L$, there exists an $\Upsilon > 0$ such that the following inequality holds for any $m$ and $|v_m|_2 \leq \Upsilon$:
        \begin{equation}
            |v_m^\top ( W_m^\top \textrm{Diag}(v_m)^2 P_m + P_m \textrm{Diag}(v_m)^2 W_m ) v_m| \leq L |v_m|_2^2.
        \end{equation}
\end{lemma}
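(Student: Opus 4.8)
The plan is to reduce the quadratic form to a product of spectral norms, each controlled uniformly in $m$, and then pick $\Upsilon$ to absorb the two extra factors of $|v_m|$ supplied by the diagonal matrix $\textrm{Diag}(v_m)^2$. The only nontrivial inputs are the uniform Lyapunov bound already established in Lemma~\ref{lemma:bound_P_m_2_norm} and a uniform operator-norm bound on $W_m$; everything else is submultiplicativity.

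First I would note that, since $P_m^\top = P_m$, the matrix $A_m := W_m^\top \textrm{Diag}(v_m)^2 P_m + P_m \textrm{Diag}(v_m)^2 W_m$ is symmetric and its two summands are transposes of one another, so that $\|A_m\|_2 \le 2\,\|W_m^\top \textrm{Diag}(v_m)^2 P_m\|_2$. Bounding the quadratic form by $\|A_m\|_2\,|v_m|_2^2$ and applying submultiplicativity of the spectral norm gives
\[
    \big|v_m^\top A_m v_m\big| \le 2\,\|W_m\|_2\,\big\|\textrm{Diag}(v_m)^2\big\|_2\,\|P_m\|_2\,|v_m|_2^2 .
\]
Into this I would insert three estimates. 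The diagonal factor is $\big\|\textrm{Diag}(v_m)^2\big\|_2 = |v_m|_\infty^2 \le |v_m|_2^2 \le \Upsilon^2$ on the ball $|v_m|_2 \le \Upsilon$. The Lyapunov solution is controlled by Lemma~\ref{lemma:bound_P_m_2_norm}, which gives the uniform bound $\|P_m\|_2 \le \frac{1}{2\beta_0}$ precisely because $\textrm{Re}(\lambda_i(W_m)) \le -\beta_0$ holds uniformly. Combining these with a uniform bound $\|W_m\|_2 \le M_0$ yields $\big|v_m^\top A_m v_m\big| \le \frac{M_0}{\beta_0}\,\Upsilon^2\,|v_m|_2^2$, so the choice $\Upsilon = \sqrt{\beta_0 L / M_0}$ produces exactly $\big|v_m^\top A_m v_m\big| \le L\,|v_m|_2^2$ for every $m$ and every $|v_m|_2 \le \Upsilon$, matching the constant $\Upsilon_L$ used in the main proof.

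The step requiring the most care is ensuring the estimate is genuinely uniform in $m$, which hinges entirely on the two bounds on $\|P_m\|_2$ and $\|W_m\|_2$. The former is the whole content of Lemma~\ref{lemma:bound_P_m_2_norm} and explains why the uniform eigenvalue gap $-\beta_0$ is assumed. The latter is the subtle point: the constant $M_0$ must bound the \emph{operator} norm $\|W_m\|_2$, not merely the entrywise maximum $\sup_{i,j}|W_{m,ij}|$, since a literal entrywise bound only yields $\|W_m\|_2 \le m M_0$, which would force $\Upsilon \to 0$ as $m \to \infty$. The correct reading is therefore $M_0 = \sup_m \|W_m\|_2$, which is finite by the theorem's uniform weight-norm hypothesis $\sup_m |\theta_m|_2 < \infty$; with that interpretation the argument above goes through unchanged.
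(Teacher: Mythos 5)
Your proof has the same skeleton as the paper's: both reduce the quadratic form to the single cross term $v_m^\top P_m\mathrm{Diag}(v_m)^2 W_m v_m$ via the transpose symmetry (the factor of $2$), both invoke Lemma~\ref{lemma:bound_P_m_2_norm} to get $\|P_m\|_2\le\tfrac{1}{2\beta_0}$, both absorb $\Upsilon^2$ from the diagonal factor, and both end with $\Upsilon=\sqrt{\beta_0 L/M_0}$. The genuine divergence is the treatment of $W_m$. The paper never bounds $\|W_m\|_2$: it estimates $|\mathrm{Diag}(v_m)^2 W_m v_m|_2$ coordinatewise from the entrywise hypothesis, and claims the dimension-free bound $M_0|v_m|_2^3$ because in its computation the entries of $W_m$ are summed against the squares $v_{m,j}^2$, whose total is $|v_m|_2^2$ with no factor of $m$. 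You instead upgrade the hypothesis to an operator-norm bound $\|W_m\|_2\le M_0$ and use submultiplicativity, arguing that the entrywise reading cannot work.

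On this point you are more right than the paper. The paper's coordinate computation transposes indices: it writes the $i$-th entry of $\mathrm{Diag}(v_m)^2 W_m v_m$ as $v_{m,i}\sum_j v_{m,j}^2 W_{m,ji}$, whereas the correct entry is $v_{m,i}^2\sum_j W_{m,ij} v_{m,j}$. Redone with the correct indexing, the same chain of inequalities yields only $M_0\,|v_m|_1\sqrt{\sum_i v_{m,i}^4}\le M_0\sqrt{m}\,|v_m|_2^3$, so the dimension factor you predicted does appear. In fact, under the literal max-norm hypothesis the lemma is false: take $W_m=-\beta_0 I_m+M_0\,e_1(\mathbf{1}-e_1)^\top$, which is Hurwitz with every eigenvalue equal to $-\beta_0$ and entries bounded by $\max(\beta_0,M_0)$; since $e_1(\mathbf{1}-e_1)^\top$ is nilpotent, $P_m$ has a closed form, and for $v_m$ split evenly between $e_1$ and $(\mathbf{1}-e_1)/\sqrt{m-1}$ with $|v_m|_2=\Upsilon$ the quadratic form grows linearly in $m$, violating the conclusion for every fixed $\Upsilon$. (The same example breaks the conclusion of Lemma~\ref{lemma:bound_P_m_2_norm}, whose proof tacitly assumes an orthonormal eigenbasis, i.e. normal $W_m$; your proof shares that reliance with the paper's, so this is not a point of difference.) Your reinterpretation of $M_0$ as $\sup_m\|W_m\|_2$ --- which is what the theorem's assumption $\sup_m|\theta_m|_2<\infty$ actually supplies where the lemma is applied --- is therefore not cosmetic but necessary: what you prove is a strengthened-hypothesis variant of the stated lemma rather than the lemma itself, but unlike the stated version, it is true.
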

\begin{proof}
    First, by the property of matrix  and vector norm:
    \begin{align}
        |v_m^\top P_m \textrm{Diag}(v_m)^2 W_m v_m| 
        & \leq |v_m|_2 |P_m|_2 |\textrm{Diag}(v_m)^2 W_m v_m| \\
        & \leq \frac{1}{2\beta_0} |v_m|_2 |\textrm{Diag}(v_m)^2 W_m v_m|.
    \end{align}
    Notice that the second inequality holds as a direct result for \cref{lemma:bound_P_m_2_norm}.

    Then, 
    \begin{align}
        |\textrm{Diag}(v_m)^2 W_m v_m|_2 
        & = \sqrt{\sum_{i=1}^m (\sum_{j=1}^m v_{m, j}^2 W_{m, ji} v_{m, i})^2} \\
        & \leq \sqrt{\sum_{i=1}^m (\sum_{j=1}^m v_{m, j}^2 |W_{m, ji}| |v_{m, i}|)^2} \\
        & \leq \sqrt{\sum_{i=1}^m (\sum_{j=1}^m v_{m, j}^2 M_0 |v_{m, i}|)^2} \\
        & = M_0 \sqrt{\sum_{i=1}^m (\sum_{j=1}^m v_{m, j}^2 |v_{m, i}|)^2} \\
        & = M_0 \sqrt{(\sum_{j=1}^m v_{m, j}^2)^2 (\sum_{i=1}^m|v_{m, i}|^2)} \\
        & = M_0 (\sum_{j=1}^m v_{m, j}^2) \sqrt{(\sum_{i=1}^m|v_{m, i}|^2)} \\
        & \leq M_0 |v_m|_2^3 \\
        & \leq M_0 \Upsilon^2 |v_m|_2.
    \end{align}

    For any $0 < \Upsilon \leq \sqrt{\frac{\beta_0 L}{M_0}}$, we have 
    \begin{align}
        |v_m^\top ( W_m^\top \textrm{Diag}(v_m)^2 P_m + P_m \textrm{Diag}(v_m)^2 W_m ) v_m| 
        & \leq 2 * \frac{1}{2 \beta_0} |v_m|_2 |\textrm{Diag}(v_m)^2 W_m v_m| \\
        & \leq \frac{1}{\beta_0} M_0 \Upsilon^2 |v_m|_2^2 \\
        & \leq L |v_m|_2^2.
    \end{align}
\end{proof}

\begin{lemma}
\label{lemma:model_decay_target_decay}
    Consider a continuous function $f: [0, \infty) \to \mathbb{R}$, assume it can be approximated by a sequence of continuous functions $\{f_m\}_{m=1}^\infty$ universally:
    \begin{equation}
    \label{eq:universal_approximation}
        \lim_{m \to \infty} \sup_t |f(t) - f_m(t)| = 0.
    \end{equation}
    Assume the approximators $f_m$ are uniformly exponentially decaying with the same $\beta_0 > 0$:
    \begin{equation}
        \lim_{t \to \infty} \sup_{m \in \mathbb{N}_+} e^{\beta_0 t} |f_m(t)| \to 0.
    \end{equation}
    Then the function $f$ is also decaying exponentially:
    \begin{equation}
        \lim_{t \to \infty} e^{\beta t} |f(t)| \to 0, \quad \forall 0 < \beta < \beta_0.
    \end{equation}
\end{lemma}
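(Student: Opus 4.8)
The plan is to transfer the uniform exponential-decay bound from the approximants $\{f_m\}$ directly to $f$ by exploiting pointwise convergence, rather than attempting the naive decomposition $e^{\beta t}|f(t)| \le e^{\beta t}|f(t) - f_m(t)| + e^{\beta t}|f_m(t)|$. That decomposition is tempting but fails: for a fixed $m$ the uniform approximation bound $\sup_t |f(t) - f_m(t)|$ is only small, not decaying in $t$, so multiplying by $e^{\beta t}$ lets the first term blow up as $t \to \infty$. The correct observation is that the decay estimate on the $f_m$ is \emph{uniform in $m$}, and therefore it survives the pointwise limit $m \to \infty$; this is exactly the feature the hypothesis was crafted to provide.

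Concretely, first I would note that the uniform convergence in \cref{eq:universal_approximation} implies pointwise convergence: for every fixed $t \ge 0$ we have $f_m(t) \to f(t)$, and hence $|f_m(t)| \to |f(t)|$ by continuity of the absolute value (notably, only pointwise convergence of the $f_m$ is needed here, not the continuity of $f$ itself). Multiplying by the $m$-independent factor $e^{\beta_0 t}$ then gives, for each fixed $t$,
\begin{equation}
    e^{\beta_0 t} |f(t)| = \lim_{m \to \infty} e^{\beta_0 t} |f_m(t)| \le \sup_{m \in \mathbb{N}_+} e^{\beta_0 t} |f_m(t)|.
\end{equation}

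The right-hand side is precisely the quantity assumed to vanish, so letting $t \to \infty$ yields $\lim_{t \to \infty} e^{\beta_0 t} |f(t)| = 0$; this already delivers the conclusion at the full rate $\beta_0$, and the stated claim for any $\beta < \beta_0$ follows immediately by writing $e^{\beta t}|f(t)| = e^{(\beta - \beta_0) t}\,e^{\beta_0 t}|f(t)|$, a product of a factor tending to $0$ (since $\beta - \beta_0 < 0$) and a bounded factor. If one wishes to avoid asserting that $\sup_m e^{\beta_0 t}|f_m(t)|$ is finite at every $t$, the same reasoning runs in an $\epsilon$–$\delta$ form: given $\epsilon > 0$, pick $T$ from the uniform-decay hypothesis so that $\sup_m e^{\beta_0 t}|f_m(t)| < \epsilon/2$ for $t > T$, then for each such $t$ choose $m$ large enough that $\bigl| e^{\beta_0 t}|f(t)| - e^{\beta_0 t}|f_m(t)| \bigr| < \epsilon/2$, whence $e^{\beta_0 t}|f(t)| < \epsilon$. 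The only genuine obstacle is conceptual rather than technical: recognizing at the outset that the decomposition one reaches for first is the wrong one, and that it is the uniformity in $m$ of the decay estimate that legitimizes passing the bound through the pointwise limit; once this is seen, the remainder is routine.
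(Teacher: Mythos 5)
Your proof is correct, and it takes a genuinely different --- and more elementary --- route than the paper's. The paper proves this lemma via the change of variables $s = e^{-\beta_0 t}$: it defines a transform $(\mathcal{T}g)(s) = g(-\log s/\beta_0)/s$, shows that uniform approximation on compact time intervals together with the uniform tail decay makes $\{\mathcal{T}f_m\}$ a Cauchy sequence in $C([0,1])$, invokes completeness of $C([0,1])$ to obtain a continuous limit $f^*$ with $f^*(0)=0$, identifies $f^* = \mathcal{T}f$, and reads off the conclusion as continuity of $\mathcal{T}f$ at $s=0$. That argument splits $[0,\infty)$ into $[0,T_0]$ and $[T_0,\infty)$ at the cost of a factor $e^{-\delta T_0}$ with $\delta = \beta_0 - \beta > 0$, which is precisely why the paper's proof only reaches rates $\beta$ strictly below $\beta_0$. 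Your squeeze argument --- fix $t$, pass the uniform-in-$m$ bound through the pointwise limit $f_m(t) \to f(t)$, then let $t \to \infty$ --- avoids all of this machinery and in fact proves the \emph{stronger} conclusion at the endpoint rate $\beta_0$ itself, from which the stated claim for every $\beta < \beta_0$ is immediate; your $\epsilon$--$\delta$ variant also correctly handles the (harmless) possibility that $\sup_m e^{\beta_0 t}|f_m(t)|$ is infinite for small $t$, since the choice of $m$ is allowed to depend on $t$. Your diagnosis of why the naive triangle-inequality decomposition fails (the approximation error is small but not decaying in $t$, so $e^{\beta t}$ destroys it) is exactly the right observation; the paper circumvents this by working on compacts in the transformed variable, while you circumvent it by taking $m \to \infty$ first at fixed $t$. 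The only thing the paper's heavier route buys is the explicit identification of $f$ with an element of the weighted space $\{g : \mathcal{T}g \in C([0,1]),\ (\mathcal{T}g)(0)=0\}$, in keeping with the approximation-space flavor of Bernstein-type results; for the lemma as stated, your argument is both simpler and sharper.
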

\begin{proof}
    Given a function $f \in C([0, \infty))$, we consider the transformation $\mathcal{T}f : [0, 1] \to \mathbb{R}$ defined as:
    \begin{equation}
        (\mathcal{T}f)(s) = \left\{\begin{array}{lcl} 0, & & {s = 0}\\ \frac{f(-\frac{\log s}{\beta_0})}{s},& & {s \in (0, 1].} \end{array} \right.
    \end{equation}
    Under the change of variables $s = e^{-\beta_0 t}$, we have:
    \begin{equation}
        f(t) = e^{-\beta_0 t} (\mathcal{T}f) (e^{-\beta_0 t}), \quad t \geq 0.
    \end{equation}
    According to uniformly exponentially decaying assumptions on $f_m$:
    \begin{equation}
        \lim_{s \to 0^+} (\mathcal{T} f_m)(s) 
        = \lim_{t \to \infty} \frac{f_m(t)}{e^{-\beta_0 t}} 
        = \lim_{t \to \infty} e^{\beta_0 t} f_m(t) 
        = 0,
    \end{equation}
    which implies $\mathcal{T} f_m \in C([0, 1])$. 
    
    For any $\beta < \beta_0$, let $\delta = \beta_0 - \beta > 0$. Next we have the following estimate
    \begin{align}
        & \sup_{s \in [0, 1]} \left | (\mathcal{T} f_{m_1})(s) - (\mathcal{T} f_{m_2})(s) \right | \\
        = & \sup_{t \geq 0} \left | \frac{f_{m_1}(t)}{e^{-\beta t}} - \frac{f_{m_2}(t)}{e^{-\beta t}} \right | \\
        \leq & \max \left \{ \sup_{0 \leq t \leq T_0} \left | \frac{f_{m_1}(t)}{e^{-\beta t}} - \frac{f_{m_2}(t)}{e^{-\beta t}} \right | , C_0 e^{-\delta T_0} \right \} \\
        \leq & \max \left \{ e^{\beta T_0} \sup_{0 \leq t \leq T_0} \left |f_{m_1}(t) - f_{m_2}(t) \right | , C_0 e^{-\delta T_0} \right \}
    \end{align}
    where $C_0$ is a constant uniform in $m$.
    
    For any $\epsilon>0$, take $T_0 = - \frac{\ln(\frac{\epsilon}{C_0})}{\delta},$ we have $C_0 e^{-\delta T_0} \leq \epsilon$. 
    For sufficiently large $M$ which depends on $\epsilon$ and $T_0$, by universal approximation (\cref{eq:universal_approximation}), we have $\forall m_1, m_2 \geq M$,
    \begin{align}
        \sup_{0 \leq t \leq T_0} \left |f_{m_1}(t) - f_{m_2}(t) \right | & \leq e^{-\beta T_0} \epsilon, \\
        e^{\beta T_0} \sup_{0 \leq t \leq T_0} \left |f_{m_1}(t) - f_{m_2}(t) \right | & \leq \epsilon.
    \end{align}
    Therefore, $\{f_m\}$ is a Cauchy sequence in $C([0, \infty))$.

    Since $\{f_m\}$ is a Cauchy sequence in $C([0, \infty))$ equipped with the sup-norm, using the above estimate we can have$\{\mathcal{T} f_m\}$ is a Cauchy sequence in $C([0, 1])$ equipped with the sup-norm.
    By the completeness of $C([0, 1])$, there exists $f^* \in C([0, 1])$ with $f^*(0) = 0$ such that
    \begin{equation}
        \lim_{m \to \infty} \sup_{s \in [0, 1]} |(\mathcal{T} f_m)(s) - f^*(s)| = 0.
    \end{equation}
    Given any $s > 0$, we have
    \begin{equation}
        f^*(s) = \lim_{m \to \infty} (\mathcal{T} f_m)(s) = (\mathcal{T} f)(s),
    \end{equation}
    hence
    \begin{equation}
        \lim_{t \to \infty} e^{\beta t} f(t) = \lim_{s \to 0^+} (\mathcal{T} f)(s) = f^*(0) = 0.
    \end{equation}
\end{proof}

\section{Memory query in sentiment analysis based on IMDB movie reviews}
\label{sec:memory_query_in_sentiment_analysis}

In the following example, we show the memory function of sentiment score's for single repeated word using LSTM and fine-tuned BERT model.
In Figure~\ref{fig:memory_for_sentiment_scores_LSTM_BLSTM}, it can be seen that the memory function of simple words such as ``good'' and ``bad'' is decaying exponentially.
However, the memory of sentiment in ``ha'' can be complicated as it's not decaying fast. 
This phenomenon also holds for stacked Bidirectional LSTM models.

As a comparison, we can see the memory pattern of BERT over repeated characters or words can be decaying relatively slower. 
At the same time, the fluctuation of memory of ``ha'', ``bad'' and ``good'' can fluctuate a lot (compared with LSTM models).
These phenomena indicate that the transformer-type architectures might not have exponential decaying memory issues. 
(See Figure~\ref{fig:memory_for_sentiment_scores_BERT})

\begin{figure}[ht]
{
	\centering
    \subfigure[][Bidirectional LSTM]
    {
        \includegraphics[width=0.4\textwidth]{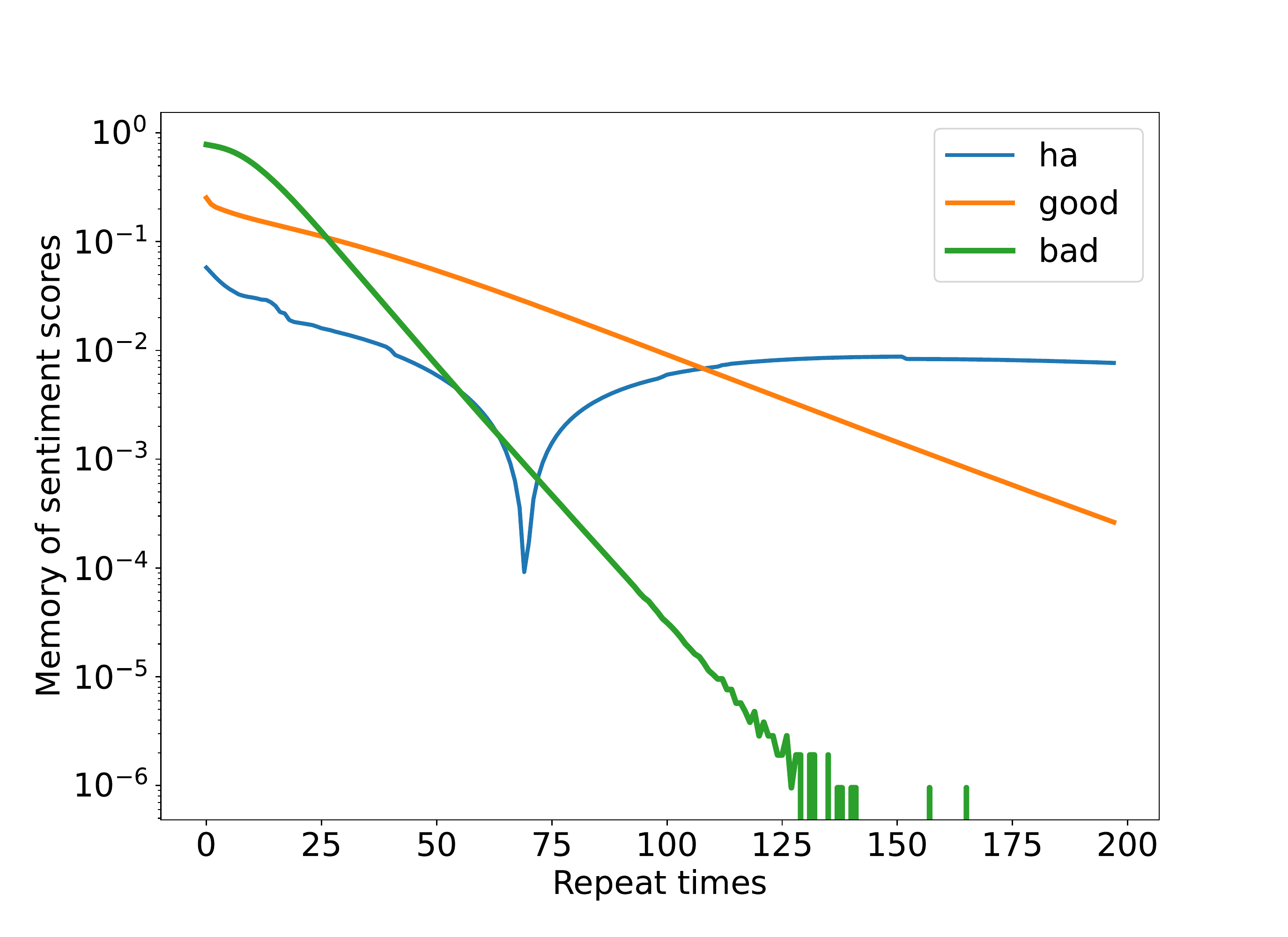}
    }
    \subfigure[][Stacked Bidirectional LSTM]
    {
        \includegraphics[width=0.4\textwidth]{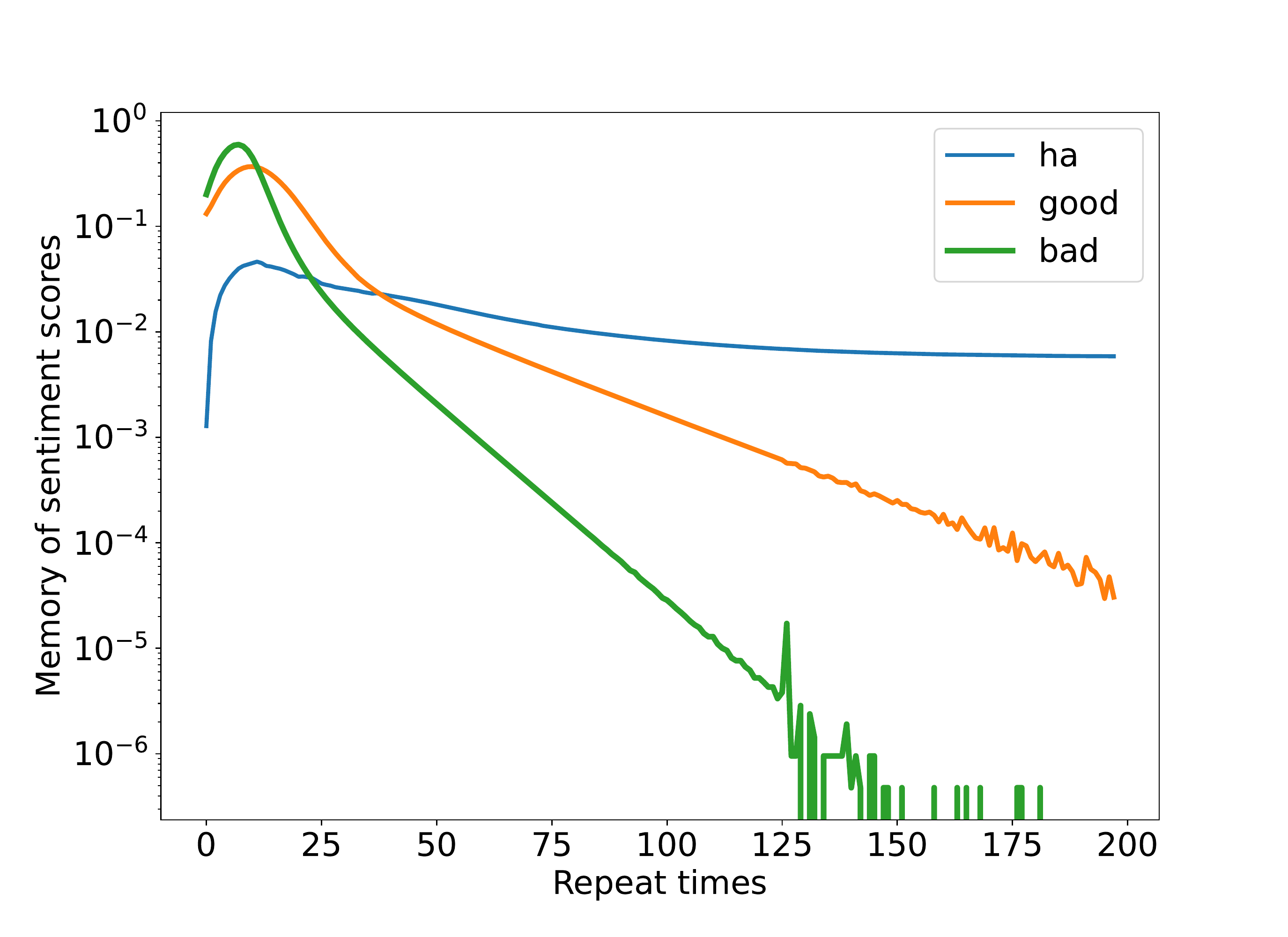}
    }

	\caption{Memory function of sentiment scores for different words based on IMDB movie reviews using Bidirectional LSTM and stacked Bidirectional LSTM}
    \label{fig:memory_for_sentiment_scores_LSTM_BLSTM}
 }
\end{figure}

\begin{figure}[!ht]
{
	\centering
    \includegraphics[width=0.85\textwidth]{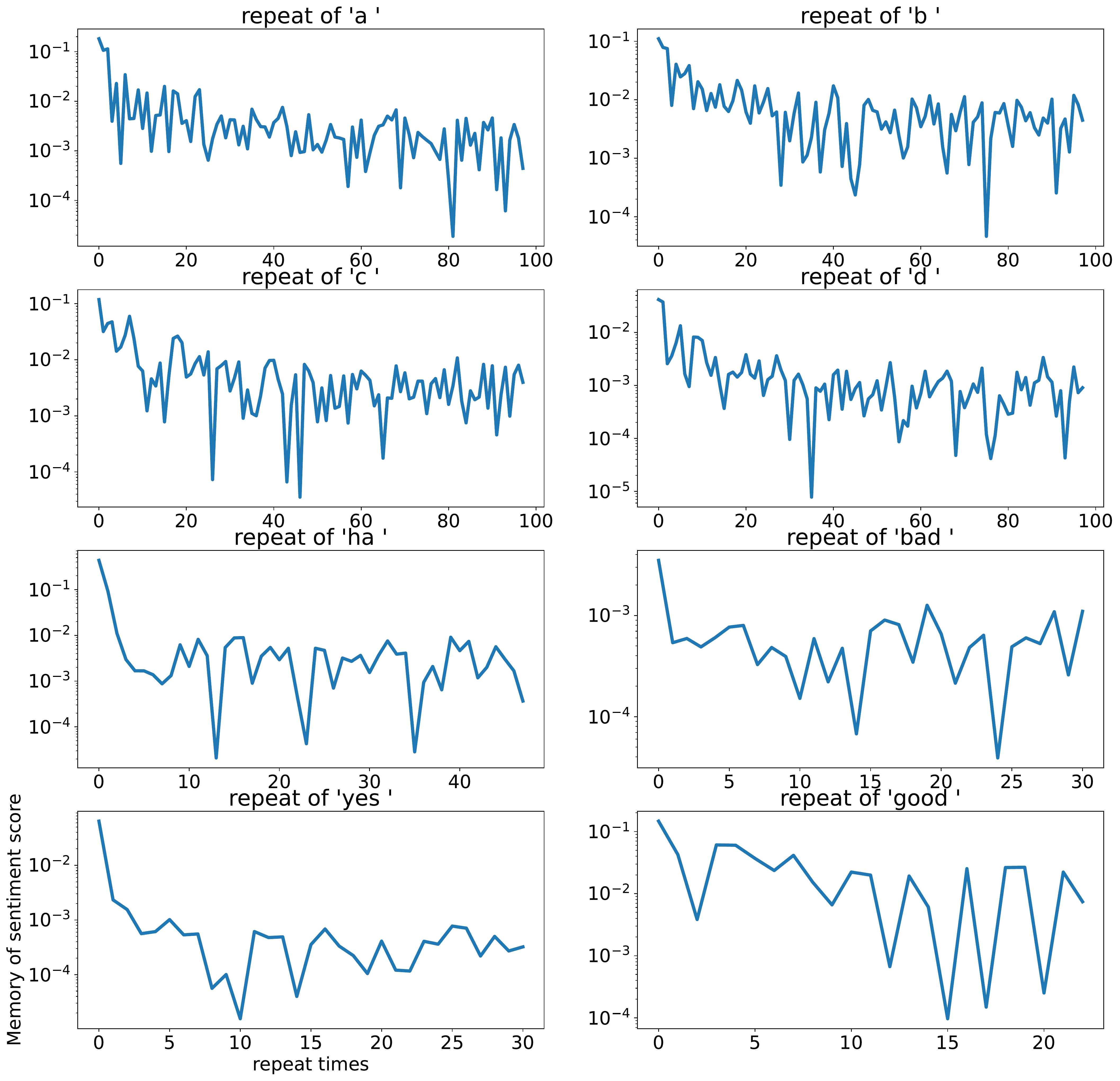}
	\caption{Memory for sentiment scores based on IMDB movie reviews}
    \label{fig:memory_for_sentiment_scores_BERT}
}
    The $x$-axis is the inputs repeated times, which can be viewed as the input length.
    The $y$-axis is the derivative of sentiment score over repeated inputs, which is the memory of this sentiment analysis model.
    This tiny example implies that the memory function required in application can be in various patterns.
    The decaying pattern can be fluctuating in terms of the scale.
\end{figure}

\section{Rescaling is not an a priori solution to stable approximation}
\label{sec:rescaling_discussion}

Now we consider the special case: Assume $W$ is diagonal $W = diag([w_1, w_2, \dots, w_m])$. Element of matrix $U$ is 1. And the vector $c=[c_1, c_2, \dots, c_m]$. 

In this section, we compare the vanilla linear RNN with a memory function $\hat{\rho}(t) = \sum_{k=1}^m c_k e^{w_k t}$ and a rescaled linear RNN with a memory function $\hat{\rho}(t) = \sum_{k=1}^m c_k e^{\frac{1}{k} w_k t}$. 
They are represented in the simplest form with $c_k$ and $w_k$ as the trainable weights. 
It can be seen that the target linear functional with memory function $\rho(t) = \sum_{k=1}^\infty \frac{1}{k^2} e^{-\frac{1}{k} t}$ cannot be stably approximated by linear RNN but can be stably approximated by rescaled linear RNN. (For rescaled linear RNN, one just need to pick $w_k$ = 1.)

However, this rescaling does not work for all memory functions. 
Take the memory function $\rho(t) = \sum_{k=1}^\infty \frac{1}{k^2} e^{-\frac{1}{k^2} t}$ as an example, it may not be possible to stably approximate target with this memory function with linear RNN or the rescaled version. 

Based on the above argument, there does not exists a rescaled RNN that can work for linear functional with arbitrary memory function. 
In particular, it's always feasible to construct a slow decay target that a fixed rescaled linear RNN cannot stably approximate. 
This result indicates that rescaling is not an a priori solution to achieve stable approximation. 
In practice, it's hard to get the memory function decay speed therefore it's impractical to use the rescaling method.

\section{Numerical experiments details}
\label{sec:numerical_experiments_details}

Here we give the numerical experiments details for both the linear and nonlinear RNN experiments.

\paragraph{Linear functionals approximated by linear RNNs}

The linear functional approximation is equivalent to the approximation of function $\rho(t)$ by exponential sum $c^\top e^{Wt} U$.
For simplicity, we conduct the approximation in the interpolation approach and consider the one-dimensional input and one-dimensional output case.
Furthermore we reduce the approximation problem (numerically) into the least square fitting over the discrete time grid.
We fit the coefficients of polynomial and evaluate the approximation error over $[1, 2, \dots, 100]$.

The exponential decaying memory function is manually selected to show the significance across different hidden dimensions:
\begin{equation}
    \rho(t) = 0.9^t
\end{equation}
while the polynomial decaying memory function is $\rho(t) = \frac{1}{(t+1)^{1.1}}$. 
The additional $1$ is added as we require the memory function to be integrable. 

The perturbation list $\beta \in [0, 5.0 * 10^{-4}, 5.0 * 10^{-4} * 2^1, \dots, 5.0 * 10^{-4} * 2^{20}]$.
Each evaluation of the perturbed error is sampled with 10 different weight perturbations to reduce the variance. 

The approximation of linear functional by linear RNN has simple structures. So we did not use gradient-based optimisation. 
The approximation of linear functional via linear RNN is equivalent to the minimisation problem $\min_{c, W, U}$$||H_t- \hat{H_t}||,$ which can be simplified into the following function approximation problem $\min_{c,W,U} \sup\int_0^\infty$$|\rho(s)-c^\top e^{sW}U|ds.$ 
After eigenvalue decompositions of $W$ and change of variables, the above problem can be reduced to the polynomial approximation of function $\rho(-\ln t)$ over interval $(0, 1]$. 
Therefore we solve this approximation problem via linear regression instead of the gradient-based optimisation. The code are attached in the supplementary materials.

\paragraph{Nonlinear functionals approximated by nonlinear RNNs}

Still, we consider the one-dimensional input and one-dimensional output case.
We train the model over timestamp $[0.1, 0.2, \dots, 3.2]$ and evaluate the approximation error over the same horizon $[0.1, 0.2, \dots, 10.0]$.

In the experiments to approximate the nonlinear functionals by nonlinear RNNs, we train each model for 1000 epochs, the stopping criterion is the validation loss achieving $10^{-8}$.
The optimizer used is Adam with initial learning rate 0.005.
The loss function is mean squared error. 
The batch size is 128 while the train set size and test set size are 12800.

The polynomial decaying memory function is $\rho(t) = \frac{1}{(t+1)^{1.5}}$ equals to 1 at $t = 0$.
The only difference is the decaying speed. 

The perturbation list $\beta \in [0, 10^{-11} * 2^0, 10^{-11} * 2^1, \dots, 10^{-11} * 2^{35}]$.
Each evaluation of the perturbed error is sampled with 3 different weight perturbations (and take the maximum) to reduce the variance of the perturbation error.

\section{Additional numerical results}

\paragraph{Curse of memory over ReLU activation}

Despite the activation $relu$ is not smooth at $\sigma(0)=0$, we numerically verify that the curse of memory phenomenon also holds. In \cref{fig:rnn_vs_softplusrnn_relu}, the left RNN using ReLU cannot stably approximate target with polynomial memory while the RNN with Softplus reparameterization can stably approximate the target. 

\begin{figure}[ht]
    \centering
    \subfigure[][Without reparameterization  + ReLU RNN]
    {
        \includegraphics[width=0.9\linewidth]{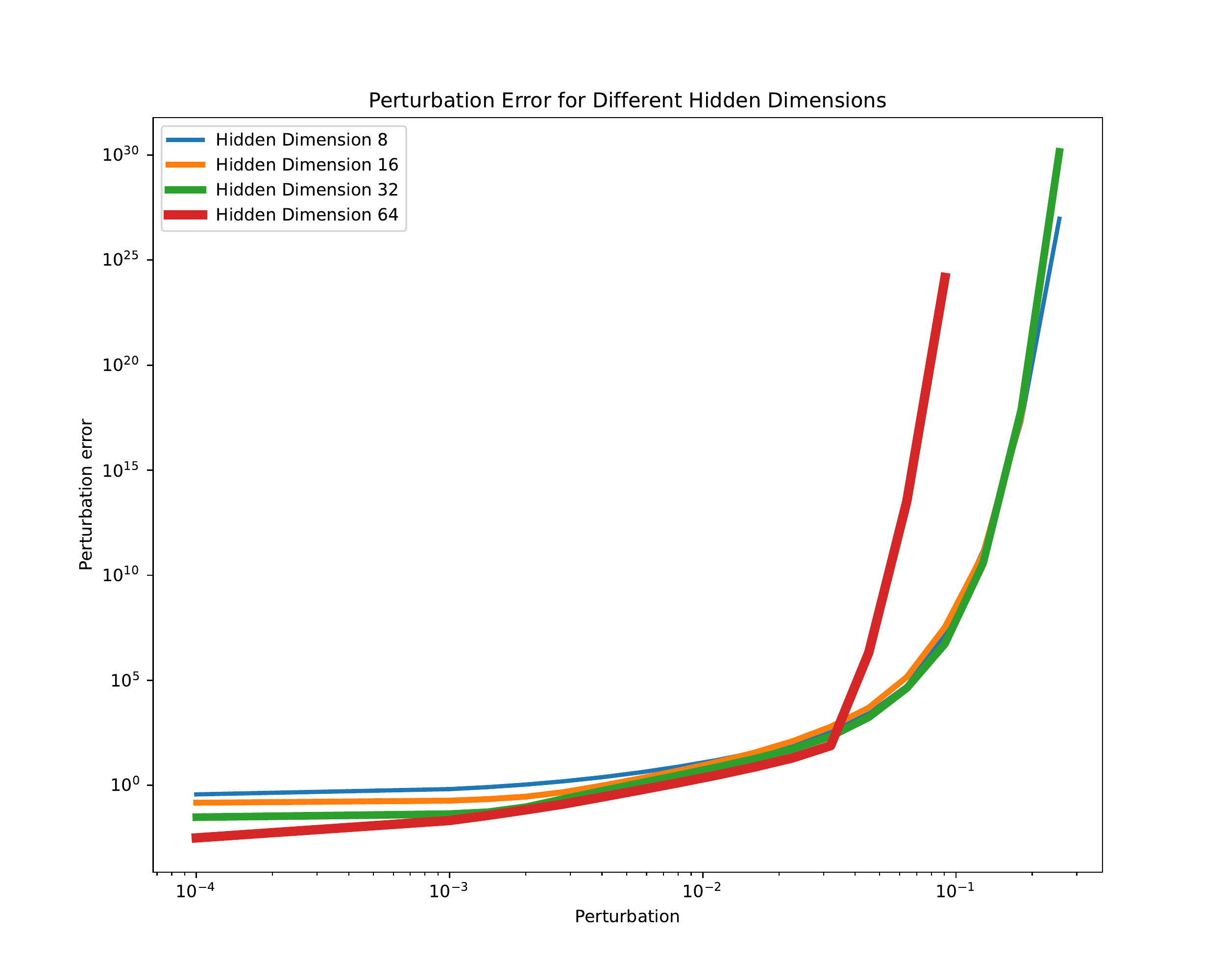}
    }
    \subfigure[][Softplus reparameterization  + ReLU RNN]
    {
        \includegraphics[width=0.9\linewidth]{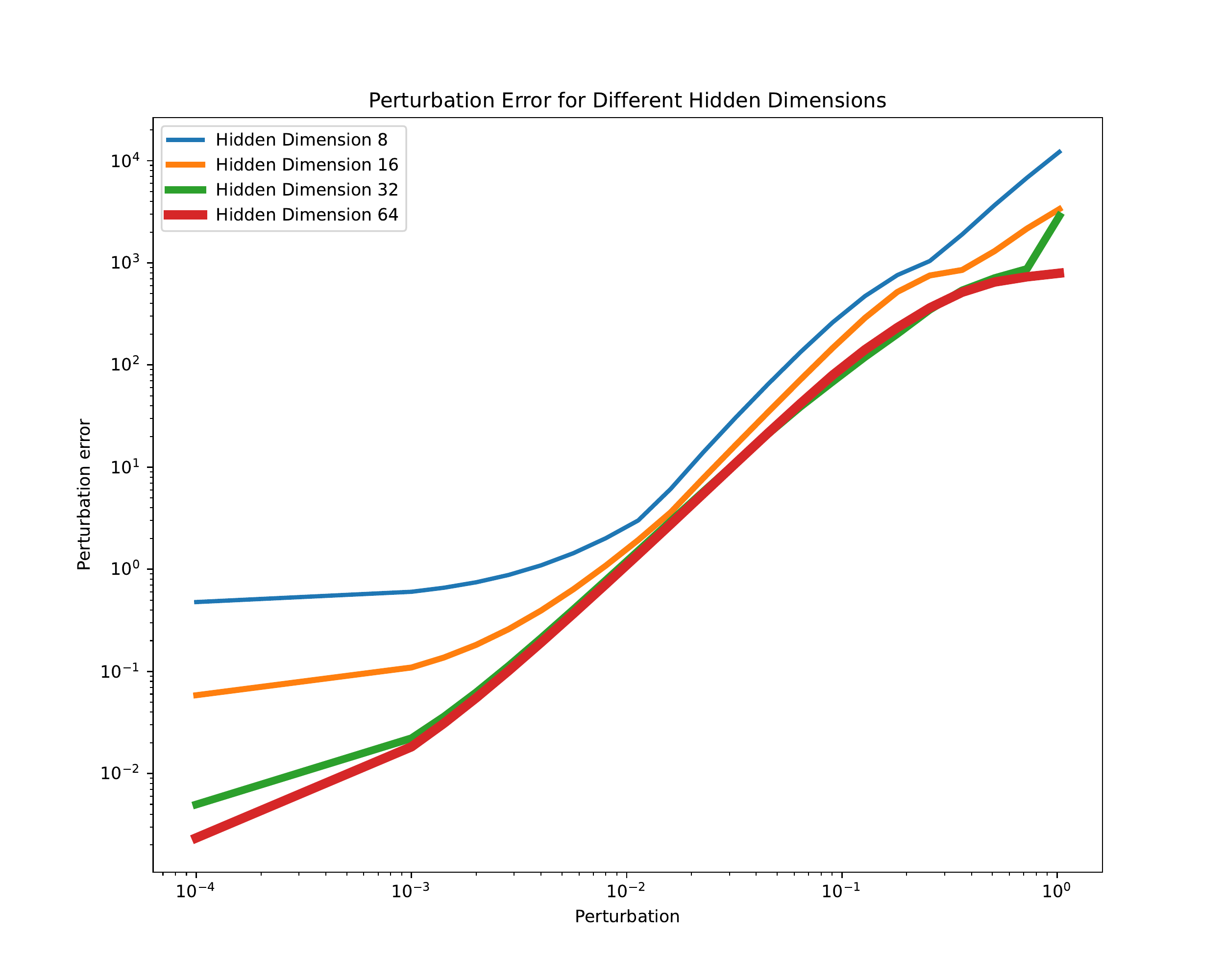}
    }
    
    \caption{
        When the activation used is ReLU, vanilla nonlinear RNN does not have stable approximation while the softplusRNN can be stably approximated. 
    }
    \label{fig:rnn_vs_softplusrnn_relu}
    \end{figure}

\paragraph{Stable re-parameterization as a potential solution}
In \cref{fig:suitable_parametrization_enables_stable_approximation}, we show that stable reparameterization such as exponential reparameterization and softplus reparameterization enables the stable approximation of linear functional with polynomial memory by linear RNNs. 
As shown in \cref{fig:hardtanh_tanh_softplusrnn}, we further verify that stable re-parameterization enables the stable approximation of targets with polynomial decaying memory by hardtanh/tanh RNNs. 

\begin{figure}[t!]
    \centering
    \subfigure[][Softplus reparameterization + Hardtanh RNN]{
      \includegraphics[width=0.8\textwidth]{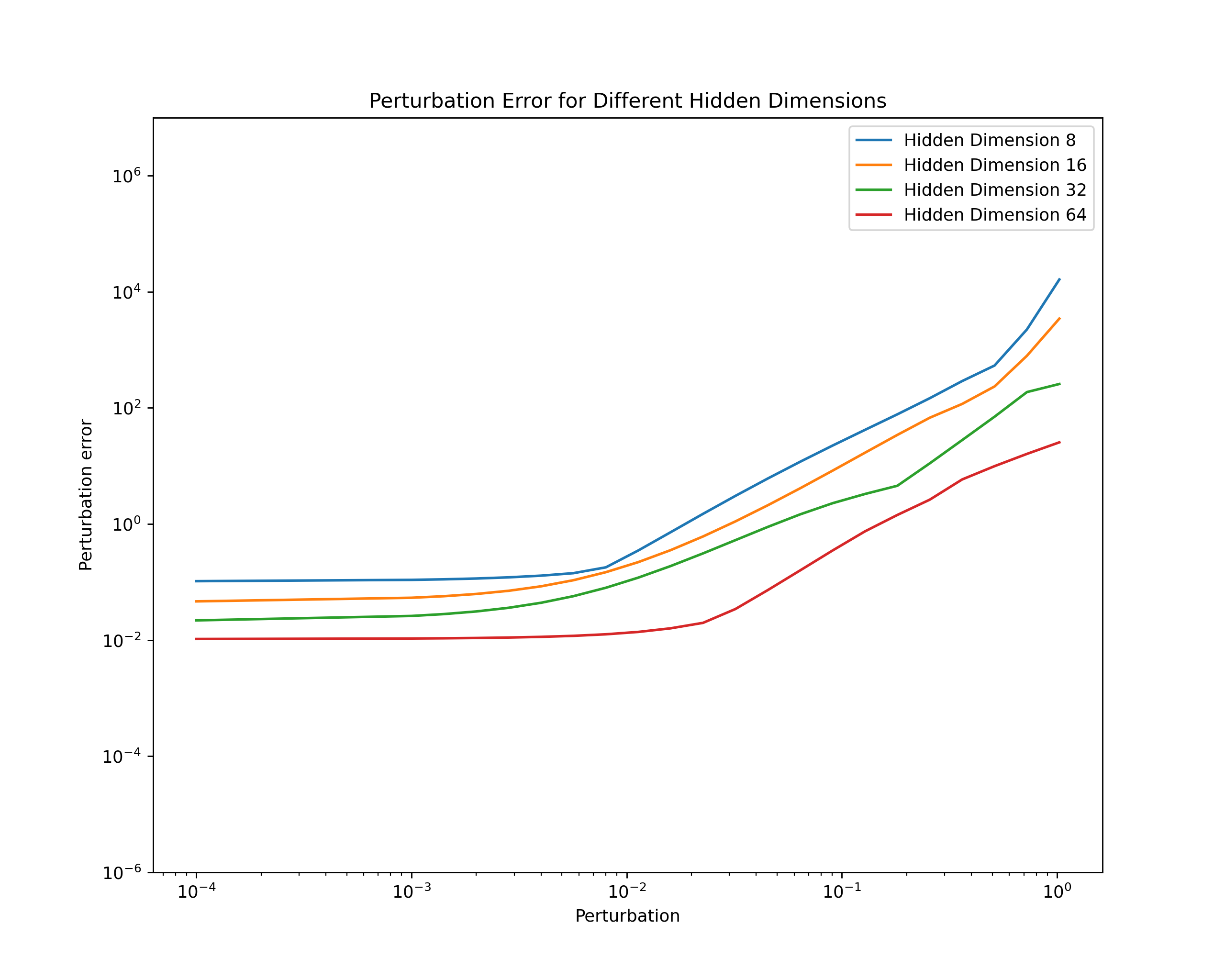}
      \label{fig:hardtanh+softplusrnn}
    }
    \subfigure[][Softplus reparameterization + Tanh RNN]{
      \includegraphics[width=0.8\textwidth]{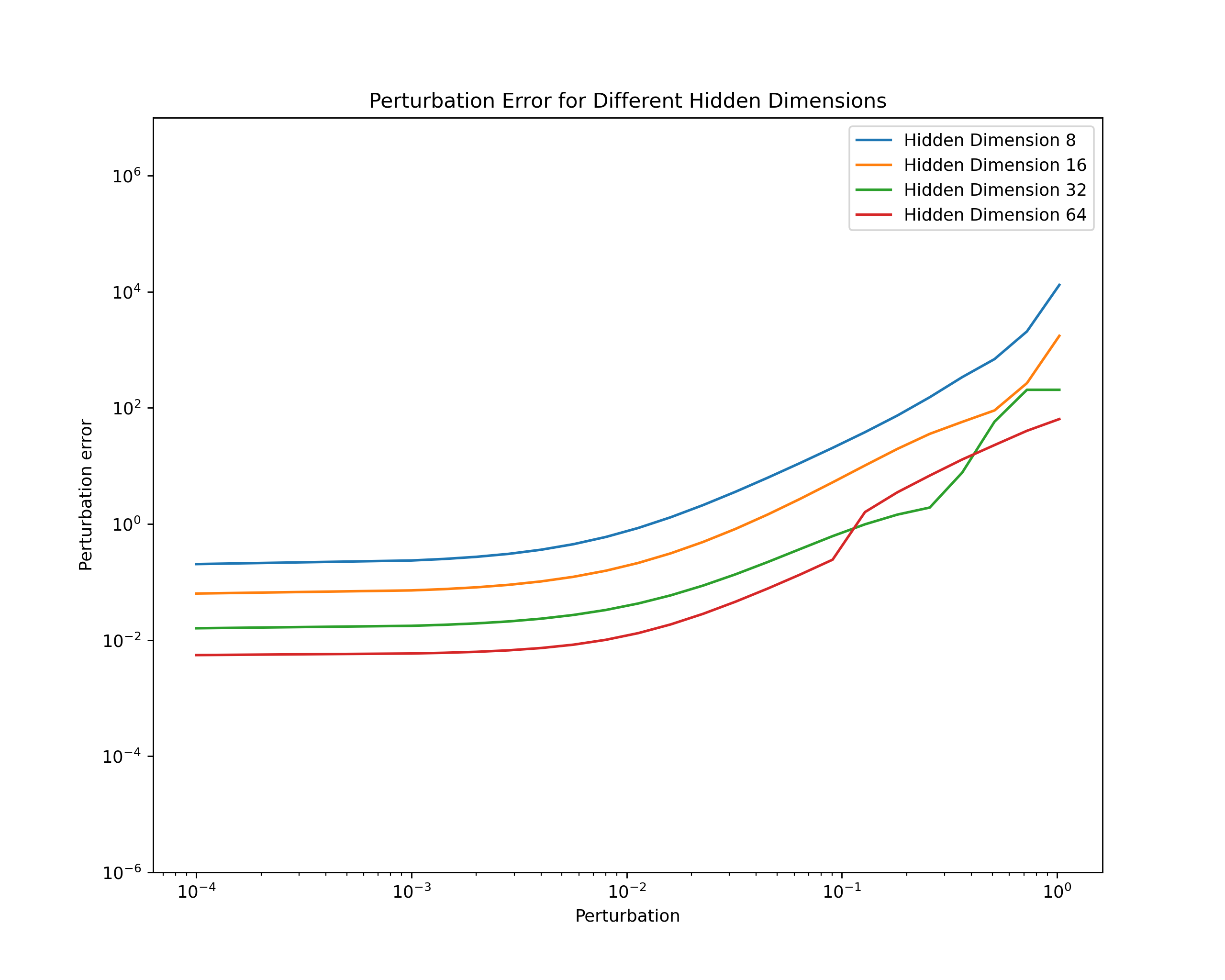}
      \label{fig:tanh+softplusrnn}
    }
    \caption{
    Linear functionals with polynomial memory can be stably approximated by hardtanh/tanh RNNs using Softplus parameterisation for $W$. 
    These empirical findings underscore that stable parameterisation is pivotal for attaining a stable approximation for targets with long-term memory. 
    The experiments build upon and extend the results presented in \cref{fig:suitable_parametrization_enables_stable_approximation}.
    }
    \label{fig:hardtanh_tanh_softplusrnn}
\end{figure}

\FloatBarrier

\begin{figure}[ht!]
    \centering
    \subfigure[][train loss]{
      \includegraphics[width=0.47\textwidth]{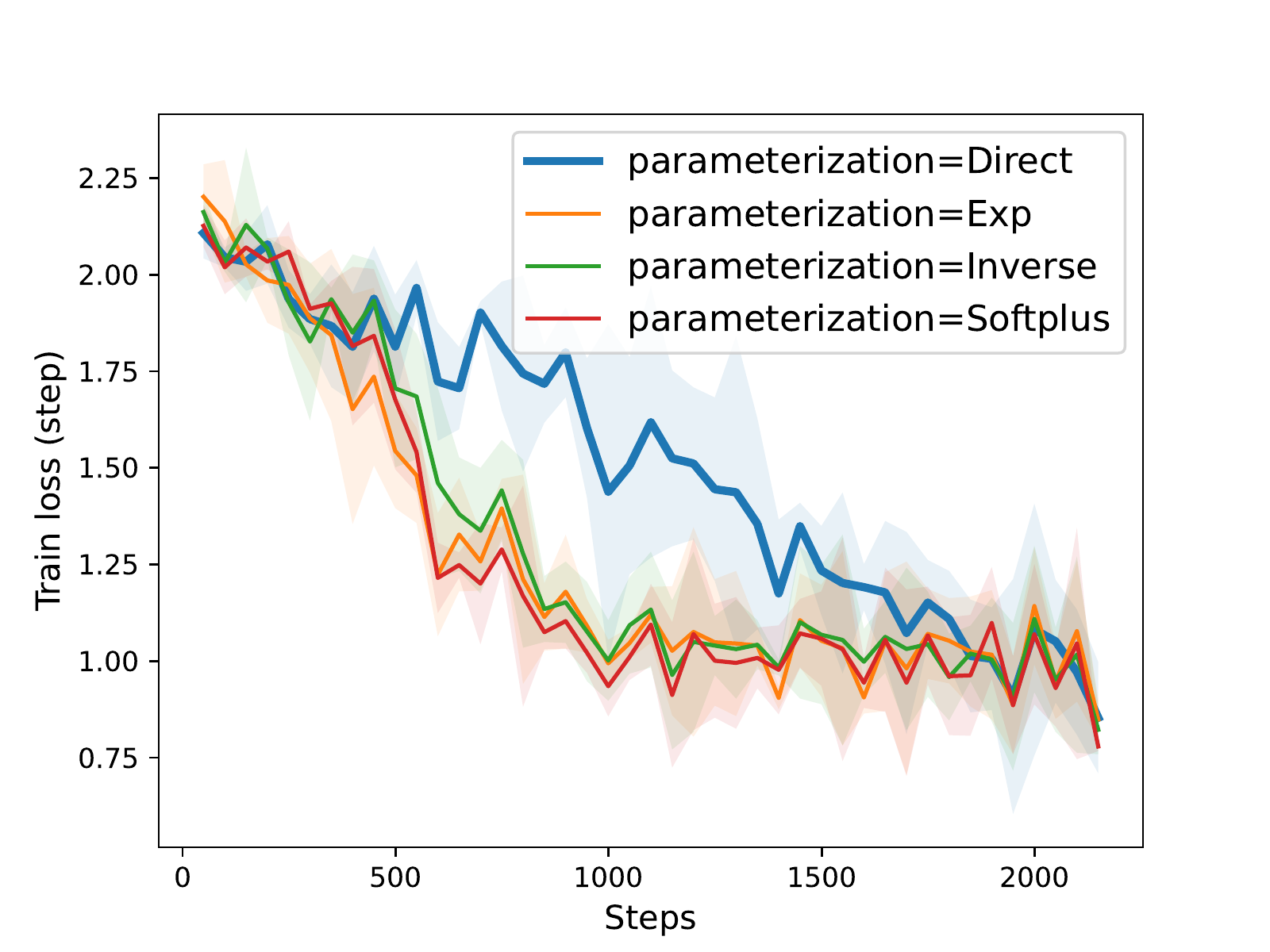}
    }
    \subfigure[][train accuracy]{
      \includegraphics[width=0.47\textwidth]{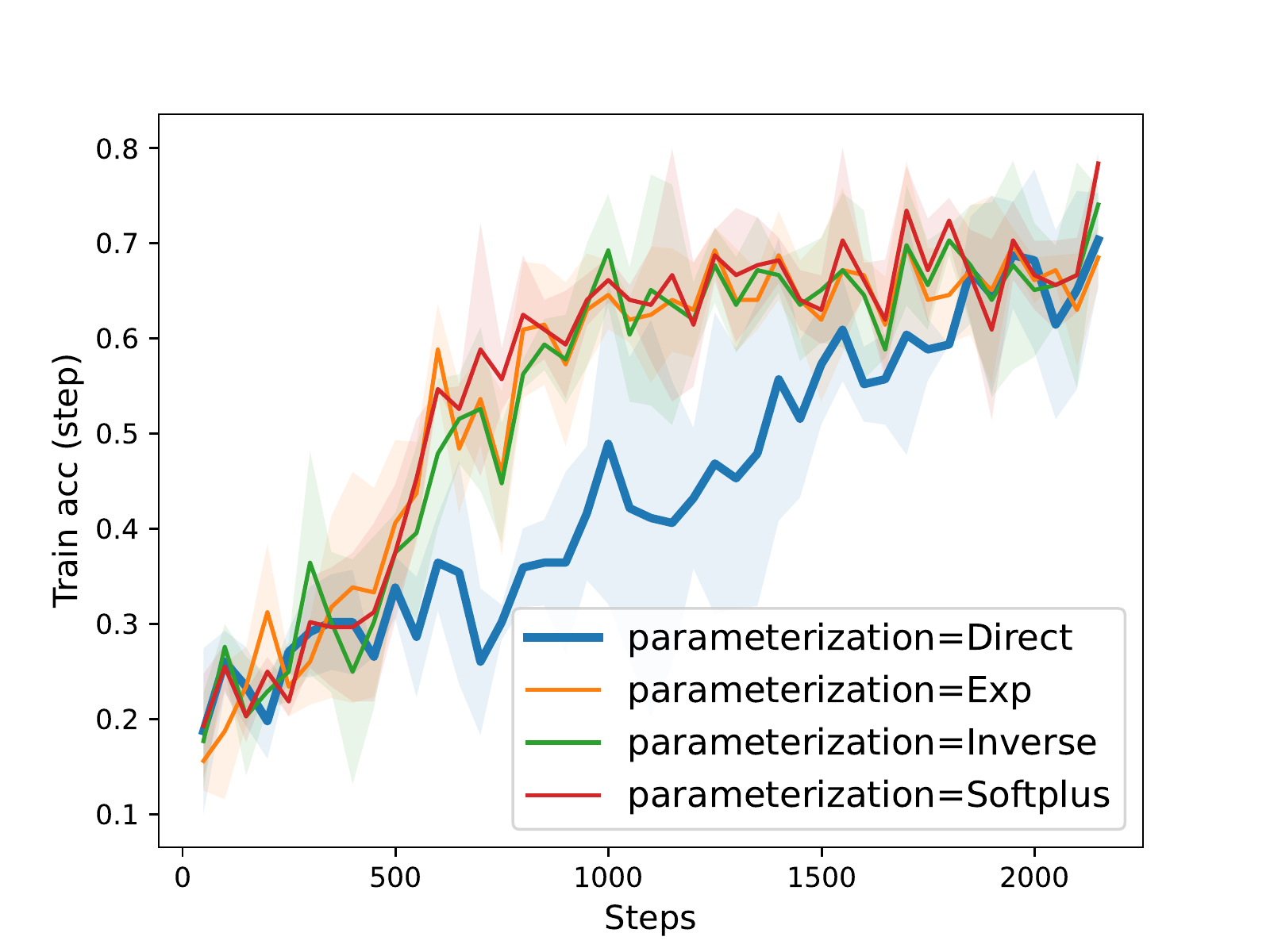}
    } 
    \subfigure[][validation loss]{
      \includegraphics[width=0.47\textwidth]{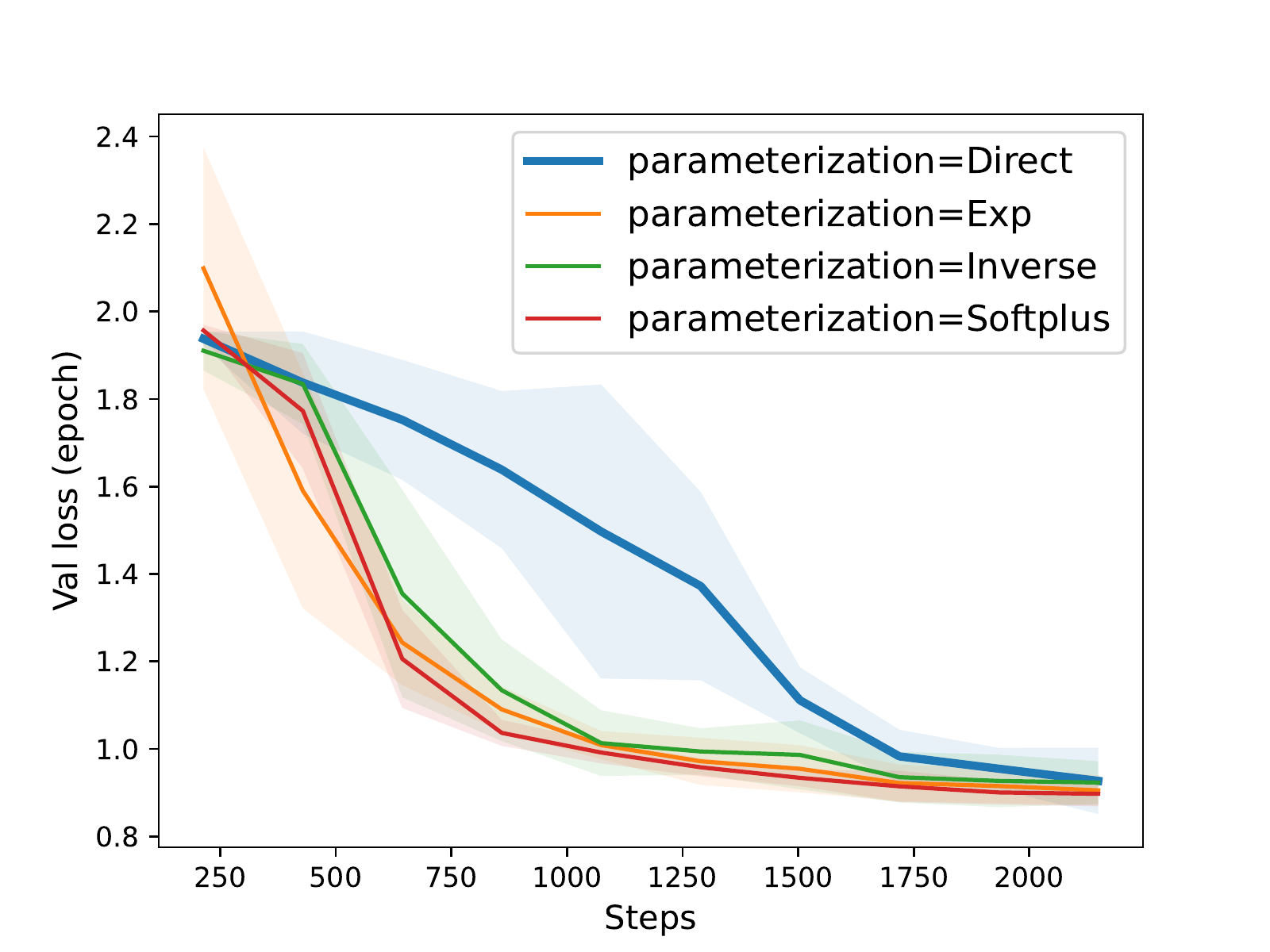}
    }
    \subfigure[][validation accuracy]{
      \includegraphics[width=0.47\textwidth]{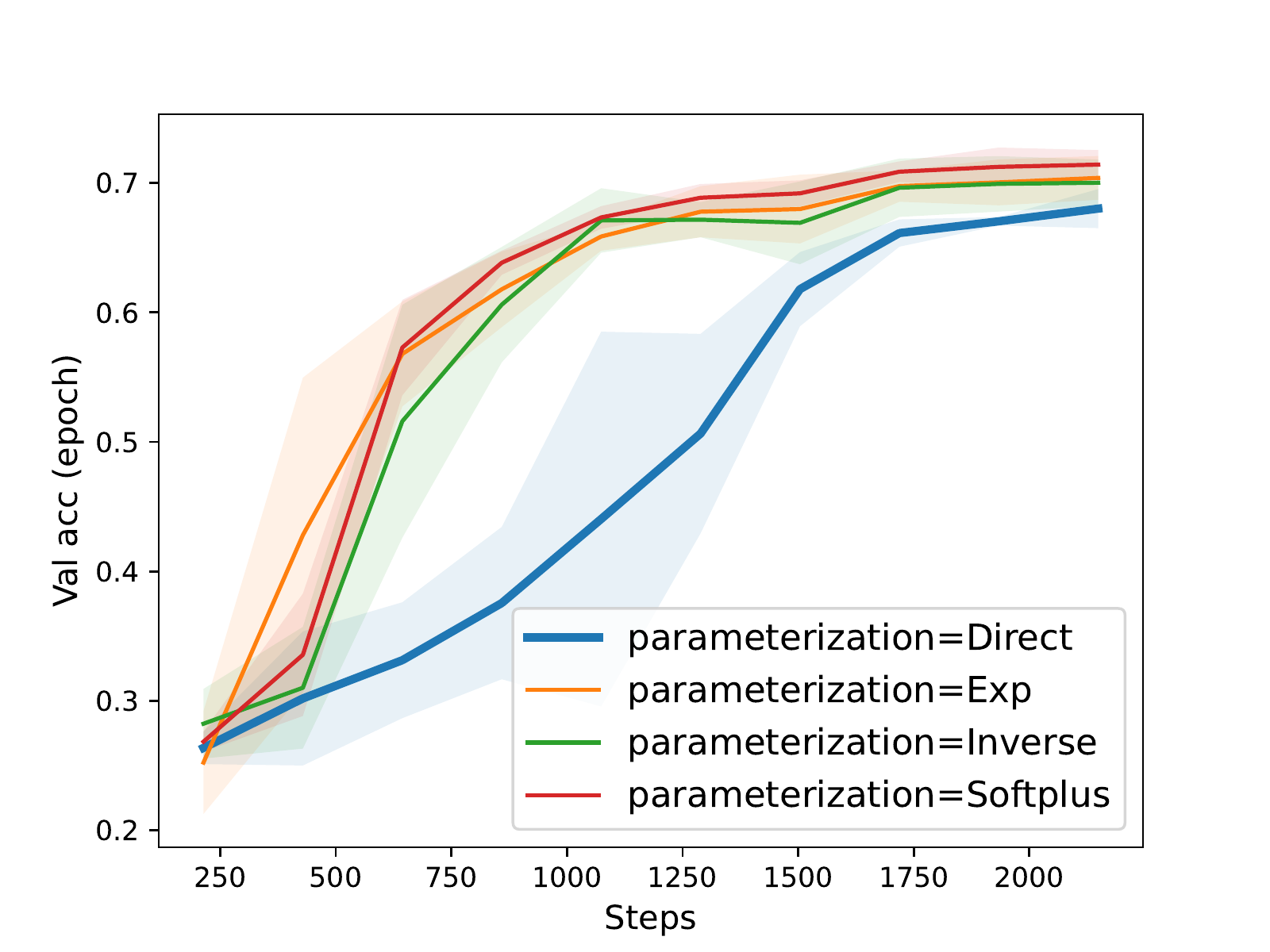}
    }
    \caption{
        Image classification on MNIST. 
        The performance of stable reparameterization is better than the direct parameterization of recurrent weight $g(M)=M$. 
        Under the same weight initialization, the stable reparameterization can speed up the optimization. 
        The shadow corresponds to the standard error evaluated over 3 repeats. 
    }
    \label{fig:mnist_stable_reparameterization}
\end{figure}

\begin{table}[ht!]
\centering
\begin{tabular}{c|cc}
                  & Test loss(std err)      & Test accuracy (std err) \\ \hline
softplus (stable) & 0.8903 (0.0004)         & 71.36(0.03) \\
exp (stable)      & 0.9028 (7.2E-05)        & 70.55(0.01) \\
inverse (stable)  & 0.9082 (0.0005)         & 70.77(0.05) \\
direct (unstable) & 0.9177 (0.0105)         & 68.47(0.06)  
\end{tabular}
\caption{Test loss and accuracy of different parameterization methods over MNIST}
\label{tab:test_loss_acc_table_mnist}
\end{table}

\modification{
\paragraph{Stable re-parameterization facilitate the optimization}
In this subsection, we demonstrate the optimization advantages provided by stable re-parameterization in enhancing long-term memory learning. 
We train the nonlinear RNNs on the MNIST dataset for 10 epochs. 
The models are initialized with identical weight configurations, underscoring the impact of recurrent weight parameterization. As illustrated in \cref{fig:mnist_stable_reparameterization}, the implementation of stable reparameterization significantly expedites the optimization process. 
It's important to note that this parameterization technique does not alter the inherent capacity of the model, resulting in comparable final performance outcomes (\cref{tab:test_loss_acc_table_mnist}).
}

\FloatBarrier

\modification{
\section{Memory function defined via impulse inputs}
\label{sec:memory_function_impulse_inputs}
Besides the Heaviside inputs, it's also feasible to define the memory function via the impulse inputs.
Let $x \in \mathbb{R}^d$ and consider the following impulse inputs input sequence
$\displaystyle \mathbf{i}^{x}_t
    =
    \begin{cases}
    x & t = 0, \\
    0 & t \neq 0.
    \end{cases}$
Hereafter we call the following memory function to be impulse-inputs-induced memory function. 
    \begin{equation}
        \mathcal{M}(\mathbf{H})(t) := \sup_{x \neq 0} \frac{1}{|x|_\infty} \left |H_t(\mathbf{i}^{x}) \right |.
    \end{equation}
In the linear functional case, notice that according to \cref{eq:Riesz_representation}
\begin{equation}
    \sup_{x \neq 0} \frac{\left |H_t(\mathbf{i}^{x}) \right |}{\|\mathbf{i}^{x}\|_{\infty}}
    = 
    \sup_{x \neq 0} \frac{|x^\top \rho(t)|}{|x|_{\infty}}
    = 
    |\rho(t)|_{1}.
\end{equation}
Based on the impulse-inputs-induced memory function, if we assume this memory function and its derivative are decaying, then we have a similar result as \cref{thm:main_result_hardtanh}. 
\paragraph{Sketch proof for \cref{thm:main_result_hardtanh} based on impulse-inputs-induced memory function}
Assume $\beta_0$ is the stability radius, apply the proof of \cref{thm:main_result_hardtanh}, we have for any $\beta < \beta_0$
\begin{equation}
    \lim_{t \to \infty} e^{\beta t} \left | \frac{d}{dt} y_t \right | \to 0. 
\end{equation}
Fix $C > 0$, there exists $t_0$ such that for any $t \geq t_0$, $|\frac{d}{dt} y_t| \leq C e^{-\beta t}$.
Since the target has decaying memory $y_{\infty} := \lim_{t \to \infty} y_t = 0$. 
Then we have
\begin{align}
    \lim_{t \to \infty} e^{\beta t}|y_t - y_{\infty}| & = \lim_{t \to \infty} |\int_{t}^{\infty} \frac{d}{ds} y_s ds|  \\ 
    & \leq \lim_{t \to \infty} |\int_{t}^{\infty} C e^{-\beta (s-t)} ds| \\
    & \leq C < \infty.
\end{align}
Therefore the memory function based on the impulse inputs is also decaying exponentially. 
\begin{equation}
    \lim_{t \to \infty} e^{\beta t}|y_t| \leq C.
\end{equation}
}

\modification{
\section{Memory function defined via reversed inputs}
\label{sec:memory_function_reversed_inputs}
Let $x \in \mathbb{R}^d$ and consider the following reversed inputs input sequence
$\displaystyle \mathbf{r}^{x}_t
    =
    \begin{cases}
    x & t < 0, \\
    0 & t \geq 0.
    \end{cases}$
Hereafter we call the following memory function to be impulse-inputs-induced memory function. 
    \begin{equation}
        \mathcal{M}(\mathbf{H})(t) := \sup_{x \neq 0} \frac{1}{|x|_\infty} \left |\frac{d}{dt} H_t(\mathbf{r}^{x}) \right |.
    \end{equation}
In the linear case, notice that according to \cref{eq:Riesz_representation}
\begin{equation}
    \sup_{x \neq 0} \frac{\left |\frac{d}{dt} H_t(\mathbf{r}^{x}) \right |}{\|\mathbf{r}^{x}\|_{\infty}}
    = 
    \sup_{x \neq 0} \frac{|x^\top \rho(t)|}{|x|_{\infty}}
    = 
    |\rho(t)|_{1}.
\end{equation}
Based on the reversed-inputs-induced memory function, if we assume this memory function and its derivative are decaying, then we have \cref{thm:main_result_hardtanh}. 
\paragraph{Sketch proof for \cref{thm:main_result_hardtanh} based on reversed-inputs-induced memory function}
Assume $\beta_0$ is the stability radius, apply the proof of \cref{thm:main_result_hardtanh}, we have for any $\beta < \beta_0$
\begin{equation}
    \lim_{t \to \infty} e^{\beta t} \left | \frac{d}{dt} y_t \right | \to 0. 
\end{equation}
Therefore the memory function based on the reversed inputs is also decaying exponentially. 
\hfill \break
The memory function defined over reversed inputs can be \textbf{harder to evaluate} as the outputs of different input sequences $(x, 0, \dots, ), (x, x, 0, \dots, 0), (x, x, x, 0, \dots, 0), ...$ need to be evaluated. 
}
\FloatBarrier

\modification{
\section{Qualitative equivalence of different memory function definitions}
\label{sec:numerical_equivalence}
Apart from the previous sections that study the equivalence of memory functions over linear functionals. 
In \cref{fig:memory_functions}, we show that the memory functions evaluated over different test inputs are qualitatively equivalent. 
Since the reversed inputs are harder to evaluate numerically, we only compare the memory functions evaluated over Heaviside inputs and impulse inputs. 
}

\begin{figure}[t!]
    \centering
    \subfigure[][RNN - Heaviside memory]{
\includegraphics[width=0.395\textwidth]{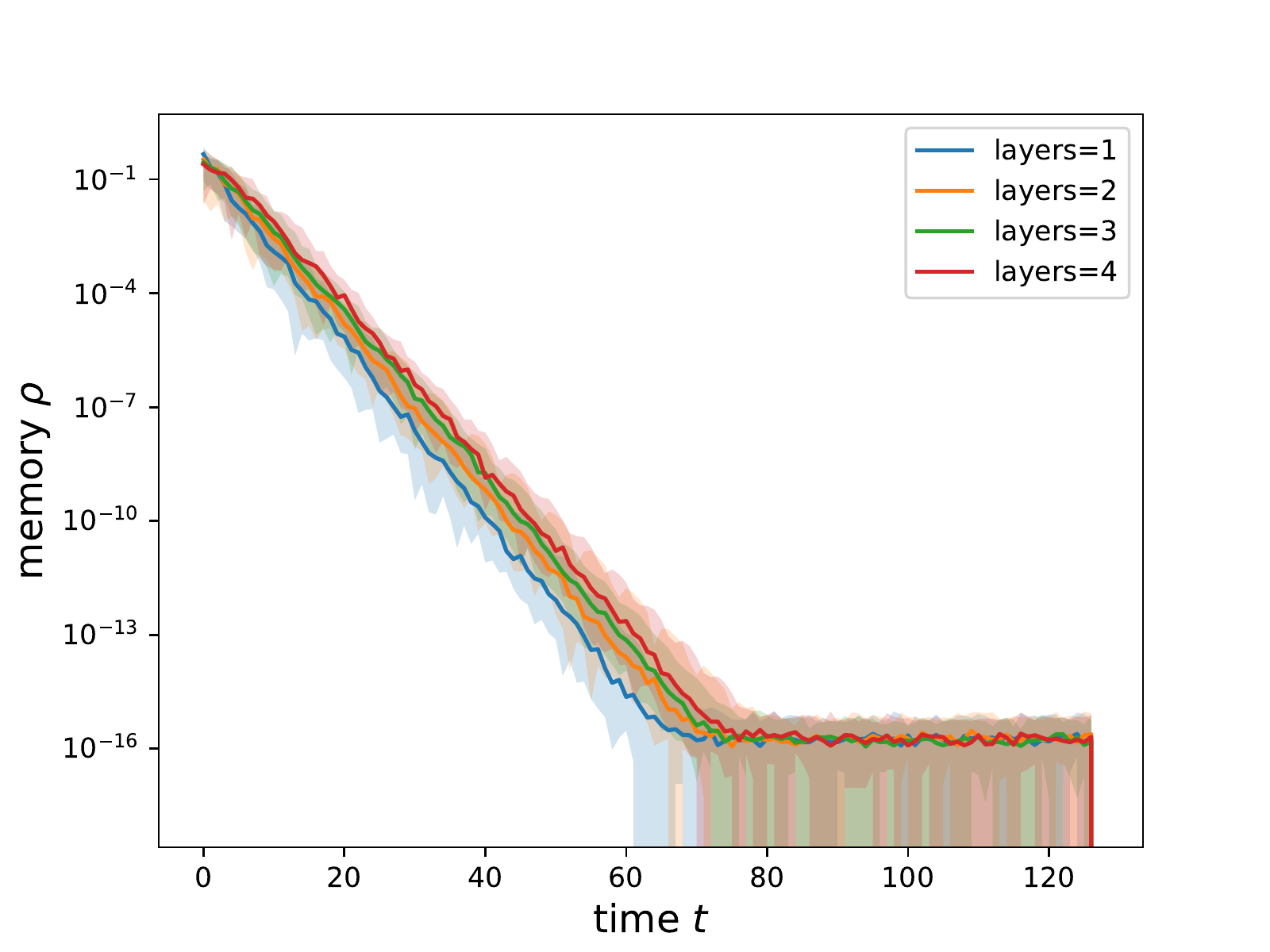}
      \label{fig:rnn-h}
    }
    \subfigure[][RNN - Impulse memory]{
\includegraphics[width=0.395\textwidth]{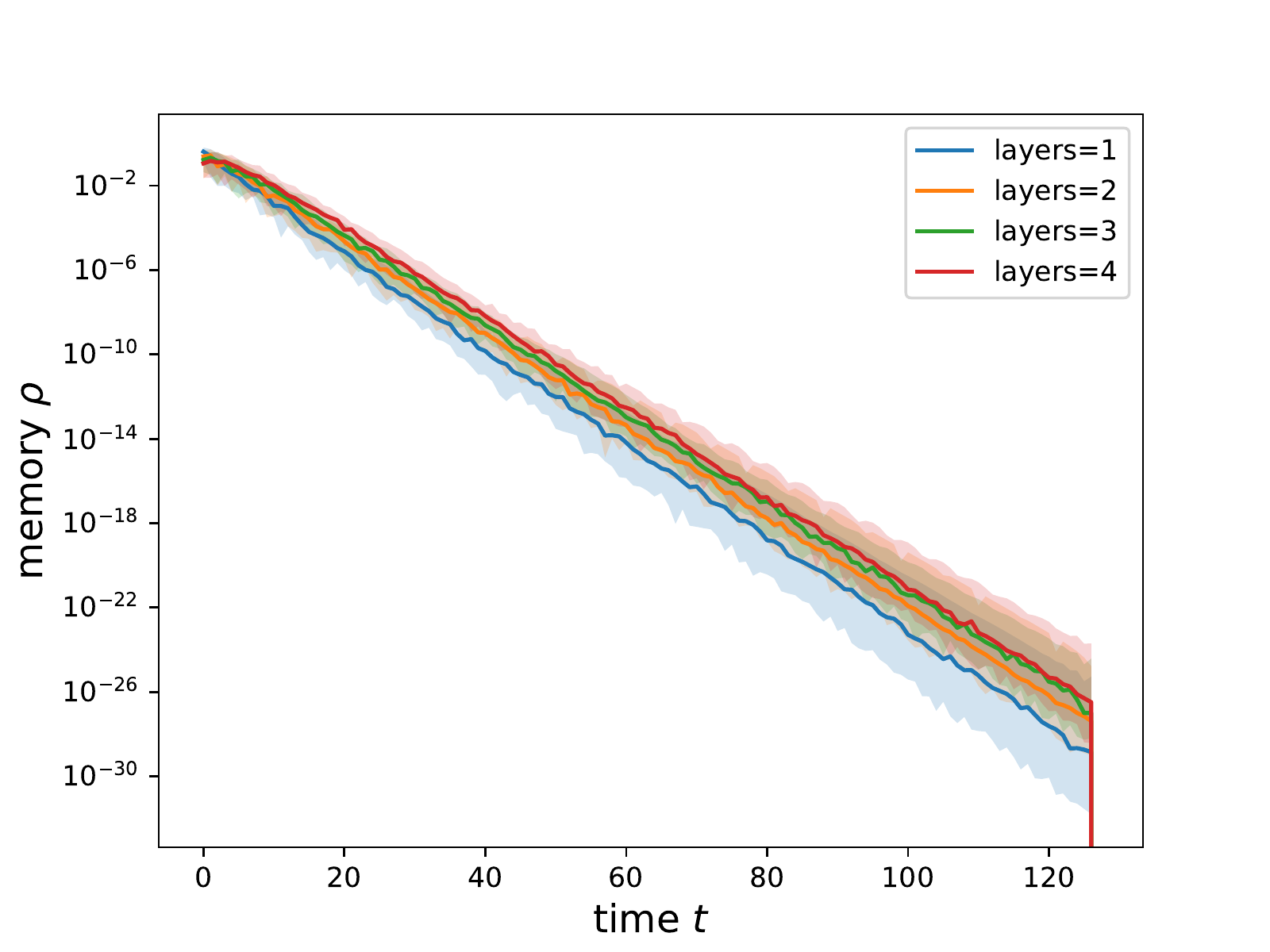}
      \label{fig:rnn-i}
    }
    
    \subfigure[][GRU - Heaviside memory]{
\includegraphics[width=0.395\textwidth]{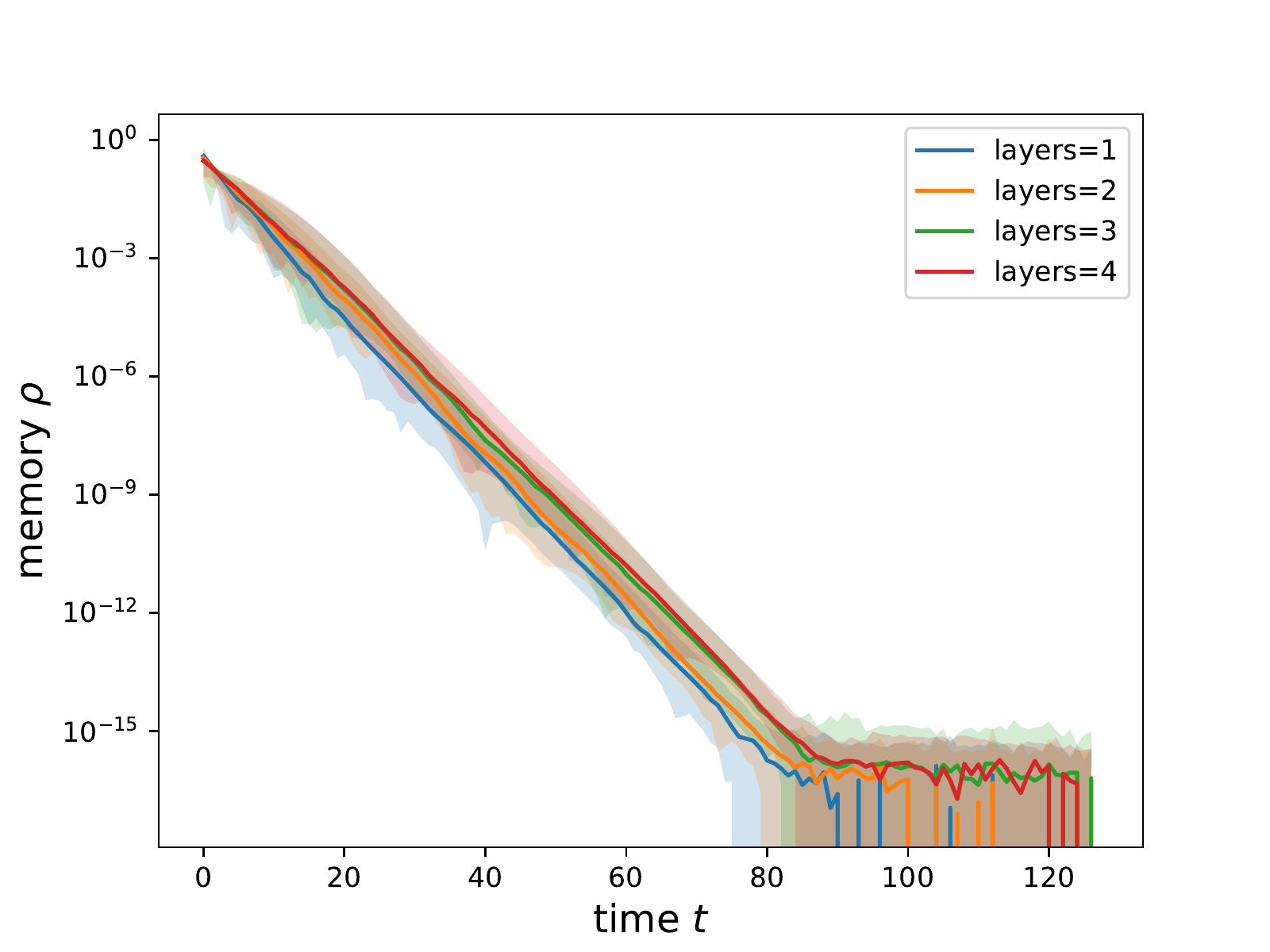}
      \label{fig:gru-h}
    }
    \subfigure[][GRU - Impulse memory]{
\includegraphics[width=0.395\textwidth]{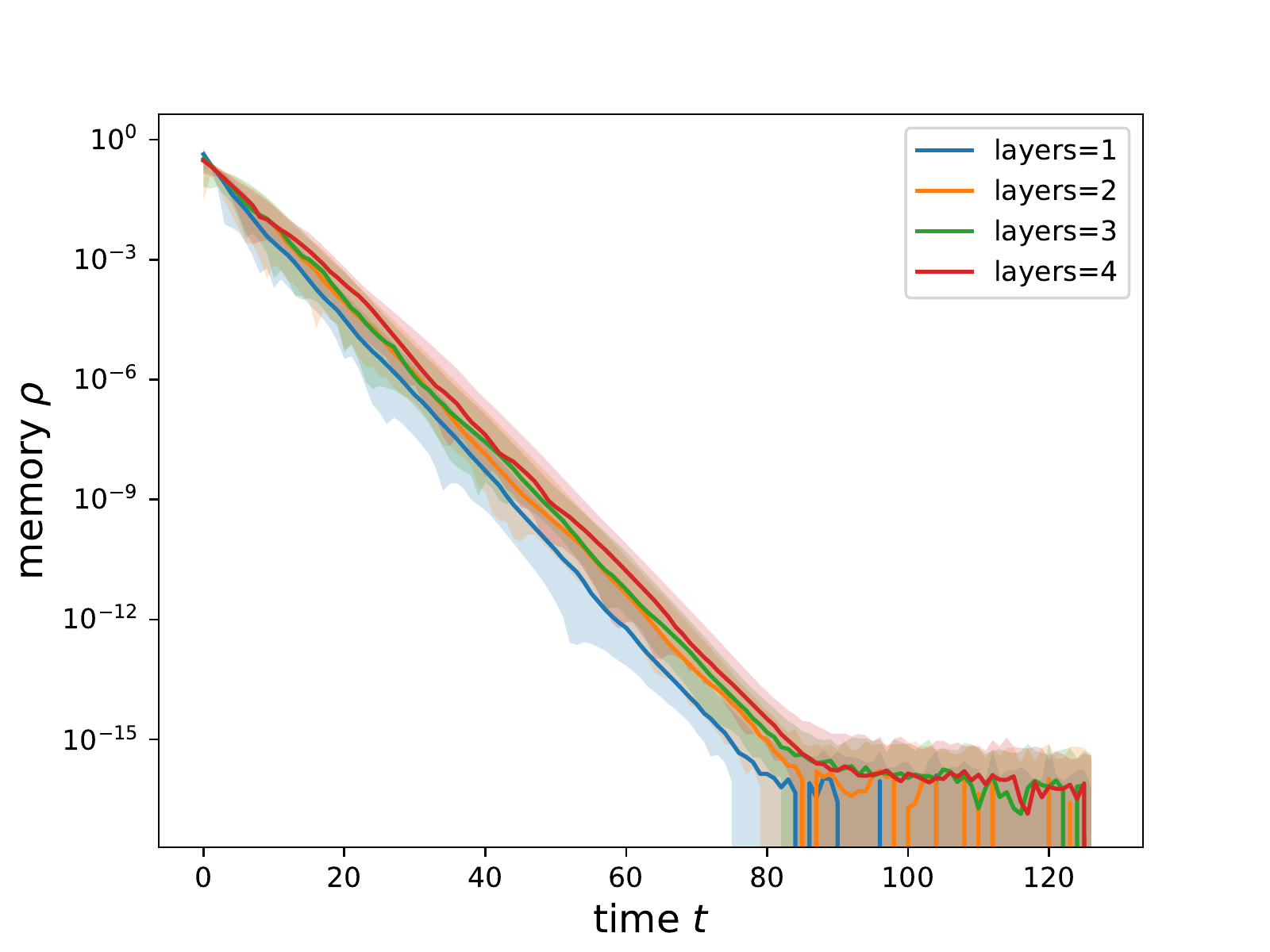}
      \label{fig:gru-i}
    }

    \subfigure[][LSTM - Heaviside memory]{
\includegraphics[width=0.395\textwidth]{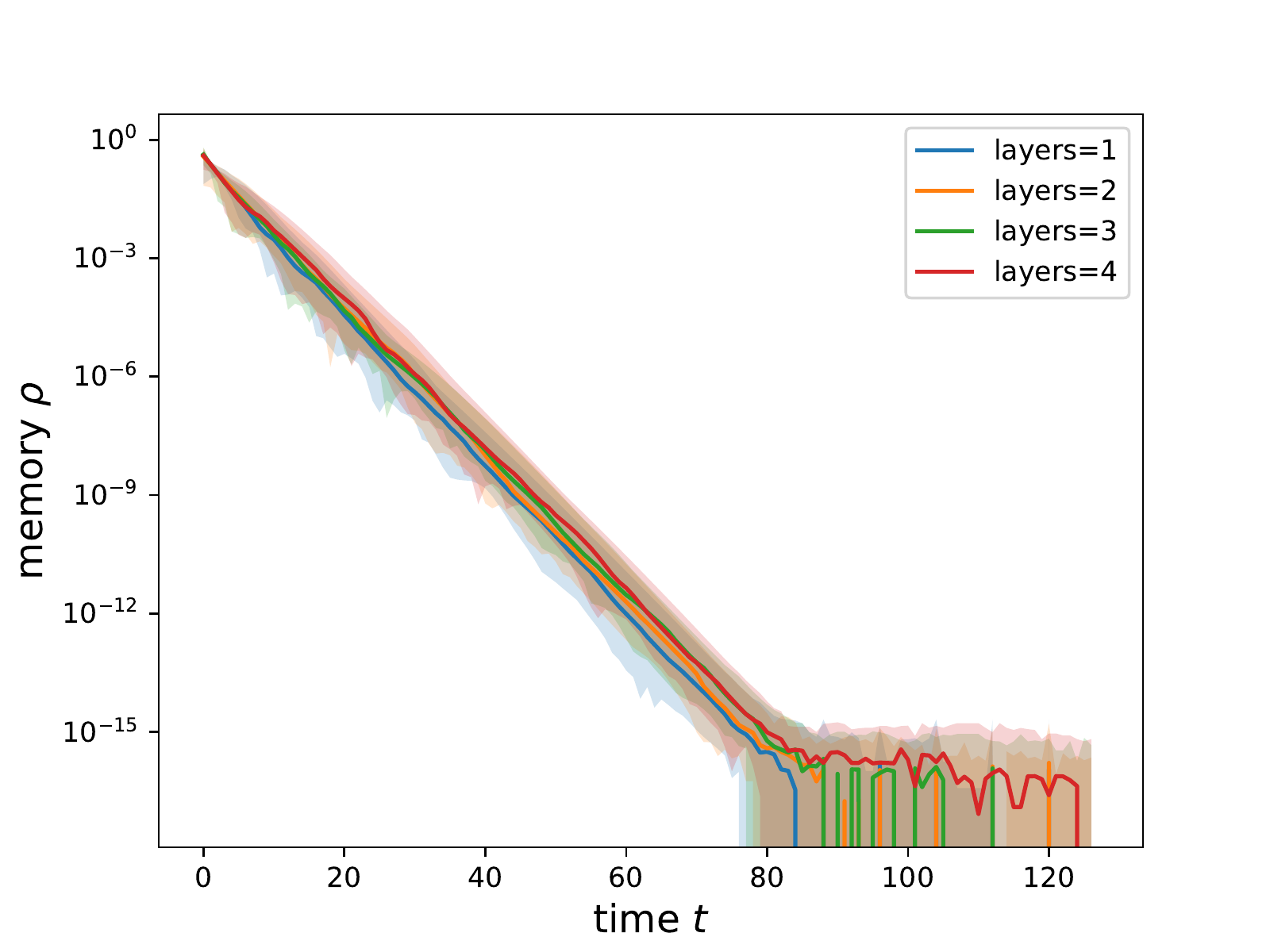}
      \label{fig:lstm-h}
    }
    \subfigure[][LSTM - Impulse memory]{
\includegraphics[width=0.395\textwidth]{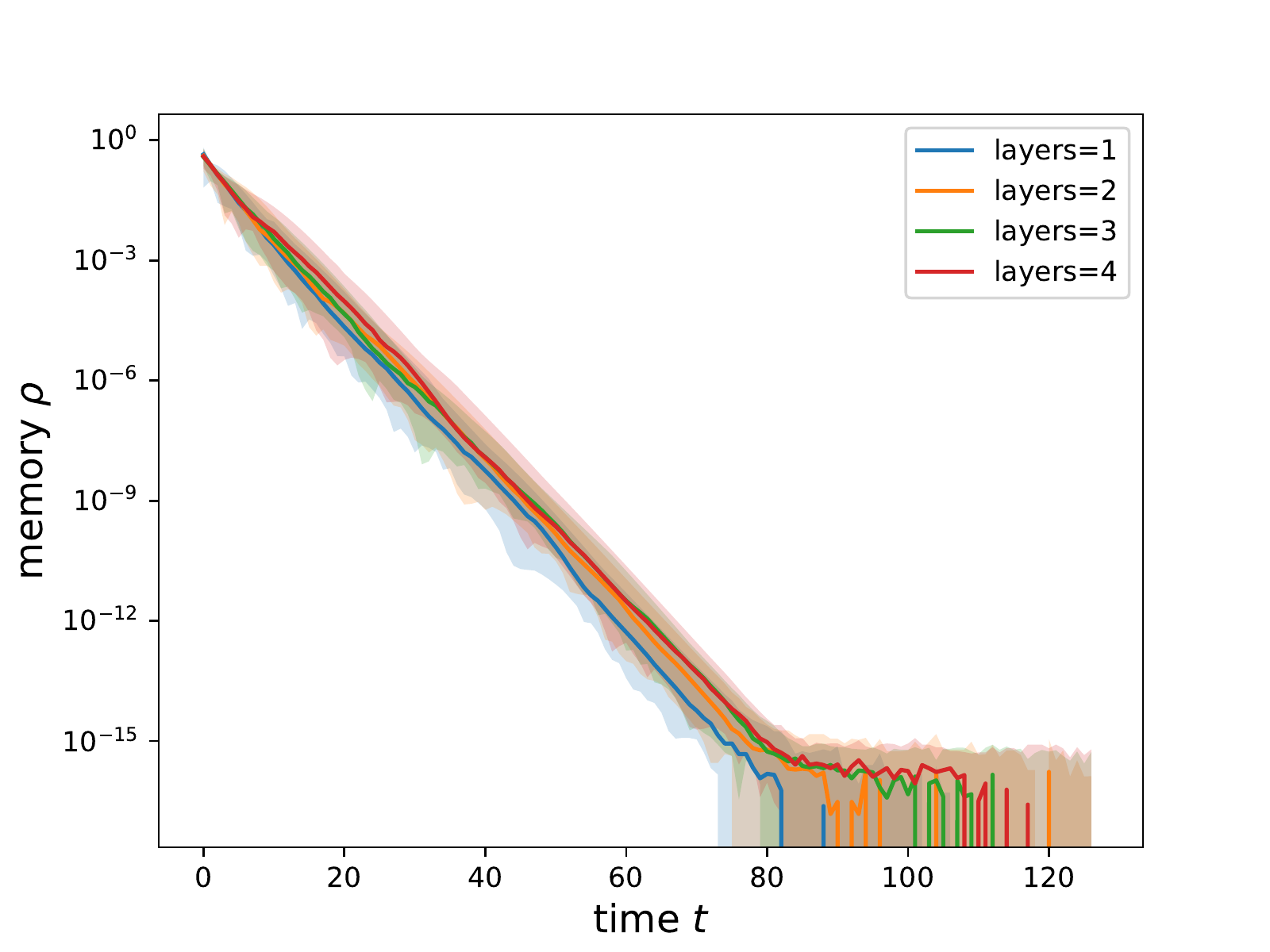}
      \label{fig:lstm-i}
    }

    \subfigure[][S4 - Heaviside memory]{
\includegraphics[width=0.395\textwidth]{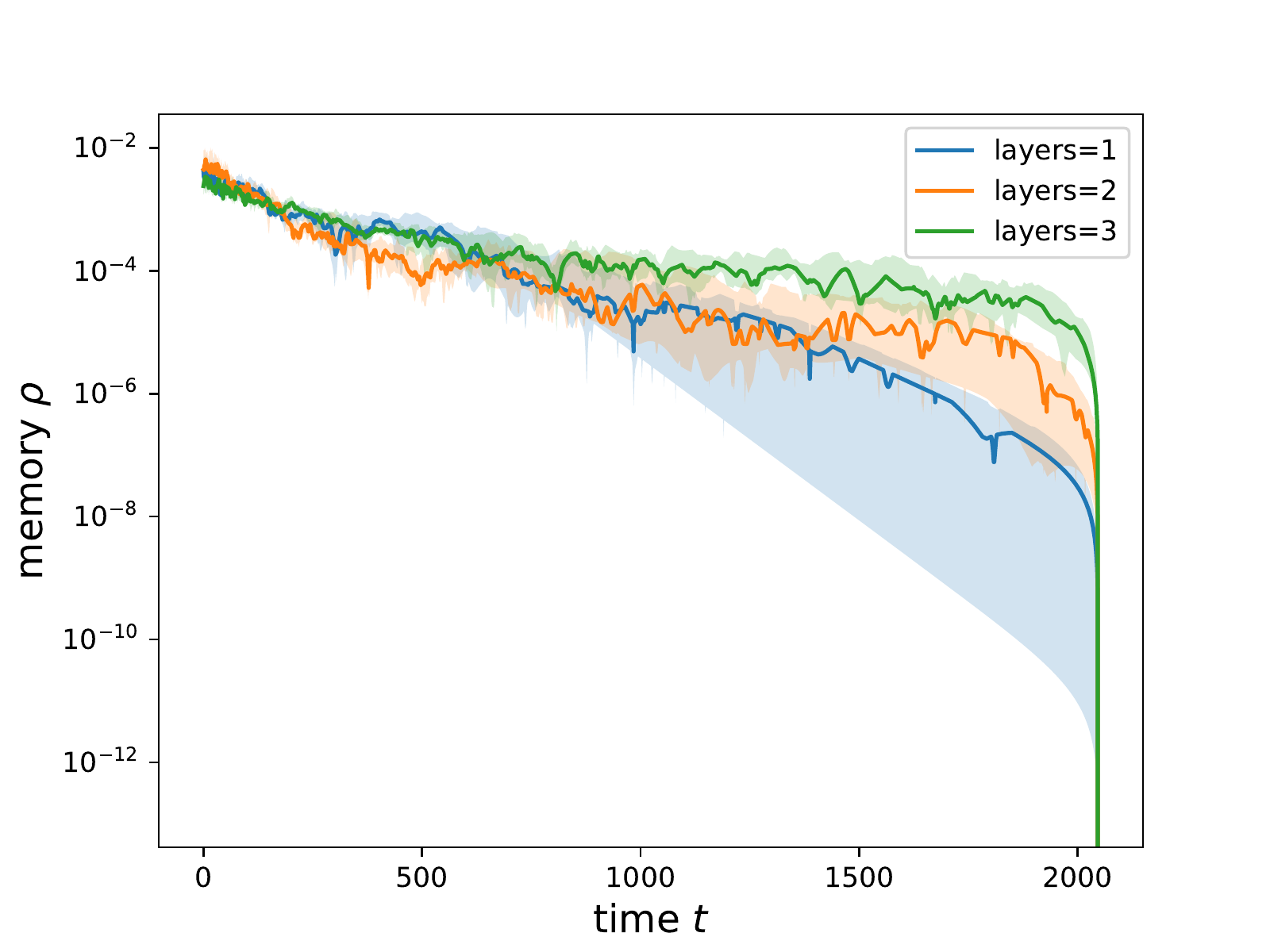}
      \label{fig:s4-h}
    }
    \subfigure[][S4 - Impulse memory]{
\includegraphics[width=0.395\textwidth]{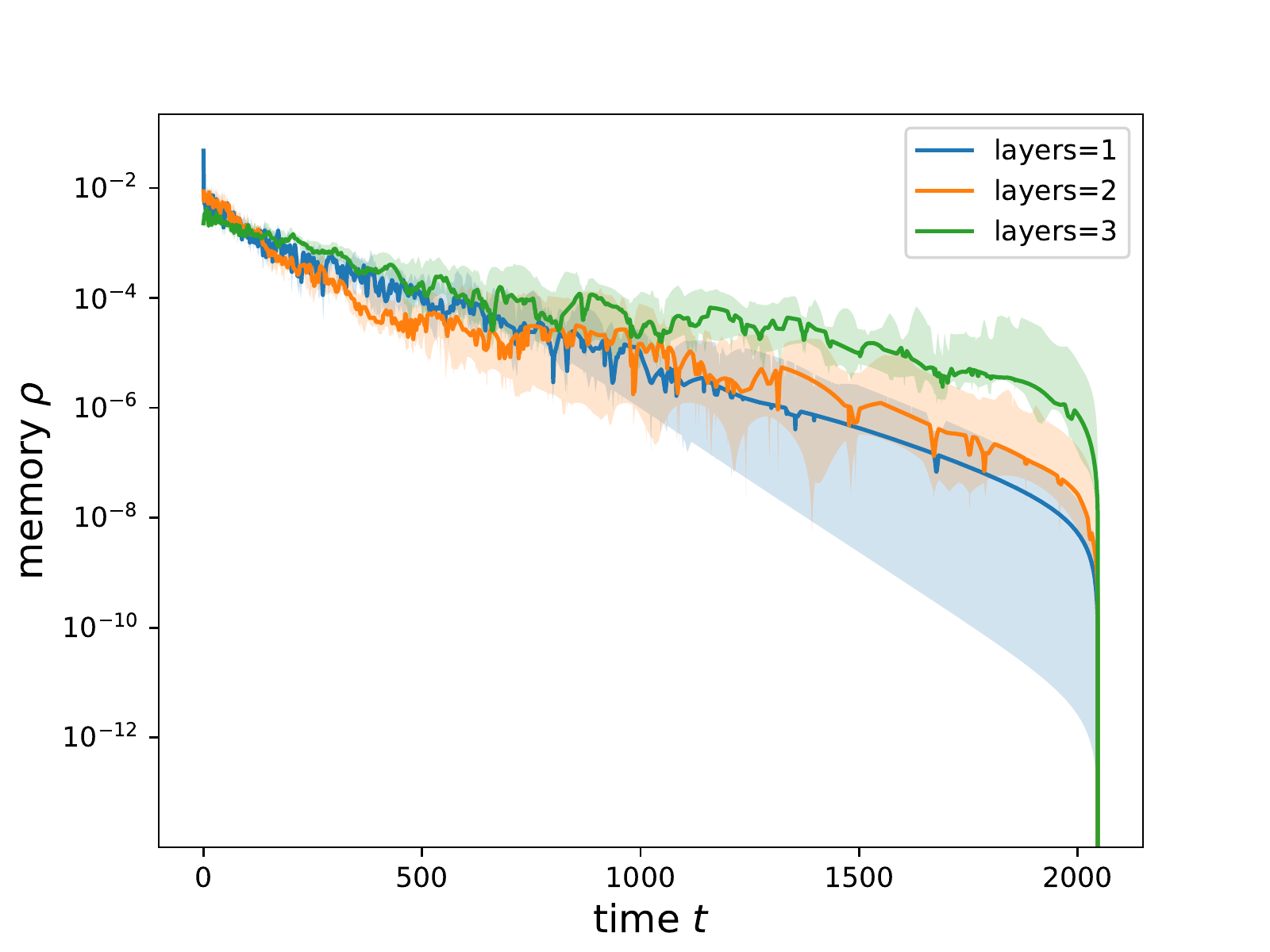}
      \label{fig:s4-i}
    }
    \caption{Memory functions of randomly-initialized recurrent networks evaluated over Heaviside inputs and impulse inputs.
    The memory functions evaluated over different inputs are \textbf{numerically equivalent}. 
    The shadow indicates the error bar for 20 repeats.
    }
    \label{fig:memory_functions}
\end{figure}

\modification{
Besides the qualitative comparison that different memory functions are ``equivalent'', here we show they all characterize the decay speed of the model's memory of a impulse change at time $t=0$. 
\paragraph{Relation between memory decaying speed and target dependence on early inputs}
An intuitive explanation is that current memory functions characterize how fast the memory about the sudden input change at $t=0$ decays as $t$ grows. 
Inputs such as Heaviside inputs, reversed inputs, and impulse inputs all have this feature. 
Therefore the memory function evaluated over these inputs can be interpreted as \textbf{a measure for ``dependence of the target function on early inputs''.}
$$\rho_T := \sup_{x \neq 0} \frac{|y_T(\mathbf{i}^x)|}{|x|}  = \sup_{x \neq 0} \frac{H_T((x, 0, ...))}{|x|}. \quad \textrm{Impulse-inputs-based memory}$$
We show the ``equivalence'' of different memory functions in linear functional sense in Appendix F and G. 
Since memory functions over Heaviside inputs are easier to evaluate numerically (as shown in Appendix H), we adopt the corresponding derivative form (as shown in Equation 5) to define the memory function. 
}

\modification{
Here we explain the reason that the memory function is evaluated over certain subclass of inputs rather than all continuous inputs:
\paragraph{Why special inputs}
A natural question is why we need a particular set of inputs rather than use any continuous input. 
The identity functional $H_t(\mathbf{x}) = x_t$ takes the input sequence $x_t = \frac{1}{t}, t > 0$ to a polynomial decaying output $y_t = \frac{1}{t}$. However, this does not mean the functional has a polynomial-decaying memory. In fact, this identity functional has no memory of past inputs in the sense that changing $x_0$ does not influence the output of $y_t$ with $t > 0$. 
The above example indicates the selection of inputs to extract the memory pattern needs to be chosen properly. 
This is also one of the reasons we select the Heaviside inputs as the test inputs to construct the derivative form. 
}

\modification{
\section{Additional numerical results for the filtering experiment}
\label{sec:additional_numericla_results_for_the_filtering_experiment}
In this section, we first show the existence of nonlinewar RNNs with non-decaying memory. 
Therefore in \cref{fig:u+s_to_e} we need to filter the targets.
Then the example of learning transformer with nonlinear RNN is included. 
This example shows that transformer in general does not have a decaying memory. 
Therefore the learning of transformer can highly difficult that we cannot achieve the approximation with 1000 epochs training. 
\subsection{Existence of nonlinear RNNs without decaying memory}
In \cref{fig:u+s_to_e}, we show that the filtering of nonlinear RNNs with stable approximation presents the target with only exponentially decaying memory. 
Here we present in \cref{fig:Examples_of_nonlinear_RNNs_have_non_decaying_memory} that nonlinear RNNs with randomly initialized weights have diverse memory patterns.
Moreover, consider the following examples.
\begin{figure}[ht!]
{
    \centering
    \subfigure[][Examples of nonlinear RNNs have non-decaying memory]
    {
    \includegraphics[width=0.7\textwidth]{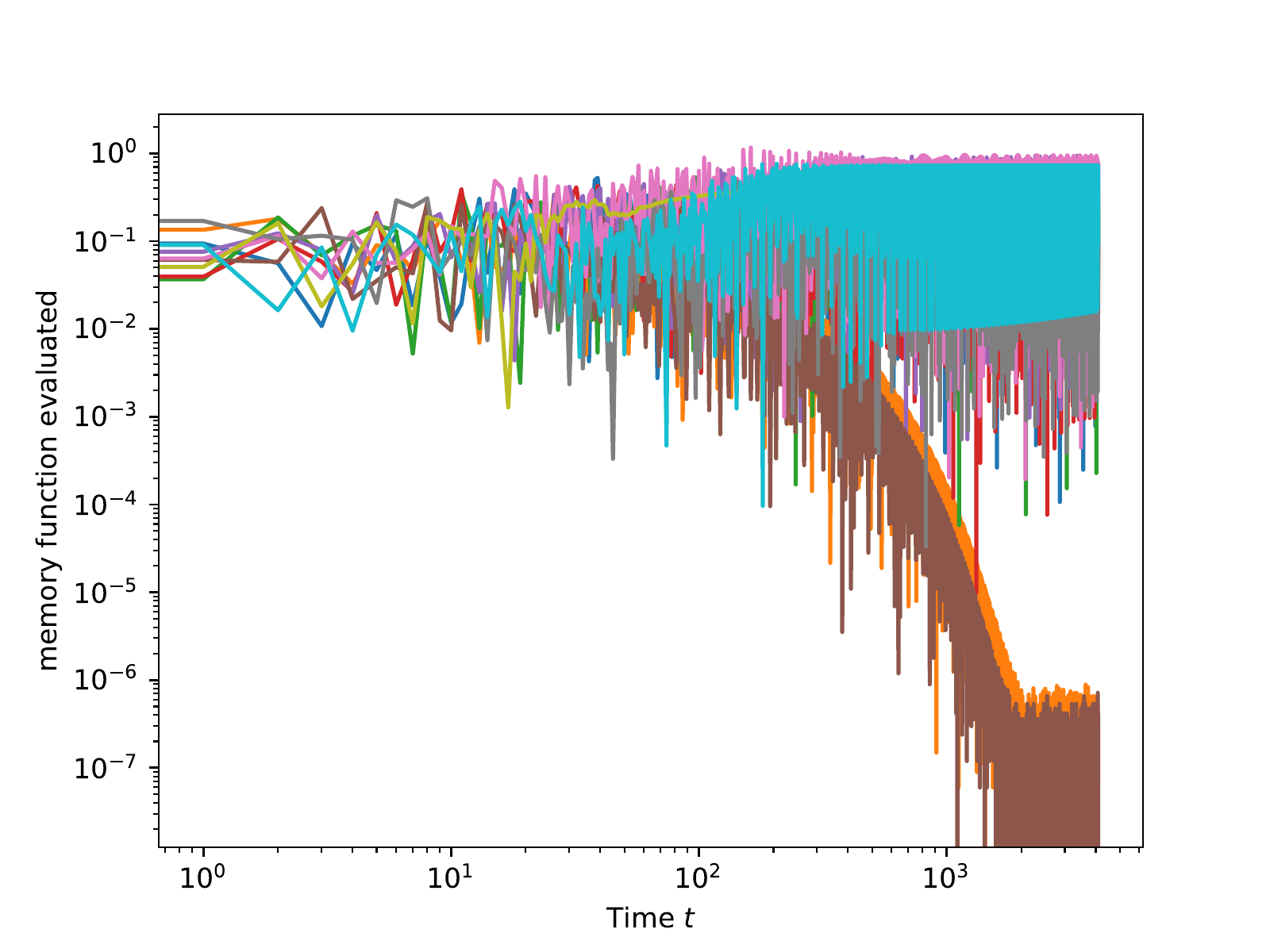}
    }
    \subfigure[][Example of 2D RNN with periodic memory $W = \begin{bmatrix} 1 & 1 \\ -4 & -3\end{bmatrix}$.]
    {
    \includegraphics[width=0.6\textwidth]{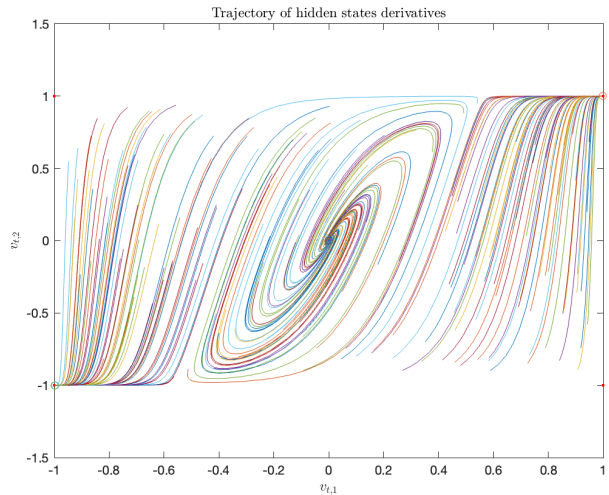}
    }
    \caption{In the left panel, we show the existence of tanh RNNs with non-decaying memory. 
    In the right panel, we use a specific 2D tanh RNN to show the existence of a periodic trajectory. 
    In the figure, $[0, 0], [1, 1], [-1, -1]$ are the attractor for the trajectories of $(v_1, v_2) = (\frac{dh_{t,1}}{dt}, \frac{dh_{t,2}}{dt})$. 
    Therefore the boundary of the different converging set will be a periodic trajectory over the 2D plane. 
    \textbf{The periodic trajectory of hidden state derivative indicates a periodic memory function}. 
    }
    \label{fig:Examples_of_nonlinear_RNNs_have_non_decaying_memory}
    }
\end{figure}
}
\FloatBarrier

\modification{
\subsection{Filtering towards transformer}
Apart from the nonlinear RNNs targets, we also construct targets with polynomial decaying memory. 
It can be seen in \cref{fig:filtering_with_transformer_target} that the transformers with non-decaying memory function cannot be stably approximated by tanh RNNs. 
\begin{figure}[ht!]
    {
        \centering
        \subfigure[][Memory function of the target randomly initialized transformer]
        {
        \includegraphics[width=0.7\textwidth]{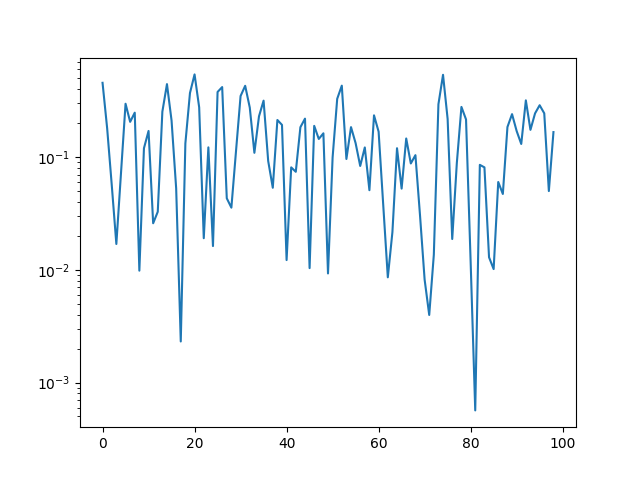}
        }
        \subfigure[][Inexistence of stable approximation]
        {
        \includegraphics[width=0.7\textwidth]{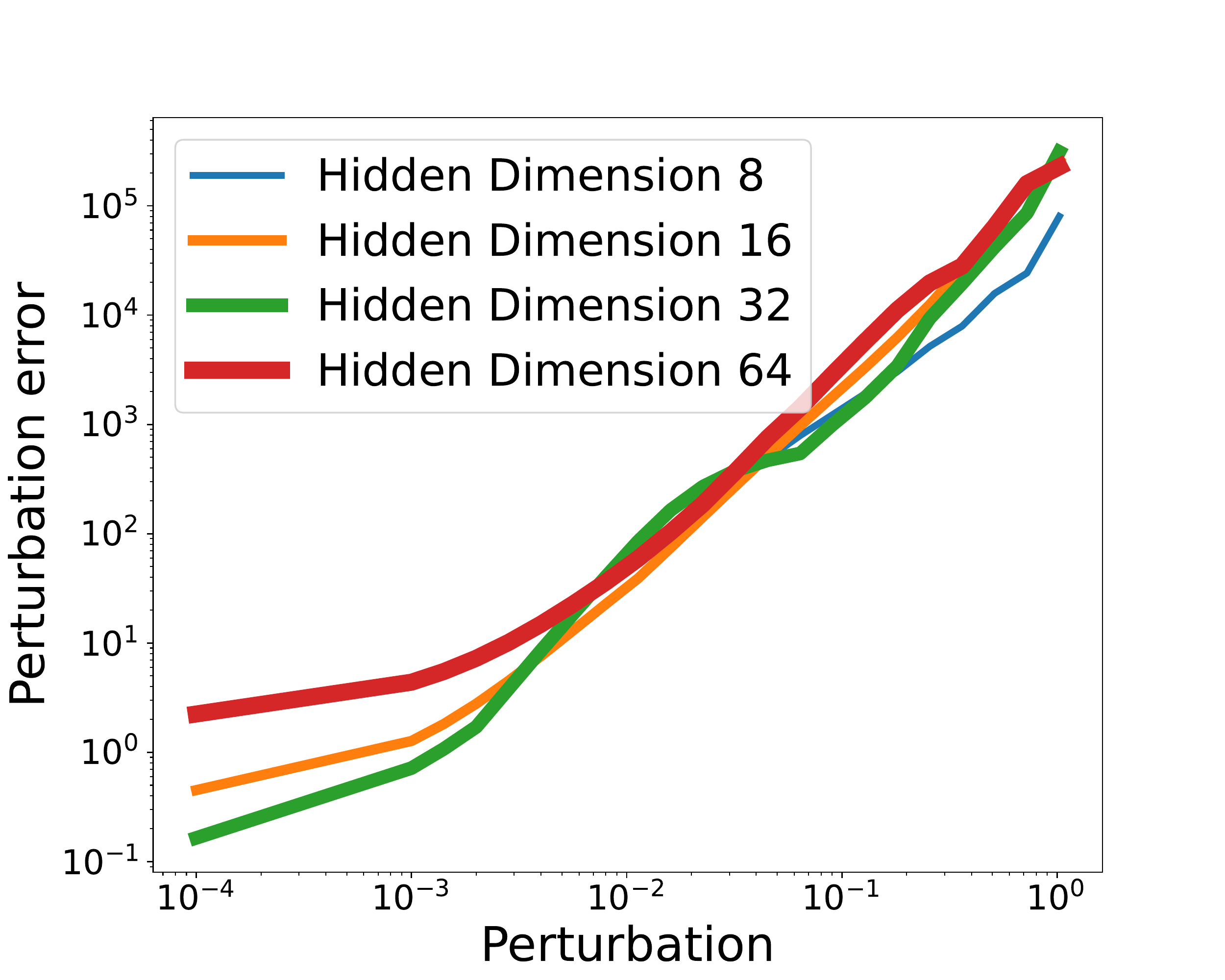}
        }
        \caption{If the target functional does not have an exponentially decaying memory, then the targets cannot be stably approximated. The largest RNN with hidden dimension $m=64$ failed to achieve an approximation error smaller than the case for $m=32$ and $m=16$. 
        }
        \label{fig:filtering_with_transformer_target}
     }
    \end{figure}
}
\FloatBarrier

\modification{
\section{Discussion on extending the \texorpdfstring{\cref{thm:main_result_hardtanh}} tto GRU/LSTM}
\label{sec:extension_to_GRU_LSTM}
We will analyze GRU as LSTM comes with more complicated structure. 
The classical GRU is of the following structure:
\begin{align}
z_t & = \sigma(W_z x_t + U_z h_{t-1} + b_z), \\
r_t & = \sigma(W_r x_t + U_r h_{t-1} + b_r), \\
\hat{h}_t & = \phi(W_h x_t + U_h(r_t \odot h_{t-1}) + b_h), \\
h_t & = (1-z_t) \odot h_{t-1} + z_t \odot \hat{h}_{t}.
\end{align}
Here $\sigma$ is sigmoid activation and $\phi$ is the tanh activation. 
The hidden state equations can be reformulated into
\begin{align}
h_t & = h_{t-1} + z_t \odot (\hat{h}_{t} - h_{t-1}).
\end{align}
The corresponding continuous-time version can be given by
\begin{align}
\frac{dh_t}{dt} 
= & z_t \odot (\hat{h}_{t} - h_{t}) \\
= & \sigma(W_z x_t + U_z h_t + b_z) \odot (\hat{h}_{t} - h_{t}) \\
= & \sigma(W_z x_t + U_z h_t + b_z) \odot (\phi(W_h x_t + U_h (r_t \odot h_t) + b_h) - h_{t}) \\
= & \sigma(W_z x_t + U_z h_t + b_z) \\
& \odot (\phi(W_h x_t + U_h (\sigma(W_r x_t + U_r h_t + b_r) \odot h_t) + b_h) - h_{t}).
\end{align}
Let $x \in \mathbb{R}^d$ and we consider the Heaviside inputs $\displaystyle \mathbf{u}^{x}_t
    = x \cdot \mathbf{1}_{[0,\infty)}(t)
    =
    \begin{cases}
    x & t \geq 0, \\
    0 & t < 0.
    \end{cases}$
After change of variable over the bias term $b_z := W_z x + b_z, b_r := W_r x + b_r, b_h := W_z x + b_h$, the corresponding hidden dynamics can be simplified into
\begin{align}
\frac{dh_t}{dt} 
= & \sigma(U_z h_t + b_z) \\
& \odot (\phi(U_h (\sigma(U_r h_t + b_r) \odot h_t) + b_h) - h_{t}), 
\end{align}
Notice that $h^* = 0$ is an equilibrium point if $b_h = 0$: 
\begin{equation} 
\sigma(b_z) \odot (\phi(U_h (\sigma(b_r) \odot 0) + 0) - 0) = 0.
\end{equation}
Similar linearization can be utilized to analyse the corresponding first-order system: 
\begin{equation} 
\frac{dh_t}{dt} = \textrm{Diag}(\sigma(b_z)) (U_h \textrm{Diag}(\sigma(b_r)) - I) h_t. 
\end{equation}
Our stable approximation argument can be used to show $\textrm{Diag}(\sigma(b_z)) (U_h \textrm{Diag}(\sigma(b_r)) - I)$ is Hurwitz. 
However, $h^*=0$ might not be the only equilibrium point. 
Furthermore, when $b_h \neq 0$, it's harder to find all equilibrium points therefore further analysis are needed. 
}

\medskip

\end{document}